\setlist[enumerate]{leftmargin=.5in}
\setlist[itemize]{leftmargin=.5in}
\crefname{setup}{Setup}{Setups}
\title{Denoising guarantees for optimized sampling schemes in compressed sensing\thanks{Submitted for review to the SIAM Journal on Mathematics of Data Science (SIMODS).}}
\author{Yaniv Plan\thanks{\noindent Department of Mathematics, University of British Columbia, Vancouver, BC, Canada
(\email{matthewscott@math.ubc.ca}, \email{xsheng@math.ubc.ca}, \email{yaniv@math.ubc.ca}
\email{oyilmaz@math.ubc.ca})
\newline 
{\bf Author roles:} Authors listed in alphabetic order.  MS was primarily responsible for developing the theory, the sparsity-based numerics, and writing the paper. XS was primarily responsible for training the generative model and creating and presenting the related numerics.
}
\and Matthew S. Scott\footnotemark[1]
\and  Xia Sheng\footnotemark[1]
\and Ozgur Yilmaz\footnotemark[1] \thanks{CNRS -- PIMS International Research Laboratory}}
\newcommand{\allone}{\mathbf{1}}
\newcommand{\trunc}{\mathbb{T}}
\newcommand{\sphere}[1]{\mathbb{S}^{#1-1}}
\newcommand{\field}{\mathbb{R}}
\newcommand{\measfield}{\mathbb{K}}
\newcommand{\reals}{\mathbb{R}}
\newcommand{\complex}{\mathbb{C}}
\newcommand{\iid}{\overset{\text{iid}}{\sim}}
\newcommand{\minimize}{\mathop{\mathrm{minimize}}}
\DeclareMathOperator{\Diag}{Diag}
\DeclareMathOperator{\range}{\mathrm{range}}
\DeclareMathOperator{\Span}{\mathrm{span}}
\DeclareMathOperator{\supp}{\mathrm{supp}}
\DeclareMathOperator{\proj}{\Pi}
\begin{document}
\maketitle
\begin{abstract}
Compressed sensing with subsampled unitary matrices benefits from 
\emph{optimized} sampling schemes, which feature improved theoretical guarantees and empirical performance
relative to uniform subsampling. We
provide, in a first of its kind in compressed sensing, theoretical guarantees
showing that the error caused by the measurement noise vanishes with an
increasing number of measurements for optimized sampling schemes, assuming that the noise is Gaussian. We
moreover provide similar guarantees for measurements sampled with-replacement
with arbitrary probability weights. All our results hold on prior sets
contained in a union of low-dimensional subspaces. Finally, we demonstrate that this
denoising behavior appears in empirical experiments with a rate that closely
matches our theoretical guarantees when the prior set is the range of a generative ReLU neural network and when it is the set of sparse vectors.
\end{abstract}
\begin{keywords}
compressed sensing, local coherence, denoising, optimal sampling, variable-density sampling, subsampled unitary matrices, generative priors, sparse signals
\end{keywords}

% REQUIRED
\begin{AMS}
94A20, 94A12, 68T07
\end{AMS}

\section{Introduction}
\label{loc:body.introduction}
The field of compressed sensing considers signals
$\boldsymbol{x}_0 \in \field^n$ with high ambient dimension $n$ that belong to
(or can be well-approximated by) a prior set $\mathcal{Q} \subseteq \field^n$
with much lower effective dimensionality than the ambient dimension. 
The aim is to recover such signals with provable accuracy guarantees
from noisy measurements of the form
\begin{displaymath}
\boldsymbol{b} = A\boldsymbol{x}_0 +\boldsymbol{\eta},
\end{displaymath}
where
$\boldsymbol\eta \in \measfield^m$ is the noise and $A \in \measfield^{m \times n}$
is the CS matrix with $m \ll n$. Here , $\mathbb{K}$ denotes a field, which may be either $\mathbb{R}$ or $\mathbb{C}$. A fundamental question concerns both stability and robustness: how many measurements
$m$ are required to guarantee stable recovery of the unknown signal $\boldsymbol{x}_0$
if it is known that $\boldsymbol{x}_0$ is close to the prior set $\mathcal{Q}$?
Additionally, how robust is the recovery in the presence of noise -- specifically, how does the noise level impact the accuracy of the recovery?

In this work, we consider the case where the prior set $\mathcal{Q}$ is contained in a union
of low-dimensional subspaces. This general model includes classical sparsity-based priors as well as the
more recently introduced generative priors, where $\mathcal{Q}$ is the range of
a trained generative neural network with ReLU activations~\cite{boraCompressedSensingUsing2017}.
%Our results are therefore general and hold for a large class of priors.

\subsection*{Structured compressed sensing and variable-density sampling}
Compressed sensing matrices with subsampled unitary structure have been
foundational in compressed sensing since its early development~\cite{vershyninHighDimensionalProbabilityIntroduction2018, candesRobustUncertaintyPrinciples2006, foucartMathematicalIntroductionCompressive2013}. Certain
structured CS matrices appropriately model magnetic resonance
imaging (MRI), as physical constraints dictate that measurements must
be taken in the Fourier domain~\cite{lustigSparseMRIApplication2007}. Moreover, certain structured matrices offer significant computational advantages, e.g., fast transforms, reduced storage requirements, and the ability to perform matrix free operations~\cite{khvostovaAlgorithmMachineCalculation1965}. This motivates the study
of CS matrices of the form $A = SF$ where $F \in
\measfield^{n \times n}$ is an appropriate unitary matrix  such as the Discrete Fourier Transform (DFT) matrix, 
and $S \in \mathbb{R}^{m \times n}$ is a sampling matrix that selects $m$
rows of $F$ to be included in the CS matrix, sometimes
repeatedly. We call the rows of $F$ \emph{measurement vectors}, and together they form the
\emph{measurement basis}.

In compressed sensing, it was recognized early on that not all measurements
carry equal importance for signal recovery.
For example, low Fourier frequencies exhibit strong correlation with natural
images, and thus are more informative~\cite{adcockBreakingCoherenceBarrier2017a}.
This observation led to the development of {\it variable density sampling strategies},
where certain measurements are acquired more frequently than others based
on their relative importance~\cite{candesSparsityIncoherenceCompressive2007, puyVariableDensityCompressive2011, krahmerStableRobustSampling2014}.
A theoretical framework that captures the notion of a measurement vector being ``informative" is that of {\it local coherence},
which quantifies the degree of alignment of individual measurement vectors
with the prior set (see \Cref{loc:local_coherence.statement})~\cite{krahmerLocalCoherenceSampling2013}.

The theory of variable density sampling relies on a preconditioning matrix to improve the condition number of the measurement matrix on the prior set. Preconditioning in this context was first introduced in dictionary learning \cite{schnassDictionaryPreconditioningGreedy2008} and later in compressed sensing for sparse polynomial approximations \cite{rauhutSparseLegendreExpansions2012}. With a diagonal preconditioner $D$, the rows of the preconditioned CS matrix $SDF$ form jointly-isotropic vectors \cite{adcockCompressiveImagingStructure2021}. Alternatively, preconditioned subsampled unitary matrices can be seen as bounded orthonormal systems with respect to an orthogonalization measure, where the preconditioner serves as the measure itself \cite{krahmerStableRobustSampling2014, rauhutCompressiveSensingStructured2010}.

\subsection*{Optimized sampling schemes}
Suppose that we have a general sample complexity bound that applies to a parametric
family of sampling distributions.
Given a prior set and measurement vectors in a unitary matrix
$F$ with known local coherences, we can then identify the ``best"
distribution as the one that minimizes this general sample complexity bound.

We call such sampling distributions \emph{optimized} sampling distributions, also known as \emph{optimal variable-density} sampling in the literature. This idea was pioneered by Candes and Romberg~\cite{candesSparsityIncoherenceCompressive2007}, who used an optimized sampling distribution for Fourier measurements on a Haar wavelet sparsity basis. This approach was later formalized theoretically in compressed sensing by Puy, Vandergheynst, and Wiaux~\cite{puyVariableDensityCompressive2011}, who framed the optimized sampling distribution as the sampling distribution minimizing general sample complexity bounds for arbitrary sampling probability weights.

In works preceding variable density sampling, a single scalar coherence
parameter was used for the full measurement basis, e.g.,~\cite{rudelsonSparseReconstructionFourier2008, candesSparsityIncoherenceCompressive2007},
corresponding to the maximum of all the local coherences of the measurement
vectors with respect to the sparsity basis. An analogous parameter was introduced
in~\cite{berkModeladaptedFourierSampling2023} for the generative setting,
considering the maximum of the local coherences with respect to the full prior
set.
This coherence parameter can be derived from the present work when the
sampling probabilities are uniform (see, e.g., \Cref{loc:with:replacement_complexity.statement}). 

This earlier coherence parameter was originally introduced to assess the suitability of a measurement basis for a given prior set rather than to inform sampling. In compressed sensing, subsampled unitary measurement matrices can be unreliable in observing directions highly aligned with the measurement basis, as such directions may fall into the null space of the sensing matrix when the corresponding measurement vectors are not sampled~\cite{candesSparsityIncoherenceCompressive2007, donohoCompressedSensing2006}. This phenomenon, closely tied to the null space property~\cite{cohenCompressedSensingBest2009}, highlights the role of incoherence: when a measurement basis has a small coherence parameter, it ensures that differences between signals in the prior set are misaligned with measurement vectors, thereby enabling their reliable differentiation~\cite{candesNearOptimalSignalRecovery2006, foucartMathematicalIntroductionCompressive2013}. In other words, a measurement basis is well suited to a prior set when its coherence parameter is small, particularly in the uniform sampling setting, as this prevents signal components from being indistinguishable under the measurement process.
% It is a bit hard to place the null space property: This is an instance of the *null space property* in
%compressed sensing; that the kernel of a CS matrix should have only a trivial
%intersection with differences of the prior set. 
This notion of \emph{fitness} of a measurement basis relative to a prior set can be significantly relaxed when
we have control over the sampling distribution. This expands the set of acceptable measurement
bases to include important additional cases, such as the discrete Fourier basis when the signal is sparse in the Haar wavelet basis. Specifically, for a fixed prior set and a measurement basis with local coherence vector
$\boldsymbol{\alpha}$ (as defined in \Cref{loc:local_coherence.statement}), optimizing the sampling scheme reduces the sample complexity from $n \|\boldsymbol{\alpha}\|_{\infty}^2$---a sample complexity found in~\cite{berkCoherenceParameterCharacterizing2022}---to
$\|\boldsymbol{\alpha}\|_2^2$, 
as shown in \Cref{loc:optimized_with:replacement_cs_on_union_of_subspaces.statement}. This gap is significant when there is a large variation in the local coherences, as is the case in our numerical experiments: see \Cref{fig:es_vs_wr}. 
From this perspective, the theory of optimized sampling schemes extends compressed sensing with subsampled unitary CS matrices
to a substantially broader class of signal recovery problems.

\subsection*{Noise models and denoising in compressed sensing}
Another key aspect of the theory of
compressed sensing is  robustness of signal recovery to noise in
the measurements~\cite{candesDecodingLinearProgramming2005}, which has been an integral challenge in
compressive signal recovery, and is
part of the reason why specialized tools such as the Restricted Isometry
Property (RIP) were developed .

There are a few types of noise that appear in the literature. Noise can be modeled
\emph{deterministic}, which means that it is an arbitrary fixed vector
(e.g.,~\cite{berkModeladaptedFourierSampling2023, adcockUnifiedFrameworkLearning2023, naderiIndependentMeasurementsGeneral2021}),
in which case the accuracy of the signal recovery typically depends on the norm
of the noise vector. There is also \emph{bounded} or \emph{adversarial}
noise (e.g., \cite{krahmerStableRobustSampling2014}), where the noise is a random vector
that may depend (adversarially) on the random CS matrix, but which has bounded
norm, or alternatively, that is
constrained to lie in a specified set. Finally, 
there is the model 
we consider: independent stochastic
noise, where the noise is random
and independent of the CS matrix. We focus on independent Gaussian noise
%, as previously considered
%in~\cite{candesDantzigSelectorStatistical2007, candesProbabilisticRIPlessTheory2011}, 
and show that taking additional measurements can denoise the signal reconstruction. In other words, the error caused by the noise decreases as $m$ increases, and with a sufficiently large number of measurements the effect of the noise can be made arbitrarily small.

The main contribution of this paper is to describe the denoising effect in
variable and optimized sampling schemes. While it is straightforward to find results for Gaussian noise by using results with deterministic noise, results obtained by such methods do not exhibit denoising properties,
as we discuss in~\Cref{loc:appendix.gaussian_noise_corollary_from_deterministic_noise}.

Most existing work on signal recovery from noisy measurements in compressed sensing has focused on subgaussian CS matrices. Sub-sampled unitary matrices have large subgaussian norm and thus require specialized machinery.  This is made more delicate when there is variation in the
local coherences.
Measurement vectors with higher local coherence tend to capture more signal
energy, making their measurements both more informative and more robust to noise. 
In contrast, measurements produced by measurement vectors with low local coherence are more susceptible to
noise, as the signal component is often weaker relative to the noise.
Capturing this effect in theoretical analysis is challenging because 
robustness depends on which measurement vectors are randomly selected. 
Previous results that contend with this difficulty with either deterministic or adversarial noise have either made unrealistic assumptions or given difficult-to-apply error bounds, as we argue in 
\Cref{loc:body.noise_robustness_in_the_literature}.
By considering Gaussian noise, we provide simple and meaningful error bounds without the drawbacks associated with other types of noise.

Similar works
include~\cite{cardenasCS4MLGeneralFramework2023, berkModeladaptedFourierSampling2023},
where optimized sampling was introduced to the generative setting. The theme is
revisited in~\cite{adcockUnifiedFrameworkLearning2023}, where priors that are
contained in finite unions of subspaces are considered. Their results, which we
further discuss in \Cref{loc:body.noise_robustness_in_the_literature},
assume deterministic noise.

\textbf{Contributions}

\label{loc:introduction_denoising_paper.body.contributions}
\begin{itemize}
\item In a first for compressed sensing, in \Cref{loc:optimized_with:replacement_cs_on_union_of_subspaces.statement}
we derive denoising results for
optimized sampling schemes. We show that the error induced by noise decreases proportionally to $1/\sqrt{m}$ where $m$ is the number of measurements.
\item More generally, in \Cref{loc:cs_with_replacement_and_with_denoising_on_unions_of_subspaces.statement}
we provide a denoising compressed sensing result for  
arbitrary \emph{variable density} sampling schemes.
\item Our results hold for priors contained in finite unions of subspaces including both sparse and generative priors. We discuss this point in \Cref{loc:body.denoising_for_subsampled_unitary_matrices.applications}.
\item Our results hold for prior sets in $\mathbb{R}^n$, and CS matrices that are either real or complex. We carefully consider the effect of complex measurements on real subspaces, and find that the same results hold regardless of the field, despite defining the noise to have expected squared norm twice as large in the complex case.
\end{itemize}
% Reference main result from here.

\textbf{Notation}

\label{loc:body.introduction.notation}
Let $\mathbb{R}_+$ be the non-negative real numbers, $\mathbb{R}_{++}$
the strictly positive real numbers, and $\mathbb{N}$ the natural numbers
starting at $1$. For a function $f$, we denote its range by $\range(f)$,
and its restriction to a subset $C$ of its domain by $f|_C$.
Throughout this paper, we fix the field $\mathbb{K}$ to be either $\mathbb{C}$ or
$\mathbb{R}$. 

For a vector $\boldsymbol{u}$, its components are indexed as $u_i$.
We denote by
$\{\boldsymbol{e}_i\}_{i  \in [n]}$ the canonical basis of $\mathbb{R}^n$.
For $\ell \in \mathbb{N}$, the set $[\ell]$ comprises
the integers from $1$ to $\ell$. 
We denote by $\supp \boldsymbol{v}$ the support of $\boldsymbol{v}$, and by
$\boldsymbol{v}^{.2}$ entry-wise square of $\boldsymbol{v}$.

For an $m  \times  n$ matrix $A$, we denote its adjoint (the conjugate
transpose) by
$A^*$, its entries by
$A_{i,j}$, and its row vectors by $\boldsymbol{a}_i$,
such that $A  =  \sum_{i = 1}^m \boldsymbol{e}_i \boldsymbol{a}_i^*$.
The Euclidean norm
of a vector $\boldsymbol{u} \in \measfield^n$ is
$\|\boldsymbol{v}\|_2 := \sqrt{\boldsymbol{v}^* \boldsymbol{v}}$.
The
operator norm of a matrix $A$ is $\|A\|:= \sup_{\boldsymbol{u} \in B_2^n} \|Au\|_2$.
For matrices, given a vector $\boldsymbol{d}  \in \mathbb{R}^n$, we
denote by $\Diag(\boldsymbol{d})$ the $n  \times n$ diagonal
matrix with diagonal entries $\boldsymbol{d}$. The identity matrix in
$\mathbb{R}^m$ is labeled $I_m$.
Projection onto a closed set $\mathcal{T} \subseteq \mathbb{R}^n$ is denoted by $\proj_{\mathcal{T}}$, mapping a vector
$\boldsymbol{x}$ to the element in $\mathcal{T}$ that minimizes the Euclidean
distance, with ties broken by choosing the lexicographically first (meaning
that vectors are ordered by
their first
entry, then second, then third, and so on).

We use $\langle \cdot, \cdot \rangle$ to denote
the inner product in $\mathbb{K}^n$; specifically, the canonical inner product
when $\measfield$ is $\mathbb{R}$, and the complex inner
product $\langle \boldsymbol{u}, \boldsymbol{v}\rangle = \boldsymbol{u}^* \boldsymbol{v}$ when $\measfield$ is $\mathbb{C}$. We also denote by
$\mathcal{R}\langle  \cdot ,  \cdot \rangle$ the real part of the inner
product (which is just the canonical inner product when $\measfield$ is $\mathbb{R}$).

$\sphere{n}$ is the unit sphere in $\mathbb{R}^n$ or $\mathbb{C}^n$ depending
on context. 
The simplex $\Delta^{n-1}$ is defined as:
\begin{displaymath}
\Delta^{n-1} = \left\{ \boldsymbol{p} \in \mathbb{R}^n \mid p_i \geq 0, \sum p_i = 1 \right\}.
\end{displaymath}
We let $B_2$ be the $\ell_2$ ball, and $B_2^n$ be the
$l_2$ ball of dimension $n$ specifically. We say that a set $\mathcal{T}$ in
a real or complex vector space is a \emph{cone} when $\forall \lambda  \in (0, \infty), \lambda \mathcal{T} = \mathcal{T}$, where $\lambda \mathcal{T} := \{\lambda t | t  \in \mathcal{T}\}$.

The self-difference $\mathcal{V} - \mathcal{V}$ is $\left\{ \boldsymbol{v}_1 - \boldsymbol{v}_2 \mid \boldsymbol{v}_1, \boldsymbol{v}_2 \in \mathcal{V} \right\}$. Denote by $\mathcal{P}(\measfield^n)$ the powerset of $\measfield^n$.

We employ the notation $a \lesssim b$ if $a \leq Cb$ where $C$ is an absolute
constant, potentially different for each instance. 

We denote $X \sim \mathcal{N}(\mu, \sigma^2)$ to be the Gaussian random variable
with mean $\mu$ and variance $\sigma^2$. A random Gaussian vector $g \sim \mathcal{N}(0, I_m)$ is a random vector in $\mathbb{R}^m$ which has i.i.d.
$\mathcal{N}(0, 1)$ Gaussian entries. A complex random Gaussian vector 
$g \sim \mathcal{N}(0, I_m)$, $g  \in \mathbb{C}^m$, is a random
vector in $\mathbb{C}^m$ with entries that have real and imaginary parts individually $\iid \mathcal{N}(0, 1)$.
\section{Main result}
\label{loc:body.main_result}
We introduce key mathematical objects that will be of use in our main result.
%We begin with the notion of local coherence which we discussed in the introduction.
\begin{definition}[Local coherence]
\label{loc:local_coherence.statement}
The \emph{local coherence} of a vector $\boldsymbol{\phi} \in \measfield^n$ with respect to a cone $\mathcal{T} \subseteq \measfield^n$ is defined as
\begin{displaymath}
\alpha_{\mathcal{T}}(\boldsymbol\phi) := \sup_{\boldsymbol{x} \in \mathcal{T} \cap B_2} \lvert \boldsymbol\phi^* \boldsymbol{x} \rvert.
\end{displaymath}
The \emph{local coherences} of a unitary matrix $F \in \measfield^{n \times n}$ with respect to a cone $\mathcal{T} \subseteq \measfield^n$ is the vector $\boldsymbol{\alpha} \in \mathbb{R}^n_{+}$ with entries $\alpha_j := \alpha_\mathcal{T}(\boldsymbol{f}_j)$, where $\boldsymbol{f}_j$ is the $j^{th}$ row of $F$.
\end{definition}
Note that we define the local coherences with respect to some cone 
$\mathcal{T}  \subseteq \mathbb{K}^n$, yet we also have local coherences
of complex vectors with respect to cones in $\mathbb{R}^n$ by embedding
these cones into $\mathbb{C}^n$ in the canonical way.
\begin{remark}
\label{loc:local_coherence.remark_union_subspaces}
Local coherences in this paper are restricted to
be strictly positive, i,e., the
rows of $F$ may not be orthogonal to the cone
$\mathcal{T}$. We make this assumption for simplicity. To allow
$F$ to have fully incoherent rows, one need only let the sampling probabilities
of orthogonal rows be $0$, and apply our results to the subspace spanned by
the remaining rows.
\end{remark}

Recall that we consider CS matrices of the form $A  =  SF$, where $F$ is a $n  \times n$ unitary matrix (for example, $F$ can be the Fourier
matrix), and $S$ is a \emph{sampling matrix}, which we now
define.
\begin{definition}[Sampling matrix]
\label{loc:sampling_matrix.with_replacement_deterministic}
A \emph{sampling matrix} $S \in \mathbb{R}^{m  \times n}$ for $m,  n  \in \mathbb{N}$ is a matrix which has rows of the form $\sqrt{\frac{n}{m}} \boldsymbol{e}_i$, for any $i  \in [n]$.
\end{definition}
A sampling matrix $S$ will typically be random in the following way.
\begin{definition}[With-replacement sampling matrix]
\label{loc:sampling_matrix_with_replacement.statement}
A \emph{with-replacement sampling matrix} $S \in \mathbb{R}^{m \times n}$ associated with a sampling probability vector $\boldsymbol{p} \in (0, 1)^n \cap \Delta^{n-1}$ is the matrix with i.i.d. row vectors $\{ \boldsymbol{s}_i \}_{i \in [m]}$ such that 
\begin{displaymath}
\mathbb{P}\left( \boldsymbol{s}_i = \sqrt{ \frac{n}{m} }\boldsymbol{e}_j \right)= p_j \quad \forall i \in [m],j \in [n].
\end{displaymath}
\end{definition}
With these objects, we outline the problem of robust signal recovery in greater detail.
\begin{setup}
\label{loc:setup_of_signal_recovery_with_subsampled_unitary_measurements_and_gaussian_noise.for_denoising_paper}
\,
\normalfont

\textbf{Prior and true signal.} Let $\boldsymbol{x}_{0} \in \field^n$ be a signal, and 
$\mathcal{Q} \subseteq \field^n$ be the prior set, which models the signals of interest. As such, we think of $\boldsymbol{x}_0$ as being close to $\mathcal{Q}$;  $\boldsymbol{x}^\perp := \boldsymbol{x}_0 - \proj_{\mathcal{Q}} \boldsymbol{x}_0$ quantifies the model mismatch. 
\smallskip 

\textbf{Measurement acquisition.} Let $F  \in \measfield^{n  \times n}$ be a unitary matrix. Let $\mathcal{T} \subseteq \mathbb{R}^n$ be a union of $M$
subspaces, each of dimension at most $\ell$, such that
$\mathcal{T} \supseteq \mathcal{Q}-\mathcal{Q}$. Suppose that $\boldsymbol{\alpha}$
is the vector of local coherences of $F$ with respect to
$\mathcal{T}$.

Let $S$ be a possibly random
\emph{sampling matrix}, and define the measurements
\begin{displaymath}
\boldsymbol{b} = SF\boldsymbol{x}_{0} + \boldsymbol{\eta},
\end{displaymath}
where the noise is $\boldsymbol{\eta} =  \frac{\sigma \boldsymbol{g}}{\sqrt{m}}$
with 
$\boldsymbol{g} \sim \mathcal{N}(0, I_m)$ being a Gaussian vector in $\measfield^m$. Here,
$\mathbb{E}[\|\boldsymbol{\eta}\|_2^2]$ is $\sigma^2$ when $\mathbb{K}$
is $\mathbb{R}$ and $2 \sigma^2$ when $\mathbb{K}$ is $\mathbb{C}$,  thus, $\sigma$
determines the size of the noise. With this normalization, $\frac{1}{\sigma}$
can be thought of as the Signal-to-Noise Ratio (SNR) up to an absolute constant. Specifically, it is the SNR in expectation if $\boldsymbol{x}_0$ is a random signal on the sphere.

\smallskip 
\textbf{Signal reconstruction} Knowing only $\boldsymbol{b}, S$ and $F$, we (approximately) recover the true signal $\boldsymbol{x}_0$ by (approximately) solving the following optimization problem:
\begin{equation}
\label{eq:opt:reconstruct}
\minimize_{\boldsymbol{x} \in \mathcal{Q}} \, \lVert \widetilde{D} SF\boldsymbol{x}-\widetilde{D}\boldsymbol{b} \rVert_{2}^2\end{equation}
where $\widetilde{D} := \Diag(S\boldsymbol{d})$ is a diagonal preconditioning
matrix for some $\boldsymbol{d} \in \mathbb{R}^n$.
Note that in terms of $D := \Diag(\boldsymbol{d})$, the preconditioned CS
matrix $\widetilde{D}SF$ can be written as $SDF$. This demonstrates that the
preconditioning is, in fact, an 
element-wise scaling operation applied to individual rows of $F$.
We approximately solve the optimization problem \eqref{eq:opt:reconstruct} and obtain an $\hat{\boldsymbol{x}}\in \mathcal{Q}$ such that 
\begin{equation}
\label{eq:opt:recov}
\lVert  \widetilde{D}SF\hat{\boldsymbol{x}}-\widetilde{D}\boldsymbol{b} \rVert_{2}^2 \leq  \min_{\boldsymbol{x} \in \mathcal{Q}}\lVert  \widetilde{D}SF\boldsymbol{x} - \widetilde{D}\boldsymbol{b} \rVert_2^2+\varepsilon
\end{equation}
for some small optimization error $\varepsilon>0$.

\smallskip 
\textbf{Error bounds.} We bound the error 
$\|\boldsymbol{x}_0- \hat{\boldsymbol{x}}\|_2$
in terms of the noise level $\sigma$, the optimization error $\varepsilon$,
and the distance of the true signal $\boldsymbol{x}_0$ to the prior
$\mathcal{Q}$ (the approximation error). 

%Our objective is to quantify the sample complexity, i.e., the number of
%measurements $m$ needed to accurately recover $\boldsymbol{x}_0$ so that
%we can bound the recovery error $\|\boldsymbol{x}_0- \hat{\boldsymbol{x}}\|_2$
%by a function of the noise level $\sigma$, the optimization error $\varepsilon$,
%and the distance of the true signal $\boldsymbol{x}_0$ to the prior
%$\mathcal{Q}$ (the approximation error). In the absence of any such
%sources of error, we will find exact recovery of the true signal $\boldsymbol{x}_0$.
\end{setup}
% Remove remark below?
Our first theorem concerns \emph{optimized} sampling schemes, which we now define.
\begin{definition}[Optimized sampling vector]
\label{loc:optimized_sampling_vector.statement}
Given a local coherence vector $\boldsymbol{\alpha}$, we define the \emph{optimized
sampling vector} $\boldsymbol{p}' \in \Delta^{n-1} \cap (0,1)^n$ by 
\begin{displaymath}
p'_i = \frac{\alpha_i^2}{\|\boldsymbol{\alpha}\|_2^2}
\end{displaymath}
for each component of $\boldsymbol{p}'$.
\end{definition}
This sampling vector optimizes the sample complexity as a function of the probability vector, see~\Cref{loc:optimize_the_with:replacement_sampling_probabilities.statement}. We now state our main result, which provides denoising signal recovery guarantees for optimized sampling
schemes.
\begin{theorem}[Compressed sensing with optimized sampling]
\label{loc:optimized_with:replacement_cs_on_union_of_subspaces.statement}
Under~\Cref{loc:setup_of_signal_recovery_with_subsampled_unitary_measurements_and_gaussian_noise.for_denoising_paper}, let 
$\delta>0$ and suppose that
\begin{displaymath}
m \gtrsim \lVert \boldsymbol{\alpha}\rVert_{2}^2 \left( \log \ell + \log M + \log \frac{1}{\delta} \right).
\end{displaymath}
Furthermore, let $S$ be an $m \times n$ \hyperref[loc:sampling_matrix_with_replacement.statement]{with-replacement sampling matrix} governed by the optimized probability vector $\boldsymbol{p}'$, and define $D := \Diag\left(\boldsymbol{d}\right)$ where $d_i = (n p'_i)^{- 1 / 2}$. Then the following holds with probability at least $1-\delta$.

For any $\boldsymbol{x}_0  \in \field^n$, with $\varepsilon, \hat{\boldsymbol{x}}, \boldsymbol{x}^{\perp}$ as in~\Cref{loc:setup_of_signal_recovery_with_subsampled_unitary_measurements_and_gaussian_noise.for_denoising_paper}, we have 
\begin{align*}
\lVert \hat{\boldsymbol{x}}- \boldsymbol{x}_0\rVert_2 \leq 9\frac{ \sigma}{\sqrt{ m }} \lVert \boldsymbol{\alpha}\rVert_{2} \min\left(\sqrt{\frac{5}{4\delta}}, \frac{1}{\sqrt{n} \min(\boldsymbol{\alpha})}\right) \left( \sqrt{ \ell } + \sqrt{\log M} + \sqrt{ \log \frac{20}{\delta}}\right)&\\
+\lVert \boldsymbol{x}^\perp\rVert + 6\lVert SDF\boldsymbol{x}^\perp\rVert_{2}  + \frac{3}{2}\sqrt{ \varepsilon }.
\end{align*}
\end{theorem}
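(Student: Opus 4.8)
The plan is to split the argument into a deterministic reduction driven by the optimality of $\hat{\boldsymbol x}$ and two probabilistic ingredients: a restricted lower isometry for the preconditioned sensing map on $\mathcal{T}$, and a high-probability bound on a Gaussian noise functional. Write $\Phi := SDF = \widetilde{D}SF$ for the preconditioned sensing matrix and $\boldsymbol\xi := \widetilde{D}\boldsymbol\eta$ for the preconditioned noise, so the objective in \eqref{eq:opt:reconstruct} equals $\lVert \Phi(\boldsymbol x - \boldsymbol x_0) - \boldsymbol\xi\rVert_2^2$. Let $\boldsymbol w := \hat{\boldsymbol x} - \proj_{\mathcal Q}\boldsymbol x_0 \in \mathcal Q - \mathcal Q \subseteq \mathcal T$, so that $\hat{\boldsymbol x} - \boldsymbol x_0 = \boldsymbol w - \boldsymbol x^\perp$. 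Feeding the feasible point $\proj_{\mathcal Q}\boldsymbol x_0$ into \eqref{eq:opt:recov} and expanding both squared norms, the pure $\boldsymbol\xi$ and $\Phi\boldsymbol x^\perp$ terms cancel and I obtain $\lVert \Phi\boldsymbol w\rVert_2^2 \le 2\mathcal{R}\langle \Phi\boldsymbol w, \Phi\boldsymbol x^\perp\rangle + 2\mathcal{R}\langle \Phi\boldsymbol w, \boldsymbol\xi\rangle + \varepsilon$. I bound the first term by $2\lVert\Phi\boldsymbol w\rVert_2\lVert\Phi\boldsymbol x^\perp\rVert_2$ and the second by $2\lVert\boldsymbol w\rVert_2 W$, where $W := \sup_{\boldsymbol v \in \mathcal T \cap \sphere{n}} \mathcal{R}\langle \Phi\boldsymbol v, \boldsymbol\xi\rangle$. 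Once a lower isometry $\lVert\Phi\boldsymbol w\rVert_2 \ge \sqrt{1-\rho}\,\lVert\boldsymbol w\rVert_2$ is in hand this collapses to a scalar quadratic in $\lVert\Phi\boldsymbol w\rVert_2$; solving it and using $\lVert\hat{\boldsymbol x}-\boldsymbol x_0\rVert_2 \le \lVert\boldsymbol w\rVert_2 + \lVert\boldsymbol x^\perp\rVert_2$ gives the shape $\lVert\hat{\boldsymbol x}-\boldsymbol x_0\rVert_2 \lesssim W + \lVert\Phi\boldsymbol x^\perp\rVert_2 + \lVert\boldsymbol x^\perp\rVert_2 + \sqrt\varepsilon$, with a careful choice of $\rho$ and bookkeeping producing the explicit constants $9$, $6$, and $\tfrac32$.

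\textbf{The lower isometry.} The rows of $\Phi$ are the rescaled measurement vectors $(m p'_{J_i})^{-1/2}\boldsymbol f_{J_i}$, and the optimized weights $p'_j = \alpha_j^2/\lVert\boldsymbol\alpha\rVert_2^2$ from \Cref{loc:optimized_sampling_vector.statement} make $\Phi$ isotropic in expectation while \emph{equalizing} the preconditioned local coherences: $(m p'_j)^{-1/2}\alpha_j = \lVert\boldsymbol\alpha\rVert_2/\sqrt{m}$ for every $j$. Restricting to each of the $M$ subspaces $V_k$ and applying a matrix Bernstein (or Chernoff) bound to $\sum_i (m p'_{J_i})^{-1}\proj_{V_k}\boldsymbol f_{J_i}\boldsymbol f_{J_i}^*\proj_{V_k}$, the uniform per-summand operator-norm bound is exactly $\lVert\boldsymbol\alpha\rVert_2^2/m$; a union bound over the $M$ subspaces, each of dimension at most $\ell$, then gives $\lVert\Phi\boldsymbol v\rVert_2^2 = (1\pm\rho)\lVert\boldsymbol v\rVert_2^2$ on all of $\mathcal T$ with probability at least $1-\delta/2$ as soon as $m \gtrsim \lVert\boldsymbol\alpha\rVert_2^2(\log\ell + \log M + \log\tfrac1\delta)$, matching the hypothesis and delivering both isotropy and the $\sqrt\ell$ and $\log M$ dependence.

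\textbf{The noise term (the hard part).} Conditioning on $S$, the functional $\boldsymbol v \mapsto \mathcal{R}\langle \Phi\boldsymbol v, \boldsymbol\xi\rangle$ is a centred Gaussian process, and on each subspace its supremum equals $\lVert \proj_{V_k}\Phi^*\boldsymbol\xi\rVert_2$; in the complex case one takes real parts, which only decreases the supremum, which is why the same bound holds regardless of the field despite the larger noise variance. Thus $W = \max_k \lVert \proj_{V_k}\Phi^*\boldsymbol\xi\rVert_2$, and given $S$ the vector $\proj_{V_k}\Phi^*\boldsymbol\xi$ is Gaussian in $V_k$ with covariance proportional to $\sum_i (p'_{J_i})^{-2}\proj_{V_k}\boldsymbol f_{J_i}\boldsymbol f_{J_i}^*\proj_{V_k}$. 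Gaussian concentration of the norm gives $\lVert \proj_{V_k}\Phi^*\boldsymbol\xi\rVert_2 \lesssim \sqrt{\tr \Sigma_k} + \sqrt{\lVert\Sigma_k\rVert \log(M/\delta)}$, and the union over $k$ yields the factor $\sqrt\ell + \sqrt{\log M} + \sqrt{\log(20/\delta)}$. The crux is the variance proxy: the optimized identity $\alpha_j^2/p'_j = \lVert\boldsymbol\alpha\rVert_2^2$ turns the expected trace into $\tfrac{\sigma^2\lVert\boldsymbol\alpha\rVert_2^2}{mn}\sum_j \lVert\proj_{V_k}\boldsymbol f_j\rVert_2^2/\alpha_j^2$, which I bound two ways—via $\lVert\proj_{V_k}\boldsymbol f_j\rVert_2^2/\alpha_j^2 \le 1$ (summing to $n$) and via $\alpha_j \ge \min(\boldsymbol\alpha)$ together with $\sum_j\lVert\proj_{V_k}\boldsymbol f_j\rVert_2^2 = \dim V_k \le \ell$ (summing to $\ell/\min(\boldsymbol\alpha)^2$). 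After taking square roots these two estimates produce the overall prefactor $\tfrac{\sigma}{\sqrt m}\lVert\boldsymbol\alpha\rVert_2$ and the two arguments of the minimum $\min\!\big(\sqrt{5/(4\delta)},\, 1/(\sqrt n\min(\boldsymbol\alpha))\big)$, the $1/\sqrt\delta$ branch arising from the cheaper route of a Markov bound on the unconditional second moment.

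\textbf{Anticipated obstacle.} The delicate point is exactly this variance control. Because $(p'_j)^{-2}$ blows up on low-coherence rows, the unconditional second moment $\mathbb{E}\lVert\Phi^*\boldsymbol\xi\rVert_2^2$ is inflated by the rare event that such a row is sampled, so a naive Chebyshev argument only yields the weak $1/\sqrt\delta$ branch. Obtaining the refined branch—and disentangling the two randomness sources, the sampling $S$ and the Gaussian $\boldsymbol g$—requires working on the \emph{good realization} of $S$ furnished by the isometry step rather than on its expectation, and this coupling between the isometry event and the conditional Gaussian concentration is where I expect the main technical effort to lie.
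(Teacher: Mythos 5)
Your deterministic reduction (feeding $\proj_{\mathcal Q}\boldsymbol x_0$ into the optimality condition, cancelling, Cauchy--Schwarz, lower isometry, scalar quadratic) and your RIP step (matrix Bernstein per subspace with per-summand bound $\lVert\boldsymbol\alpha\rVert_2^2/m$, union over $M$ subspaces) coincide with the paper's argument. The gap is entirely in the noise term, and it begins with a concrete computational error. With rows of $\Phi=SDF$ equal to $(mp'_{J_i})^{-1/2}\boldsymbol f_{J_i}^*$ and $\widetilde D_{ii}=(mp'_{J_i})^{-1/2}$, the conditional covariance of $\proj_{V_k}\Phi^*\boldsymbol\xi$ is $\tfrac{\sigma^2}{m^3}\sum_i(p'_{J_i})^{-2}\proj_{V_k}\boldsymbol f_{J_i}\boldsymbol f_{J_i}^*\proj_{V_k}$, whose expected trace over $S$ is $\tfrac{\sigma^2\lVert\boldsymbol\alpha\rVert_2^2}{m^2}\sum_j\lVert\proj_{V_k}\boldsymbol f_j\rVert_2^2/\alpha_j^2$: the prefactor is $m^{-2}$, not the $(mn)^{-1}$ you wrote (test $n=2$, $m=1$, uniform $\boldsymbol p'$). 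The $(mn)^{-1}$ is precisely what makes your two branches land on the theorem's bound; with the correct $m^{-2}$ both branches are inflated by $\sqrt{n/m}$, and nothing in the hypothesis $m\gtrsim\lVert\boldsymbol\alpha\rVert_2^2(\log\ell+\log M+\log\tfrac1\delta)$ forces $n\lesssim m\ell$, so the stated bound is not recovered.

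More structurally, the difficulty you flag as an ``anticipated obstacle'' is the actual content of the proof, and the route you propose (unconditional expected trace plus Markov) cannot supply it. The paper never passes through $\mathbb E_S\,\tr\Sigma_k$. It works conditionally on the RIP event throughout: (i) the RIP transfers the coherences of $V_k$ to the image subspace $\bar{\mathcal U}=SDF V_k\subseteq\measfield^m$, giving $\sup_{u\in\bar{\mathcal U}\cap B_2}|u_i|\le\tfrac32(SD\boldsymbol\alpha)_i$; (ii) a deterministic rearrangement lemma (\Cref{loc:operator_norm_of_a_diagonal_matrix_on_an_incoherent_subspace.statement}) converts these coherences into the operator-norm bound $\lVert\widetilde D\proj_{\bar{\mathcal U}}\rVert\le\tfrac32\lVert\widetilde D\trunc(SD\boldsymbol\alpha)\rVert_2$, which serves as both the mean and the Lipschitz constant in the conditional Gaussian concentration and produces the clean factor $\sqrt\ell+\sqrt{\log M}+\sqrt{\log(20/\delta)}$; and (iii) the optimized-sampling cancellation $D\boldsymbol\alpha=\tfrac{\lVert\boldsymbol\alpha\rVert_2}{\sqrt n}\allone$ bounds $\lVert\widetilde D\trunc(SD\boldsymbol\alpha)\rVert_2$ deterministically by $\max(\boldsymbol d)=\lVert\boldsymbol\alpha\rVert_2/(\sqrt n\min(\boldsymbol\alpha))$ and, via a single scalar Markov bound on $\lVert S\boldsymbol d\rVert_2^2$ (whose expectation is $n$), by $\lVert\boldsymbol\alpha\rVert_2/\sqrt\delta$. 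Note in particular that the $1/(\sqrt n\min(\boldsymbol\alpha))$ branch of the theorem is deterministic given $S$, so it cannot come out of an expectation-plus-Markov argument at all. Your proposal names the destination, but steps (i)--(iii) --- the truncation operator, the LP/rearrangement bound on $\lVert\widetilde D\proj_{\bar{\mathcal U}}\rVert$, and the cancellation specific to $\boldsymbol p'$ --- are the missing ideas, and the trace computation as written does not substitute for them.
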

\Cref{loc:optimized_with:replacement_cs_on_union_of_subspaces.statement}
follows from \Cref{loc:rip_of_with:replacement_non:uniform_sampling_matrix_on_a_union_of_subspaces.statement}, \Cref{loc:signal_recovery_with_subsampled_unitary_matrix_with_gaussian_noise_on_union_of_subspaces.statement}, and \Cref{loc:tail_on_the_noise_sensitivity_for_adapted_with:replacement_sampling.statement} with a union bound.
See \hyperlink{loc:optimized_with:replacement_cs_on_union_of_subspaces.proof}{the proof} in~\Cref{loc:appendix.proofs_by_union_bounds} for details on how to take the union bound.
\begin{remark}
\label{loc:optimized_with:replacement_cs_on_union_of_subspaces.remark}
Note that the second ``model mismatch" or ``approximation" error term, $6\|SDF\boldsymbol{x}^\perp\|_2$,  in~\Cref{loc:optimized_with:replacement_cs_on_union_of_subspaces.statement} may be absorbed into the first error term $\|\boldsymbol{x}^\perp\|_2$ for a weaker, non-uniform version of our result (i.e., of the form: for any fixed $\boldsymbol{x}_0  \in \mathbb{R}^n$, with high probability on $S$ and the noise, $\|\hat{\boldsymbol{x}}- \boldsymbol{x}_0\|_2  \le  \ldots$). Indeed, in such a case, $\mathbb{E}\|SDF\boldsymbol{x}^\perp\|_2 \le   \|\boldsymbol{x}^\perp\|_2$, and a high-probability bound is given by Markov's inequality. For further discussion on the optimality and typical magnitude of this error term, see~\cite[Section S4]{berkCoherenceParameterCharacterizing2022}. Note also that generally, non-uniform results follow directly from their uniform counterparts.
\end{remark}
\begin{remark}
\label{loc:optimized_with:replacement_cs_on_union_of_subspaces.simplified_result_remark}
\Cref{loc:optimized_with:replacement_cs_on_union_of_subspaces.statement} implies the following. Let $m  \gtrsim \|\boldsymbol{\alpha}\|_2 (\log \ell + \log M + 1)$ and $\boldsymbol{x}_0  \in \mathbb{R}^n$ be a fixed vector. Then with probability at least $0.99$, any approximate minimizer $\hat{\boldsymbol{x}}$ as described in \Cref{eq:opt:reconstruct} will satisfy 
\begin{displaymath}
\|\hat{\boldsymbol{x}}-\boldsymbol{x}_0\|_2 \lesssim \frac{\sigma}{\sqrt{m}} \|\boldsymbol{\alpha}\|_2 ( \sqrt{\ell} + \sqrt{\log M}) + \|\boldsymbol{x}^\perp\|_2 + \sqrt{\varepsilon}.
\end{displaymath}
\end{remark}
\section{Denoising for subsampled unitary matrices}
\label{loc:body.denoising_for_subsampled_unitary_matrices}
To help control various quantities in our analysis, we introduce the unit
truncation operator, which truncates a vector to its leading entries to obtain
a vector of unit norm. Below, given a vector $\boldsymbol{v} \in \mathbb{R}^n$ and a positive integer $s \leq n$, 
$\boldsymbol{v}|_{[s]} \in \mathbb{R}^s$ denotes $\boldsymbol{v}$ truncated to indices $j \le s$.
\begin{definition}[Unit truncation]
\label{loc:unit_truncation.statement}
Given some $\boldsymbol{v} \in \mathbb{R}^n$, let
\begin{displaymath}
I =\min \left\{ \bar{I} \in  [n] \middle| \|v|_{[\bar{I}]}\|_2 \ge 1 \right\}.
\end{displaymath}
Then define the unit truncation operator $\trunc: \measfield^n  \to  \measfield^n$ to have
entries
\begin{displaymath}
\trunc(\boldsymbol{v})_i := \begin{cases}
v_i & i < I, \\
\sqrt{1-\|\boldsymbol{v}|_{[I-1]}\|_2^2} & i  = I,  \\
0 & i> I.
\end{cases}
\end{displaymath}
\end{definition}

Next we recall the general version of the celebrated \textit{restricted isometry property} 
introduced by Candes, Romberg, and Tao~\cite{candesRobustUncertaintyPrinciples2006}.
\begin{definition}[Restricted Isometry Property]
\label{loc:rip.statement}
Let $\mathcal{T} \subseteq \field^n$ be a cone and $A \in \complex^{m \times n}$ a matrix. We say that $A$ satisfies the \emph{Restricted Isometry Property} (RIP) on $\mathcal{T}$ when
\begin{displaymath}
\sup_{u \in \mathcal{T} \cap \mathcal{S}^{n-1}}\lvert \lVert Au\rVert_2 - 1\rvert \leq \frac{1}{3}.
\end{displaymath}
\end{definition}
Such a property is desirable because a measurement matrix with the RIP ensures that distinct signals within the prior set are mapped to sufficiently different measurements provided the noise is small enough.

We next show that both signal recovery and denoising occurs for any subsampled
measurement matrix that satisfies the RIP when preconditioned. Later in \Cref{loc:rip_of_with:replacement_non:uniform_sampling_matrix_on_a_union_of_subspaces.statement} we prove that under suitable conditions, such an RIP holds with high probability
when $S$ is a with-replacement random sampling matrix.
\begin{theorem}[Signal recovery]
\label{loc:signal_recovery_with_subsampled_unitary_matrix_with_gaussian_noise_on_union_of_subspaces.statement}
Under~\Cref{loc:setup_of_signal_recovery_with_subsampled_unitary_measurements_and_gaussian_noise.for_denoising_paper}, let $S$ be a deterministic matrix and suppose $SDF$ satisfy the \hyperref[loc:rip.statement]{RIP} on
 $\mathcal{T}$. Without loss of generality, assume that the rows of $S$ are ordered such that $S \boldsymbol{d}$ is non-increasing.
Then for $t>0$, the following holds with probability at least
$1-2\exp(-t^2)$. 
\begin{align*}
\lVert \hat{\boldsymbol{x}}- \boldsymbol{x}_0\rVert_2 \leq 9\frac{ \sigma}{\sqrt{ m }} \|\widetilde{D}\trunc(SD \boldsymbol{\alpha})\|_2 \left( \sqrt{ \ell } + \sqrt{\log M} + t\right)&\\
+\lVert \boldsymbol{x}^\perp\rVert + 6\lVert SDF\boldsymbol{x}^\perp\rVert_{2}  + \frac{3}{2}\sqrt{ \varepsilon }.
\end{align*}
\end{theorem}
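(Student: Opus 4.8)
\emph{The plan.} Write $A := SDF = \widetilde{D}SF$ for the preconditioned sensing matrix and $\boldsymbol{w} := \widetilde{D}\boldsymbol{\eta} = \tfrac{\sigma}{\sqrt{m}}\widetilde{D}\boldsymbol{g}$ for the preconditioned noise, so that the objective in \eqref{eq:opt:reconstruct} reads $\|A\boldsymbol{x} - A\boldsymbol{x}_0 - \boldsymbol{w}\|_2^2$. Set $\boldsymbol{x}^\star := \proj_{\mathcal{Q}}\boldsymbol{x}_0$ and $\boldsymbol{h} := \hat{\boldsymbol{x}} - \boldsymbol{x}^\star$; since $\hat{\boldsymbol{x}},\boldsymbol{x}^\star \in \mathcal{Q}$ we have $\boldsymbol{h} \in \mathcal{Q}-\mathcal{Q} \subseteq \mathcal{T}$, which is exactly what lets the RIP and the local coherences act on it. First I would feed the feasible point $\boldsymbol{x}^\star$ into the near-optimality guarantee \eqref{eq:opt:recov}, expand both squared norms, and cancel the common $\|\boldsymbol{w}\|_2^2$. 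The two $\boldsymbol{w}$-cross terms recombine cleanly: writing $\boldsymbol{r} := A(\hat{\boldsymbol{x}}-\boldsymbol{x}_0) = A\boldsymbol{h} - A\boldsymbol{x}^\perp$, the $\langle A\boldsymbol{x}^\perp,\boldsymbol{w}\rangle$ contributions cancel and one is left with the basic inequality
\begin{displaymath}
\|\boldsymbol{r}\|_2^2 \le 2\,\Re\langle A\boldsymbol{h}, \boldsymbol{w}\rangle + \|A\boldsymbol{x}^\perp\|_2^2 + \varepsilon .
\end{displaymath}
Taking real parts here is the point at which the complex case is absorbed, as the objective is real-valued in either field.

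Next I would isolate the noise through the \emph{noise sensitivity} functional $N := \sup_{\boldsymbol{u}\in\mathcal{T}\cap\mathcal{S}^{n-1}}|\Re\langle A\boldsymbol{u},\boldsymbol{w}\rangle|$, so that $\Re\langle A\boldsymbol{h},\boldsymbol{w}\rangle \le \|\boldsymbol{h}\|_2\,N$ because $\boldsymbol{h}\in\mathcal{T}$. The RIP supplies $\|A\boldsymbol{h}\|_2 \ge \tfrac{2}{3}\|\boldsymbol{h}\|_2$, hence $\|\boldsymbol{h}\|_2 \le \tfrac{3}{2}(\|\boldsymbol{r}\|_2 + \|A\boldsymbol{x}^\perp\|_2)$ by the triangle inequality. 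Substituting both estimates turns the basic inequality into a quadratic inequality in $\|\boldsymbol{r}\|_2$, whose resolution (completing the square and using $\sqrt{a+b}\le\sqrt{a}+\sqrt{b}$) gives $\|\boldsymbol{r}\|_2 \lesssim N + \|A\boldsymbol{x}^\perp\|_2 + \sqrt{\varepsilon}$ with explicit constants. A final triangle inequality $\|\hat{\boldsymbol{x}}-\boldsymbol{x}_0\|_2 \le \|\boldsymbol{h}\|_2 + \|\boldsymbol{x}^\perp\|_2$, combined once more with the RIP, assembles the deterministic error terms $\|\boldsymbol{x}^\perp\|_2$, $6\|SDF\boldsymbol{x}^\perp\|_2$, and $\tfrac{3}{2}\sqrt{\varepsilon}$ with the stated numerical factors.

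The real work, and the step I expect to be the main obstacle, is the high-probability bound on $N$. I would view $\boldsymbol{u}\mapsto\Re\langle A\boldsymbol{u},\boldsymbol{w}\rangle$ as a centered Gaussian process and first compute its standard deviation: a short calculation (using that $\widetilde{D}$ is real diagonal and, in the complex case, that the variance of the real part of the inner product matches the real-case variance despite the doubled noise energy) gives $\sqrt{\var}\,\Re\langle A\boldsymbol{u},\boldsymbol{w}\rangle = \tfrac{\sigma}{\sqrt{m}}\|\widetilde{D}A\boldsymbol{u}\|_2$. The crux is to bound $\sup_{\boldsymbol{u}\in\mathcal{T}\cap\mathcal{S}^{n-1}}\|\widetilde{D}A\boldsymbol{u}\|_2$ by $\|\widetilde{D}\trunc(SD\boldsymbol{\alpha})\|_2$ up to a constant. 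Writing $\|\widetilde{D}A\boldsymbol{u}\|_2^2 = \sum_i (\widetilde{D})_{ii}^2\,a_i$ with $a_i := (S\boldsymbol{d})_i^2\,|\boldsymbol{f}_{j_i}^*\boldsymbol{u}|^2$, where $j_i$ is the index selected by row $i$, the definition of local coherence caps each term by $a_i \le (SD\boldsymbol{\alpha})_i^2$, while the RIP upper bound caps the total budget $\sum_i a_i = \|A\boldsymbol{u}\|_2^2 \le \tfrac{16}{9}$. Maximizing $\sum_i (\widetilde{D})_{ii}^2 a_i$ under these two constraints is a fractional knapsack: since the rows are ordered so that $S\boldsymbol{d}$, hence $(\widetilde{D})_{ii}$, is non-increasing, the optimum loads the leading coordinates to their caps first, which is precisely the unit truncation $\trunc(SD\boldsymbol{\alpha})$, and concavity of the knapsack value in the budget converts the budget $\tfrac{16}{9}$ into the constant. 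This is the one place where the ordering hypothesis and the truncation operator are essential.

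With the variance scale in hand I would finish by a standard concentration argument over the union of subspaces. On each of the $M$ subspaces of dimension at most $\ell$, $\sup_{\boldsymbol{u}\in V_k\cap\mathcal{S}^{n-1}}|\Re\langle A\boldsymbol{u},\boldsymbol{w}\rangle|$ is the norm of the projection of a Gaussian vector onto a space of dimension at most $\ell$; its expectation is at most the variance scale times $\sqrt{\ell}$, and as a Lipschitz function of $\boldsymbol{g}$ it concentrates sub-Gaussianly with the same scale. A union bound over the $M$ subspaces replaces the deviation parameter by $\sqrt{\log M}+t$ and produces the factor $(\sqrt{\ell}+\sqrt{\log M}+t)$ at confidence $1-2\exp(-t^2)$. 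Multiplying the variance scale $\tfrac{\sigma}{\sqrt{m}}\|\widetilde{D}\trunc(SD\boldsymbol{\alpha})\|_2$ by this factor bounds $N$, and substituting into the deterministic estimate above yields the theorem; the absolute constants $9$, $6$, and $\tfrac{3}{2}$ are then a matter of bookkeeping through the knapsack constant, the RIP constant $\tfrac{1}{3}$, and the quadratic-inequality resolution.
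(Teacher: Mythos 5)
Your proposal is correct and follows essentially the same route as the paper: the same basic inequality from near-optimality of $\hat{\boldsymbol{x}}$, the same RIP-driven quadratic inequality, and---crucially---the same identification of the variance scale of the noise process with a fractional-knapsack linear program over the squared entries whose optimizer is the unit truncation $\trunc(SD\boldsymbol{\alpha})$ (this is exactly the paper's operator-norm lemma for $\widetilde{D}\proj_{\bar{\mathcal{U}}}$), followed by Gaussian concentration and a union bound over the $M$ subspaces. The only deviations are cosmetic and affect just the absolute constants: you resolve the quadratic in $\lVert SDF(\hat{\boldsymbol{x}}-\boldsymbol{x}_0)\rVert_2$ rather than directly in $\lVert\hat{\boldsymbol{x}}-\proj_{\mathcal{Q}}\boldsymbol{x}_0\rVert_2$ as the paper does (which is why the paper lands on $9$, $6$, $\tfrac{3}{2}$ more cleanly), and you run the knapsack with the RIP upper bound $\tfrac{16}{9}$ as the budget instead of normalizing to the image sphere and inflating the coherence caps by $\tfrac{3}{2}$ via the RIP lower bound, a parameterization that also absorbs the complex case without the paper's separate paired-coordinate lemma.
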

We defer~\hyperlink{loc:signal_recovery_with_subsampled_unitary_matrix_with_gaussian_noise_on_union_of_subspaces.proof}{the proof}
to \Cref{loc:body.proofs.signal_recovery_with_denoising}.
Notice that the noise error term in~\Cref{loc:signal_recovery_with_subsampled_unitary_matrix_with_gaussian_noise_on_union_of_subspaces.statement} exhibits a denoising effect through its dependence on the factor of $\frac{1}{\sqrt{m}}$.

The factor $\|\widetilde{D}\trunc(SD \boldsymbol{\alpha})\|_2$ admits a few simple upper bounds shown now, and a probabilistic bound with optimized sampling in  \Cref{loc:tail_on_the_noise_sensitivity_for_adapted_with:replacement_sampling.statement}.
\begin{proposition}[Bounds on the noise error]
\label{loc:simple_bound_on_the_gaussian_noise_error_factor.statement}
With any $\boldsymbol{d} \in \mathbb{R}^n_{++}$, $\boldsymbol{\alpha} \in \mathbb{R}^n_{++}$, and $S$ a fixed $m  \times  n$ sampling matrix, we have that
\begin{equation}
\label{eq:first:trunc}
\|\widetilde{D}\trunc(SD \boldsymbol{\alpha})\|_2 \leq  \max(S \boldsymbol{d}) \leq  \max(\boldsymbol{d}).
\end{equation}
Furthermore, with
$I = |\supp\trunc(SD \boldsymbol{\alpha})|$,
\begin{equation}
\label{eq:second:trunc}
\|\widetilde{D}\trunc(SD \boldsymbol{\alpha})\|_2  \le \|(SD^2 \boldsymbol{\alpha})|_{[I]}\|_2.
\end{equation}
\end{proposition}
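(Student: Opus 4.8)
The plan is to reduce both inequalities to coordinatewise comparisons, exploiting two structural facts: that $\trunc$ returns a unit vector supported on a leading block of coordinates, and that the preconditioner $\widetilde{D}$ acts diagonally. Write $\boldsymbol{v} := SD\boldsymbol{\alpha}$ and let $I = |\supp\trunc(\boldsymbol{v})|$ be the unit-truncation index, so $\trunc(\boldsymbol{v})$ agrees with $\boldsymbol{v}$ on coordinates $i<I$, equals $\sqrt{1-\|\boldsymbol{v}|_{[I-1]}\|_2^2}$ at $i=I$, and vanishes for $i>I$. Since $\boldsymbol{d},\boldsymbol{\alpha}\in\mathbb{R}^n_{++}$ and $S$ has nonnegative entries, every object in play (the vector $\boldsymbol{v}$, the diagonal of $\widetilde{D}$, and $SD^2\boldsymbol{\alpha}$) has nonnegative entries, so all comparisons below may be made coordinatewise and then squared and summed.

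For \eqref{eq:first:trunc} I would use that $\widetilde{D}=\Diag(S\boldsymbol{d})$ is diagonal, hence $\|\widetilde{D}\|$ equals its largest diagonal entry $\max(S\boldsymbol{d})$, while $\trunc(\boldsymbol{v})$ has unit norm by the defining property of the truncation, namely $\|\trunc(\boldsymbol{v})\|_2^2 = \|\boldsymbol{v}|_{[I-1]}\|_2^2 + \bigl(1-\|\boldsymbol{v}|_{[I-1]}\|_2^2\bigr) = 1$. Therefore $\|\widetilde{D}\trunc(\boldsymbol{v})\|_2 \le \|\widetilde{D}\|\,\|\trunc(\boldsymbol{v})\|_2 = \max(S\boldsymbol{d})$. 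The final inequality $\max(S\boldsymbol{d})\le\max(\boldsymbol{d})$ then follows because the diagonal of $\widetilde{D}$ consists of entries of $\boldsymbol{d}$ selected by $S$, whose maximum cannot exceed $\max(\boldsymbol{d})$.

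For \eqref{eq:second:trunc} the key observation is the preconditioning identity $\widetilde{D}S = SD$ recorded in \Cref{loc:setup_of_signal_recovery_with_subsampled_unitary_measurements_and_gaussian_noise.for_denoising_paper}, which gives $\widetilde{D}\boldsymbol{v} = \widetilde{D}SD\boldsymbol{\alpha} = SD^2\boldsymbol{\alpha}$. I would then dominate $\trunc(\boldsymbol{v})$ by $\boldsymbol{v}$ coordinatewise: for $i<I$ the two agree, for $i>I$ the truncation is zero, and at $i=I$ one has $\trunc(\boldsymbol{v})_I\le v_I$ because $\|\boldsymbol{v}|_{[I-1]}\|_2^2 < 1 \le \|\boldsymbol{v}|_{[I]}\|_2^2$ forces $\trunc(\boldsymbol{v})_I^2 = 1-\|\boldsymbol{v}|_{[I-1]}\|_2^2 \le v_I^2$. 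Since $\widetilde{D}$ has nonnegative diagonal, $0\le(\widetilde{D}\trunc(\boldsymbol{v}))_i \le (\widetilde{D}\boldsymbol{v})_i = (SD^2\boldsymbol{\alpha})_i$ for $i\le I$ and $(\widetilde{D}\trunc(\boldsymbol{v}))_i = 0$ for $i>I$; squaring and summing over $i\in[I]$ yields $\|\widetilde{D}\trunc(\boldsymbol{v})\|_2^2 \le \|(SD^2\boldsymbol{\alpha})|_{[I]}\|_2^2$, as claimed.

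The only step demanding genuine care is the boundary coordinate $i=I$, where the truncation clips the vector to restore unit norm: here one must establish the magnitude relation $\trunc(\boldsymbol{v})_I\le v_I$ instead of equality, and this is precisely where the defining inequalities $\|\boldsymbol{v}|_{[I-1]}\|_2 < 1 \le \|\boldsymbol{v}|_{[I]}\|_2$ of the truncation index enter. Everything else is bookkeeping: nonnegativity of all entries (so that coordinatewise domination transfers to the squared $\ell_2$ norms) and the identification of $\supp\trunc(\boldsymbol{v})$ with the leading block $[I]$. I would also remark in passing that $\trunc$ is well-defined only when $\|\boldsymbol{v}\|_2\ge 1$, which holds in the setting at hand.
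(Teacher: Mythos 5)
Your proof is correct and follows essentially the same route as the paper's: the first bound is the paper's convex-combination-bounded-by-its-maximal-element argument in operator-norm clothing (both rest on $\|\trunc(SD\boldsymbol{\alpha})\|_2=1$ and on the diagonal entries of $\widetilde{D}$ being entries of $S\boldsymbol{d}$), and the second is exactly the paper's coordinatewise domination of $\trunc(SD\boldsymbol{\alpha})$ by $(SD\boldsymbol{\alpha})|_{[I]}$, with the boundary coordinate $i=I$ worked out explicitly. No gaps.
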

\begin{proof}[\hypertarget{loc:simple_bound_on_the_gaussian_noise_error_factor.proof}Proof of \Cref{loc:simple_bound_on_the_gaussian_noise_error_factor.statement}]

We write
\begin{displaymath}
\|\widetilde{D}\trunc(SD \boldsymbol{\alpha})\|_2  = \sqrt{\sum_{i = 1}^I (S \boldsymbol{d})_i^2 \trunc(SD \boldsymbol{\alpha})_i^2}.
\end{displaymath}
Under the square root, we find a convex combination of the squared entries of $S \boldsymbol{d}$ with convex coefficients $\trunc(SD \boldsymbol{\alpha})^{.2}$. The first bound in \Cref{eq:first:trunc} follows from bounding the convex combination by the size of the maximal element. To establish \Cref{eq:second:trunc}, it suffices to check that $(SD \boldsymbol{\alpha})|_{[I]}$ dominates the first $I$ entries of $\trunc(SD \boldsymbol{\alpha})$.
\end{proof}
%The results in this section provides sufficient conditions for signal recovery to hold. 
% Note that the quantity $B$ depends on $S$, so the recovery guarantees are conditional on the sampling matrix. 
% We will provide a bound which does not have this limitation in the setting where the probabilities
% are optimized so as to minimize the complexity function $\mu$ of [[With-replacement complexity]].
\subsection{Variable density sampling}
\label{loc:body.denoising_for_subsampled_unitary_matrices.variable_density_sampling}
We sample measurements with replacement according to some probability density
vector $\boldsymbol{p}$, as described in~\Cref{loc:sampling_matrix_with_replacement.statement}.
We quantify how well suited a given sampling probability vector $\boldsymbol{p}$
is to a prior set with the following function.
\begin{definition}[Sampling complexity function]
\label{loc:with:replacement_complexity.statement}
For any $\boldsymbol{\alpha} \in \mathbb{R}^n_{++}$ and $\boldsymbol{p} \in (0, 1]^n \cap \Delta^{n-1}$, we define
\begin{displaymath}
\mu(\boldsymbol{\alpha}, \boldsymbol{p}) := \max_{j \in [n]} \frac{\alpha_j}{\sqrt{ p_{j} }}.
\end{displaymath}
\end{definition}
This function appears as a factor in the sample complexity of~\Cref{loc:cs_with_replacement_and_with_denoising_on_unions_of_subspaces.statement}.
\begin{lemma}[Restricted isometry property on unions of subspaces]
\label{loc:rip_of_with:replacement_non:uniform_sampling_matrix_on_a_union_of_subspaces.statement}
Let $\mathcal{T} \subseteq \measfield^n$ be a union of at most $M$ subspaces,
each with dimension bounded by $\ell$ as in \Cref{loc:setup_of_signal_recovery_with_subsampled_unitary_measurements_and_gaussian_noise.for_denoising_paper}. Suppose $F$, $\boldsymbol{\alpha}$, and $\boldsymbol{p}$ are also as in \Cref{loc:setup_of_signal_recovery_with_subsampled_unitary_measurements_and_gaussian_noise.for_denoising_paper}, 
and $D = \Diag(\boldsymbol{d})$ where $d_i = (n p_i)^{- 1 / 2}$. 
%and $D$ is as in \Cref{loc:optimized_with:replacement_cs_on_union_of_subspaces.statement}. 
% Consider a unitary matrix $F \in\measfield^{n \times n}$ with local coherences $\boldsymbol{\alpha} \in \mathbb{R}_{++}^n$ with respect to $\mathcal{T}$, and let $\boldsymbol{p} \in (0,1)^n \cap \Delta^{n-1}$. 
Let $\mu$ denote the complexity function from~\Cref{loc:with:replacement_complexity.statement}. For any $t > 0$, if the number of measurements $m$ satisfies
\begin{displaymath}
m \gtrsim \mu^2(\boldsymbol{\alpha}, \boldsymbol{p})(\log \ell + \log M + t^2),
\end{displaymath}
then with probability at least $1-2\exp(-t^2)$, the matrix $SDF$ satisfies the RIP on $\mathcal{T}$, where $S$ is the $m \times n$ with-replacement sampling matrix associated with the probability vector $\boldsymbol{p}$.
\end{lemma}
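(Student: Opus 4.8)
The plan is to read $\|SDF\boldsymbol{u}\|_2^2$ as a quadratic form in a random Hermitian matrix built from an average of independent rank-one terms, to prove that this matrix is close to the identity on each of the $M$ constituent subspaces by a matrix Chernoff inequality, and to close with a union bound over the subspaces. First I would unpack the sampling structure: writing the $i$-th row of $S$ as $\boldsymbol{s}_i^* = \sqrt{n/m}\,\boldsymbol{e}_{j_i}^*$ with $j_i$ drawn i.i.d.\ from $\boldsymbol{p}$, the choice $d_i = (n p_i)^{-1/2}$ makes the $i$-th row of $SDF$ equal to $(m p_{j_i})^{-1/2}\boldsymbol{f}_{j_i}^*$. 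Hence $\|SDF\boldsymbol{u}\|_2^2 = \boldsymbol{u}^* H \boldsymbol{u}$ with
\[
H := (SDF)^*(SDF) = \frac{1}{m}\sum_{i=1}^m \frac{1}{p_{j_i}}\boldsymbol{f}_{j_i}\boldsymbol{f}_{j_i}^*.
\]
Since $F$ is unitary, $\sum_{j}\boldsymbol{f}_j\boldsymbol{f}_j^* = F^*F = I_n$, so $\mathbb{E}[H] = \frac{1}{m}\sum_i \sum_j p_j\, p_j^{-1}\boldsymbol{f}_j\boldsymbol{f}_j^* = I_n$. Because $\big|\,\|SDF\boldsymbol{u}\|_2 - 1\big| \le \big|\,\|SDF\boldsymbol{u}\|_2^2 - 1\big| = |\boldsymbol{u}^*H\boldsymbol{u}-1|$ for every unit $\boldsymbol{u}$, it suffices to show $\sup_{\boldsymbol{u}\in\mathcal{T}\cap\mathbb{S}^{n-1}}|\boldsymbol{u}^*H\boldsymbol{u}-1|\le \tfrac13$ to obtain the RIP.

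Next I would fix one subspace $V_k$ (with $\dim V_k \le \ell$) and reduce the supremum over $V_k\cap\mathbb{S}^{n-1}$ to an eigenvalue statement for the restricted operator. Letting $P_k$ be the orthogonal projection onto $V_k$, the summands $X_i := \frac{1}{m p_{j_i}}(P_k\boldsymbol{f}_{j_i})(P_k\boldsymbol{f}_{j_i})^*$ are independent, positive semidefinite, and satisfy $\sum_i \mathbb{E}[X_i] = P_k$, the identity on $V_k$. The essential \emph{boundedness} estimate is that the operator norm of $X_i$ on $V_k$ equals $\frac{1}{m p_{j_i}}\sup_{\boldsymbol{u}\in V_k\cap\mathbb{S}^{n-1}}|\boldsymbol{f}_{j_i}^*\boldsymbol{u}|^2 \le \frac{\alpha_{j_i}^2}{m\, p_{j_i}} \le \frac{\mu^2(\boldsymbol{\alpha},\boldsymbol{p})}{m}$, where the first inequality uses $V_k\cap\mathbb{S}^{n-1}\subseteq \mathcal{T}\cap B_2$ together with the definition of the local coherence $\alpha_{j_i}$, and the second is the definition of $\mu$. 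This is exactly the place where the alignment of $F$ with the prior, encoded through $\boldsymbol{\alpha}$ and tempered by the sampling density $\boldsymbol{p}$, enters the complexity.

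With boundedness in hand I would apply the matrix Chernoff inequality to the $\ell$-dimensional positive-semidefinite sum $\sum_i X_i$ on $V_k$ (the matrices are Hermitian, so the complex case is covered): since the mean is the identity on $V_k$ and each summand has norm at most $R=\mu^2/m$, both $\lambda_{\max}$ and $\lambda_{\min}$ deviate from $1$ by more than a fixed constant (chosen so that $|\boldsymbol{u}^*H\boldsymbol{u}-1|\le\tfrac13$) only with probability at most $2\ell\exp(-c\,m/\mu^2)$ for an absolute constant $c$. A union bound over the $M$ subspaces yields failure probability at most $2M\ell\exp(-c\,m/\mu^2)$, and requiring this to be at most $2\exp(-t^2)$ gives precisely $m \gtrsim \mu^2(\boldsymbol{\alpha},\boldsymbol{p})\big(\log\ell + \log M + t^2\big)$, as claimed.

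The main point that makes this clean—rather than a genuine obstacle—is the union-of-subspaces geometry: because each $V_k$ is a true subspace, the RIP over $V_k\cap\mathbb{S}^{n-1}$ collapses to a single operator-eigenvalue bound, so no covering net or generic chaining argument over the sphere is needed, and the dimension $\ell$ enters only as the ambient dimension in the matrix Chernoff bound (producing the $\log\ell$ term). I expect the only genuine care to be (i) verifying the boundedness constant is exactly $\mu^2/m$ through the local-coherence definition, including the real-subspace/complex-$F$ embedding flagged after \Cref{loc:local_coherence.statement}, and (ii) tracking the absolute constants in the matrix Chernoff tail so that the squared-norm deviation translates into the RIP constant $\tfrac13$.
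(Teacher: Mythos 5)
Your proposal is correct, and its architecture mirrors the paper's: reduce to a single $\ell$-dimensional subspace via the projection $P_k$, recognize $(SDF)^*(SDF)$ restricted to that subspace as a normalized sum of $m$ i.i.d.\ rank-one terms whose operator norms are controlled by $\alpha_{j_i}^2/(m p_{j_i}) \le \mu^2/m$, apply a matrix concentration inequality, and union bound over the $M$ subspaces. The genuine difference is the concentration tool: the paper centers the summands and invokes matrix Bernstein, which forces an explicit variance computation ($\sigma^2 \le \mu^2/m$) and a ``square-root trick'' to pass from $\|SDF\boldsymbol{x}\|_2^2$ to $\|SDF\boldsymbol{x}\|_2$, but in exchange yields a quantitative two-sided deviation bound of order $\frac{\mu}{\sqrt{m}}(\sqrt{\log \ell}+t)$ (the paper's single-subspace lemma) that is reused elsewhere; you instead keep the summands positive semidefinite and use matrix Chernoff, which dispenses with the variance step, and your elementary observation $|a-1|\le|a^2-1|$ for $a\ge 0$ suffices because the target RIP constant $\tfrac13$ is fixed. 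Both routes land on the same sample complexity. The one point you flag but should not underestimate is the complex-$F$/real-subspace case: the paper replaces $\boldsymbol{v}_i\boldsymbol{v}_i^*$ by its real part $\mathcal{R}[\boldsymbol{v}_i\boldsymbol{v}_i^*]$ before concentrating, and for your route you would likewise need to note that $\mathcal{R}[\boldsymbol{v}\boldsymbol{v}^*]=\Re(\boldsymbol{v})\Re(\boldsymbol{v})^{T}+\Im(\boldsymbol{v})\Im(\boldsymbol{v})^{T}$ is still positive semidefinite with the same quadratic form on real vectors, so matrix Chernoff remains applicable with the same bound $R=\mu^2/m$.
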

The proof to \Cref{loc:rip_of_with:replacement_non:uniform_sampling_matrix_on_a_union_of_subspaces.statement} can be found in the appendix, in \Cref{loc:body.proofs.rip_of_non:uniformly_subsampled_matrices}. There, we carefully treat the case of potentially complex matrices acting on real subspaces. The rest of the proof is standard, making use of the matrix Bernstein inequality.

Combining \Cref{loc:rip_of_with:replacement_non:uniform_sampling_matrix_on_a_union_of_subspaces.statement} with \Cref{loc:signal_recovery_with_subsampled_unitary_matrix_with_gaussian_noise_on_union_of_subspaces.statement} yields a compressed sensing result
for variable density sampling.
\begin{theorem}[Compressed sensing with variable density sampling]
\label{loc:cs_with_replacement_and_with_denoising_on_unions_of_subspaces.statement}
Under~\Cref{loc:setup_of_signal_recovery_with_subsampled_unitary_measurements_and_gaussian_noise.for_denoising_paper},
let $\mu$ be the complexity function defined in~\Cref{loc:with:replacement_complexity.statement},
and let $\delta > 0$. Suppose that
\begin{displaymath}
m \gtrsim \mu^2(\boldsymbol{\alpha}, \boldsymbol{p}) \left( \log \ell + \log M + \log \frac{1}{\delta} \right).
\end{displaymath}
Generate $S$, the $m \times n$ with-replacement sampling matrix associated with $\boldsymbol{p}$, and reorder its rows so that $S \boldsymbol{d}$ is with non-increasing entries. Let $D = \Diag(\boldsymbol{d})$ and $\widetilde{D} = \Diag(S \boldsymbol{d})$.
Then, with probability at least $1-\delta$, the following holds.

For any $\boldsymbol{x}_0  \in \field^n$, with $\varepsilon, \hat{\boldsymbol{x}}, \boldsymbol{x}^{\perp}$ as in~\Cref{loc:setup_of_signal_recovery_with_subsampled_unitary_measurements_and_gaussian_noise.for_denoising_paper}, we have that
\begin{align*}
\lVert \hat{\boldsymbol{x}}- \boldsymbol{x}_0\rVert_2 \leq 9\frac{ \sigma}{\sqrt{ m }}  \|\widetilde{D}\trunc(SD \boldsymbol{\alpha})\|_2  \left( \sqrt{ \ell } + \sqrt{2 \log M} + \sqrt{\log \frac{4}{\delta}}\right)&\\
+\lVert \boldsymbol{x}^\perp\rVert + 6\lVert SDF\boldsymbol{x}^\perp\rVert_{2}  + \frac{3}{2}\sqrt{ \varepsilon }.
\end{align*}
\end{theorem}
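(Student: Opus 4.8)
The plan is to obtain \Cref{loc:cs_with_replacement_and_with_denoising_on_unions_of_subspaces.statement} as a composition of the two preceding results through a union bound, exploiting that the randomness in the with-replacement sampling matrix $S$ is independent of the randomness in the Gaussian noise $\boldsymbol{g}$. Concretely, \Cref{loc:rip_of_with:replacement_non:uniform_sampling_matrix_on_a_union_of_subspaces.statement} controls the (random) event that $SDF$ satisfies the RIP on $\mathcal{T}$, while \Cref{loc:signal_recovery_with_subsampled_unitary_matrix_with_gaussian_noise_on_union_of_subspaces.statement} controls the error bound on the noise event, treating $S$ as a fixed matrix that already satisfies the RIP. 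Since the noise is independent of $S$, I would condition on $S$ and glue the two statements together.

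First I would apply \Cref{loc:rip_of_with:replacement_non:uniform_sampling_matrix_on_a_union_of_subspaces.statement} with threshold $t_R = \sqrt{\log(4/\delta)}$. The hypothesis $m \gtrsim \mu^2(\boldsymbol{\alpha}, \boldsymbol{p})(\log \ell + \log M + \log(1/\delta))$ then implies, after adjusting the hidden absolute constant, that $m \gtrsim \mu^2(\boldsymbol{\alpha}, \boldsymbol{p})(\log \ell + \log M + t_R^2)$, so with probability at least $1 - 2\exp(-t_R^2) = 1 - \delta/2$ over the draw of $S$, the matrix $SDF$ satisfies the RIP on $\mathcal{T}$. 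Call this event $E_{\mathrm{RIP}}$. I would next observe that reordering the rows of $S$ so that $S\boldsymbol{d}$ is non-increasing is a permutation determined by the realized $S$; it leaves $\|SDF\boldsymbol{u}\|_2$ unchanged for every $\boldsymbol{u}$, hence preserves $E_{\mathrm{RIP}}$, and it acts on the noise only by permuting the i.i.d.\ entries of $\boldsymbol{g}$, which leaves the conditional law of $\boldsymbol{g}$ given $S$ equal to $\mathcal{N}(0, I_m)$. This is what bridges the with-replacement $S$ produced by the RIP lemma to the reordered deterministic $S$ required by the recovery theorem.

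Conditioned on any realization of $S$ in $E_{\mathrm{RIP}}$, the matrix $S$ is deterministic with the required RIP and non-increasing $S\boldsymbol{d}$, so \Cref{loc:signal_recovery_with_subsampled_unitary_matrix_with_gaussian_noise_on_union_of_subspaces.statement} applies with threshold $t_N = \sqrt{\log(4/\delta)}$, yielding the stated error bound with conditional probability at least $1 - 2\exp(-t_N^2) = 1 - \delta/2$ over the noise. Writing $\mathbb{P}(\text{bound fails}) \le \mathbb{P}(E_{\mathrm{RIP}}^{c}) + \mathbb{E}_S[\indicator_{E_{\mathrm{RIP}}}\, \mathbb{P}(\text{bound fails}\mid S)] \le \delta/2 + \delta/2 = \delta$ gives the claimed probability. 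The noise factor, $\sqrt{\ell} + \sqrt{\log M} + t_N = \sqrt{\ell} + \sqrt{\log M} + \sqrt{\log(4/\delta)}$, is bounded by the stated $\sqrt{\ell} + \sqrt{2\log M} + \sqrt{\log(4/\delta)}$, while the $S$-dependent quantities $\|\widetilde{D}\trunc(SD\boldsymbol{\alpha})\|_2$ and $\|SDF\boldsymbol{x}^\perp\|_2$ pass through unchanged.

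I expect the only genuine obstacle to be the conditioning step: one must verify that the two results draw on independent sources of randomness and that the data-dependent row reordering is benign for both the RIP event (invariant under row permutations) and the noise law (exchangeability of the i.i.d.\ Gaussian entries). Everything else is bookkeeping in splitting the failure budget $\delta$ into two halves and checking that the sample-complexity constant absorbs the replacement of $\log(1/\delta)$ by $\log(4/\delta)$.
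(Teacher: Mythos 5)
Your proposal is correct and follows essentially the same route as the paper's own proof: apply \Cref{loc:rip_of_with:replacement_non:uniform_sampling_matrix_on_a_union_of_subspaces.statement} and \Cref{loc:signal_recovery_with_subsampled_unitary_matrix_with_gaussian_noise_on_union_of_subspaces.statement} each with failure budget $\delta/2$ (hence $t_1 = t_2 = \sqrt{\log(4/\delta)}$) and combine via a union bound, conditioning on the realization of $S$. Your explicit verification that the data-dependent row reordering preserves both the RIP event and the conditional Gaussian law of the noise is a point the paper leaves implicit, but it does not change the argument.
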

The proof combines~\Cref{loc:rip_of_with:replacement_non:uniform_sampling_matrix_on_a_union_of_subspaces.statement} and~\Cref{loc:signal_recovery_with_subsampled_unitary_matrix_with_gaussian_noise_on_union_of_subspaces.statement} with a union bound; see \hyperlink{loc:cs_with_replacement_and_with_denoising_on_unions_of_subspaces.proof}{the proof} in~\Cref{loc:appendix.proofs_by_union_bounds}.
\subsection{Optimized sampling}
\label{loc:body.denoising_for_subsampled_unitary_matrices.optimized_sampling}
The complexity function of \Cref{loc:with:replacement_complexity.statement}
specifies how the sample complexity in 
\Cref{loc:rip_of_with:replacement_non:uniform_sampling_matrix_on_a_union_of_subspaces.statement}
depends jointly on the probability vector and the local coherences. For any
fixed vector of local coherences, it is then natural to ask what is the
 probability vector which minimizes the complexity function.
\begin{lemma}[Optimizing the sampling probabilities]
\label{loc:optimize_the_with:replacement_sampling_probabilities.statement}
For a fixed vector $\boldsymbol{\alpha} \in \mathbb{R}_{++}$ and with the function $\mu$ as in~\Cref{loc:with:replacement_complexity.statement}, let $\boldsymbol{p}'$ be the optimized probability vector as in~\Cref{loc:optimized_sampling_vector.statement}. Then
\begin{displaymath}
\min_{\boldsymbol{p} \in (0,1)^n \cap \Delta^{n-1}}\mu(\boldsymbol{\alpha}, \boldsymbol{p})  = \mu(\boldsymbol{\alpha}, \boldsymbol{p}')  =  \|\boldsymbol{\alpha}\|_2,
\end{displaymath}
where $\boldsymbol{p}'$ is the unique minimizer.
\end{lemma}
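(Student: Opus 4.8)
The plan is to establish the two equalities separately and then argue uniqueness. First I would verify directly that $\mu(\boldsymbol{\alpha}, \boldsymbol{p}') = \|\boldsymbol{\alpha}\|_2$. Since every entry $\alpha_i$ is strictly positive, substituting $p'_i = \alpha_i^2 / \|\boldsymbol{\alpha}\|_2^2$ gives $\alpha_i / \sqrt{p'_i} = \|\boldsymbol{\alpha}\|_2$ for \emph{every} index $i$, so the maximum over $i$ is exactly $\|\boldsymbol{\alpha}\|_2$. The vector $\boldsymbol{p}'$ is feasible because its entries are positive (as the $\alpha_i$ are positive) and sum to $1$, so it lies in $\Delta^{n-1} \cap (0,1)^n$ and attains the claimed value.

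Next I would prove the matching lower bound $\mu(\boldsymbol{\alpha}, \boldsymbol{p}) \geq \|\boldsymbol{\alpha}\|_2$ for an arbitrary feasible $\boldsymbol{p}$. Writing $c := \mu(\boldsymbol{\alpha}, \boldsymbol{p}) = \max_{j \in [n]} \alpha_j / \sqrt{p_j}$, the definition of the maximum gives $\alpha_j^2 \leq c^2 p_j$ for each $j$. Summing over $j \in [n]$ and using the simplex constraint $\sum_j p_j = 1$ yields $\|\boldsymbol{\alpha}\|_2^2 = \sum_j \alpha_j^2 \leq c^2$, hence $c \geq \|\boldsymbol{\alpha}\|_2$. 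Combined with the first step, this establishes that the minimum equals $\|\boldsymbol{\alpha}\|_2$ and is attained at $\boldsymbol{p}'$.

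Finally, for uniqueness I would track the equality condition in the summing argument. Equality in $\|\boldsymbol{\alpha}\|_2^2 \leq c^2 \sum_j p_j$ forces each per-coordinate inequality $\alpha_j^2 \leq c^2 p_j$ to be tight, i.e. $\alpha_j^2 = c^2 p_j$ for every $j$. Imposing $\sum_j p_j = 1$ then pins down $c^2 = \|\boldsymbol{\alpha}\|_2^2$ and $p_j = \alpha_j^2 / \|\boldsymbol{\alpha}\|_2^2 = p'_j$, so $\boldsymbol{p}'$ is the unique minimizer. There is no substantial obstacle here; the only point requiring slight care is this uniqueness step, where one must observe that tightness of the \emph{aggregate} bound propagates back to tightness of each individual inequality, which is valid precisely because all the terms involved are nonnegative.
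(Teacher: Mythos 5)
Your proof is correct and complete. Note that the paper itself \emph{omits} a proof of this lemma, remarking only that ``any variation away from $\boldsymbol{p}'$ increases the objective function'' --- i.e.\ it gestures at a perturbative argument in which one shows that decreasing any coordinate $p_j$ below $p'_j$ forces $\alpha_j/\sqrt{p_j}$ above $\|\boldsymbol{\alpha}\|_2$. Your route is different and arguably tidier: you compute the value at $\boldsymbol{p}'$ directly (all $n$ ratios are equalized at $\|\boldsymbol{\alpha}\|_2$), obtain the matching lower bound for arbitrary feasible $\boldsymbol{p}$ by summing the coordinatewise inequalities $\alpha_j^2 \le c^2 p_j$ against the simplex constraint, and then extract uniqueness from the equality case of that aggregate bound, correctly noting that a sum of nonnegative slacks $c^2 p_j - \alpha_j^2$ vanishing forces each slack to vanish. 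This yields the optimal value, the minimizer, and uniqueness in one pass, whereas the perturbative sketch handles only optimality and would need a separate observation for the value $\|\boldsymbol{\alpha}\|_2$. The only pedantic caveat, which the paper shares, is that $\boldsymbol{p}' \in (0,1)^n$ implicitly requires $n \ge 2$; for $n=1$ the unique probability vector has $p_1 = 1$, which lies outside the stated open feasible set.
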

We omit the proof, as it is straightforward to show that any variation away from
$\boldsymbol{p}'$ increases the objective function.

When the sampling scheme is adapted to the prior set as in \Cref{loc:optimize_the_with:replacement_sampling_probabilities.statement}, the noise factor $\|\widetilde{D}\trunc(SD \boldsymbol{\alpha})\|_2$ admits a simple upper bound.
\begin{lemma}[Bound on the noise error for optimized sampling]
\label{loc:tail_on_the_noise_sensitivity_for_adapted_with:replacement_sampling.statement}
For any fixed local coherence vector $\boldsymbol{\alpha} \in \mathbb{R}^n_{++}$, let $S$ be a $m \times n$ with-replacement sampling matrix associated with the optimized probability vector $\boldsymbol{p}'$. Let $D = \Diag(\boldsymbol{d})$ where $d_i = (n p'_i)^{- 1 / 2}$. Then for any $t>0$
\begin{displaymath}
\|\widetilde{D} \trunc(SD \boldsymbol{\alpha})\|_2  \le \|\boldsymbol{\alpha}\|_2
\min\left( \frac{1}{\sqrt{t}},  \frac{1}{\sqrt{n} \min(\boldsymbol{\alpha})}\right)
\end{displaymath}
with probability at least $1-t$.
\end{lemma}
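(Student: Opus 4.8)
The plan is to treat the two arguments of the minimum separately: the second argument, $\|\boldsymbol{\alpha}\|_2/(\sqrt{n}\min(\boldsymbol{\alpha}))$, comes for free and deterministically from the crude bound in \Cref{loc:simple_bound_on_the_gaussian_noise_error_factor.statement}, whereas the first argument, $\|\boldsymbol{\alpha}\|_2/\sqrt{t}$, requires a single second-moment computation followed by Markov's inequality. Conveniently, both bounds in \Cref{loc:simple_bound_on_the_gaussian_noise_error_factor.statement} are phrased on the right-hand side in terms of $S$, $D$, $\boldsymbol{d}$ and $\boldsymbol{\alpha}$ rather than $\widetilde{D}$, so the whole argument reduces to plugging in the optimized weights and computing.

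First I would substitute the optimized probabilities of \Cref{loc:optimized_sampling_vector.statement}. With $p'_i = \alpha_i^2/\|\boldsymbol{\alpha}\|_2^2$ and $d_i = (np'_i)^{-1/2}$ one obtains the clean identities $d_i = \|\boldsymbol{\alpha}\|_2/(\sqrt{n}\,\alpha_i)$, hence $d_i\alpha_i = \|\boldsymbol{\alpha}\|_2/\sqrt{n}$ and $d_i^2\alpha_i = \|\boldsymbol{\alpha}\|_2^2/(n\alpha_i)$. In particular $\max(\boldsymbol{d}) = \|\boldsymbol{\alpha}\|_2/(\sqrt{n}\min(\boldsymbol{\alpha}))$, so the first bound \eqref{eq:first:trunc} gives $\|\widetilde{D}\trunc(SD\boldsymbol{\alpha})\|_2 \le \max(\boldsymbol{d}) = \|\boldsymbol{\alpha}\|_2/(\sqrt{n}\min(\boldsymbol{\alpha}))$ surely, which already accounts for the second term of the minimum.

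For the probabilistic term I would invoke the second bound \eqref{eq:second:trunc}, $\|\widetilde{D}\trunc(SD\boldsymbol{\alpha})\|_2 \le \|(SD^2\boldsymbol{\alpha})|_{[I]}\|_2$, and then simply discard the truncation: restricting to the first $I$ coordinates only removes nonnegative terms, so $\|(SD^2\boldsymbol{\alpha})|_{[I]}\|_2^2 \le \|SD^2\boldsymbol{\alpha}\|_2^2$. Writing $j_1,\dots,j_m$ for the i.i.d.\ indices selected by the with-replacement matrix $S$ of \Cref{loc:sampling_matrix_with_replacement.statement}, each row contributes $(SD^2\boldsymbol{\alpha})_i^2 = \tfrac{n}{m}d_{j_i}^4\alpha_{j_i}^2 = \|\boldsymbol{\alpha}\|_2^4/(mn\,\alpha_{j_i}^2)$, whence $\|SD^2\boldsymbol{\alpha}\|_2^2 = \tfrac{\|\boldsymbol{\alpha}\|_2^4}{mn}\sum_{i=1}^m \alpha_{j_i}^{-2}$. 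The key cancellation produced by the optimized weights is $\mathbb{E}[\alpha_{j_i}^{-2}] = \sum_k p'_k\alpha_k^{-2} = \sum_k \|\boldsymbol{\alpha}\|_2^{-2} = n/\|\boldsymbol{\alpha}\|_2^2$, so summing over the $m$ rows yields $\mathbb{E}\|SD^2\boldsymbol{\alpha}\|_2^2 = \|\boldsymbol{\alpha}\|_2^2$ and therefore $\mathbb{E}\|\widetilde{D}\trunc(SD\boldsymbol{\alpha})\|_2^2 \le \|\boldsymbol{\alpha}\|_2^2$. Applying Markov's inequality to the nonnegative variable $\|\widetilde{D}\trunc(SD\boldsymbol{\alpha})\|_2^2$ at level $\|\boldsymbol{\alpha}\|_2^2/t$ gives $\|\widetilde{D}\trunc(SD\boldsymbol{\alpha})\|_2 \le \|\boldsymbol{\alpha}\|_2/\sqrt{t}$ with probability at least $1-t$; intersecting this event with the sure deterministic bound delivers the minimum of the two terms, as claimed.

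The one place that needs care is the handling of the unit truncation in the second step. The length $I = |\supp\trunc(SD\boldsymbol{\alpha})|$ and the reordering making $S\boldsymbol{d}$ non-increasing are both random and data-dependent, which would make any attempt to control the order statistics $\sum_{i\le I}\alpha_{j_{(i)}}^{-2}$ directly rather delicate. The device that sidesteps this is precisely to relax the partial sum to the full sum $\|SD^2\boldsymbol{\alpha}\|_2^2$ before taking expectations; this is lossy, but because the optimized weights make every summand's expectation independent of the index it still returns the sharp constant $\|\boldsymbol{\alpha}\|_2^2$. I would also note in passing that the truncation is well defined here, since $\|SD\boldsymbol{\alpha}\|_2 = \|\boldsymbol{\alpha}\|_2 \ge 1$: indeed the identity $d_i\alpha_i = \|\boldsymbol{\alpha}\|_2/\sqrt{n}$ forces $SD\boldsymbol{\alpha}$ to be the constant vector with entries $\|\boldsymbol{\alpha}\|_2/\sqrt{m}$, and for any unit $\boldsymbol{x}\in\mathcal{T}$ one has $\|\boldsymbol{\alpha}\|_2^2 \ge \sum_j |\boldsymbol{f}_j^*\boldsymbol{x}|^2 = \|F\boldsymbol{x}\|_2^2 = 1$.
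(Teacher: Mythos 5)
Your proposal is correct and follows essentially the same route as the paper: the deterministic bound $\max(\boldsymbol{d}) = \|\boldsymbol{\alpha}\|_2/(\sqrt{n}\min(\boldsymbol{\alpha}))$ from \eqref{eq:first:trunc}, then relaxing \eqref{eq:second:trunc} to the full vector $\|SD^2\boldsymbol{\alpha}\|_2$, computing $\mathbb{E}\|SD^2\boldsymbol{\alpha}\|_2^2 = \|\boldsymbol{\alpha}\|_2^2$ via the cancellation $\mathbb{E}[\alpha_{j_i}^{-2}] = n/\|\boldsymbol{\alpha}\|_2^2$, and finishing with Markov's inequality (the paper factors this identically as $\frac{\|\boldsymbol{\alpha}\|_2}{\sqrt{n}}\|S\boldsymbol{d}\|_2$ with $\mathbb{E}\|S\boldsymbol{d}\|_2^2 = n$). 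Your closing observation that $SD\boldsymbol{\alpha}$ is the constant vector with entries $\|\boldsymbol{\alpha}\|_2/\sqrt{m}$ and norm $\|\boldsymbol{\alpha}\|_2 \ge 1$ is a correct and welcome check that the truncation is well defined, which the paper leaves implicit.
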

As can be seen in \hyperlink{loc:tail_on_the_noise_sensitivity_for_adapted_with:replacement_sampling.proof}{the proof} of \Cref{loc:tail_on_the_noise_sensitivity_for_adapted_with:replacement_sampling.statement} in \Cref{loc:body.proofs.signal_recovery_with_denoising}, this upper bound is made possible by
cancellations which are unique to the optimized
sampling scheme, which came as a surprise to the authors.

We now have all the necessary results to prove our main result,
\Cref{loc:optimized_with:replacement_cs_on_union_of_subspaces.statement}.
Indeed,
\Cref{loc:optimized_with:replacement_cs_on_union_of_subspaces.statement}
holds when \Cref{loc:rip_of_with:replacement_non:uniform_sampling_matrix_on_a_union_of_subspaces.statement},
\Cref{loc:signal_recovery_with_subsampled_unitary_matrix_with_gaussian_noise_on_union_of_subspaces.statement},
and \Cref{loc:tail_on_the_noise_sensitivity_for_adapted_with:replacement_sampling.statement} hold simultaneously; see
\hyperlink{loc:optimized_with:replacement_cs_on_union_of_subspaces.proof}{the proof} in
\Cref{loc:appendix.proofs_by_union_bounds} for the details of the appropriate union bound.
\subsection{Applications}
\label{loc:body.denoising_for_subsampled_unitary_matrices.applications}
The assumption that the prior set $\mathcal{Q}$ is contained in a union of low-dimensional
subspaces is effective for two important examples: generative priors and sparse priors.
We begin with the generative case.

\textbf{Generative priors}
\label{loc:body.denoising_for_subsampled_unitary_matrices.applications.generative_priors}
\begin{definition}[(k,d,n)-Generative neural network~{\cite[Definition 1.1]{berkCoherenceParameterCharacterizing2022}}]
\label{loc:(kdn):generative_network.verbatim_from_coherence_paper}
  Fix the integers $2 \leq k := k_{0} \leq k_{1}, \ldots, k_{d}$ where
  $k_d := n < \infty$, and suppose for $i \in [d]$ that
  $W^{(i)} \in \reals^{k_{i}\times k_{i-1}}$. A $(k,d,n)$-generative network is
  a function $G : \reals^{k} \to \reals^{n}$ of the form
\begin{displaymath}
G(z) := W^{(d)} \sigma \left( \cdots W^{(2)} \sigma \left( W^{(1)} z \right) \right).
\end{displaymath}
\end{definition}
In \cite[Remark S2.2]{berkModeladaptedFourierSampling2023}, it is
stated that a (k,d,n)-neural network $G:\mathbb{R}^k  \to  \mathbb{R}^n$
has its range contained in $N$ subspaces each of dimension no more than
$k$, with
\begin{displaymath}
\log N  \le k \sum_{i = 1}^{d-1} \log\left( \frac{2ek_i}{k}\right).
\end{displaymath}
Then we let $\mathcal{T} = \Delta(\range(G)-\range(G))$, where $\Delta$ is the
following set expansion operator.
\begin{definition}[Piecewise linear expansion~{\cite{berkCoherenceParameterCharacterizing2022}[Definition 2.1]}]
\label{loc:piecewise_linear_expansion.statement_for_cones}
Let $\mathcal{C} \subseteq \reals^{n}$ be a union of $N$ convex cones:
$\mathcal{C} = \bigcup\limits_{i=1}^N \mathcal{C}_i$. Define the piecewise linear expansion to be
\begin{align*}
\Delta(\mathcal{C}) := \bigcup\limits_{i=1}^N \Span(\mathcal{C}_i) = \bigcup_{i=1}^N
(\mathcal{C}_i - \mathcal{C}_i).
\end{align*}
\end{definition}
By the properties of this set operator~\cite[Remark S3.1]{berkCoherenceParameterCharacterizing2022}, $\mathcal{T}$ is contained in no more than $N^2$ subspaces each
of dimension no more than $2k$. So with this $\mathcal{T}$, \Cref{loc:optimized_with:replacement_cs_on_union_of_subspaces.statement}
and \Cref{loc:cs_with_replacement_and_with_denoising_on_unions_of_subspaces.statement} apply to the generative
setting with $M = N^2$ and $\ell = 2k$. We thus improve upon previous works in generative compressed sensing
by providing denoising error bounds. We do caution, however, that our results applied to
the generative setting are not state-of-the-art in all
respects: results which depend on a
tighter notion of local coherences can be found in~\cite{adcockUnifiedFrameworkLearning2023}.
% explicit corollary: [[CS with denoising on generative priors#statement]]

\textbf{Sparse priors}

\label{loc:body.denoising_for_subsampled_unitary_matrices.applications.sparse_priors}
For a sparse prior set $\mathcal{Q}$, $\mathcal{T} = \mathcal{Q}-\mathcal{Q}$
is the set of $2k-$sparse vectors. It is a union of $M$ subspaces each of
dimension no more than $2k$, where $M$ admits the bound
\begin{displaymath}
\log M = \log \binom{n}{2k}  \le 2k \log \left( \frac{n}{2k}\right).
\end{displaymath}
While specialized treatments of sparse priors achieve tighter sample complexity bounds, our results make a significant advance over prior works by providing the first theoretical guarantees of denoising behavior in the sparse setting.
%corollary::![[Simple denoising on sparse signals#statement]]
% I think the first result is the better one.
% [[Conditional generative signal recovery with optimized with-replacement sampling]]
% Make this if necessary only
\section{Numerics}
\label{loc:body.numerics}
\subsection{Experiments with generative priors}
In this section, we conduct experiments with generative priors as introduced in \Cref{loc:(kdn):generative_network.verbatim_from_coherence_paper}.
\subsubsection*{Model and Dataset}
We train a generative model on images of the CelebFaces Attributes Dataset (CelebA). To facilitate training, we further center and crop the color images to 256 by 256, leading to 256 × 256 × 3 = 196608 pixels per image. On this dataset, we train a Realness GAN ~\cite{xiangli2020realrealquestion} with the same training setup, except that we replace the final Tanh layer with HardTanh, a linearized version of Tanh. For more details on the model training, please refer to~\cite{xiangli2020realrealquestion}.  The unitary transform, $F$, that we consider is a channel-wise concatenation of the 2D Fourier Transform.

\subsubsection*{Coherence}
As noted in~\cite{berkModeladaptedFourierSampling2023}, calculating local coherence for generative models appears computationally intractable. However, there is a straightforward way to make an effective estimate for this quantity by using a finite subset of the prior set. As in~\cite{adcockCAS4DLChristoffelAdaptive2022, berkCoherenceParameterCharacterizing2022}, we sample 5000 latent codes from a standard normal distribution and input these codes to the generative model to create a batch of images.  To compute the coherence of a certain measurement vector, we take all differences of images in this batch, normalize them, compute the absolute value of the inner product of each of them with the measurement vector, and take the largest of these values.

\subsubsection*{Signal Reconstruction}
We generate a with-replacement sampling matrix $\boldsymbol{S}$ according to the optimized probability vector $\boldsymbol{p}'$ with $p^*_i = \frac{\alpha_i^2}{\|\alpha_i\|_2}$. We generate a synthetic signal $\boldsymbol{x_0} = G(\boldsymbol{z})$ where $\boldsymbol{z}$ is a latent code taken from the standard normal distribution and $G$ is the trained generative neural net described above. We then set $\boldsymbol{b} = S F \boldsymbol{x_0} + \boldsymbol{\eta}$ where the noise vector $\boldsymbol{\eta}$ has Gaussian entries with variance $\sigma^2/m$ for various noise levels $\sigma$. We compare the performance of the signal recovery under optimized sampling to uniform sampling, which takes the probability vector $p_i =\frac{1}{n}.$

In the generative setting, the optimization problem of~\Cref{eq:opt:reconstruct} in~\Cref{loc:setup_of_signal_recovery_with_subsampled_unitary_measurements_and_gaussian_noise.for_denoising_paper} becomes 
\begin{displaymath}
\minimize_{\boldsymbol{z} \in \mathbb{R}^k} \, \left\lVert \widetilde{D} SF\boldsymbol{G(z)}-\widetilde{D}\boldsymbol{b} \right\rVert_{2}^2.
\end{displaymath}
To optimize, we use $AdamW$, with $\beta_1 = 0.99, \beta_2=0.999$, and $lr=0.0003$ for 20000 iterations, yielding an optimized latent vector $\hat{\boldsymbol{z}}$.  We validate the performance of different sampling schemes using relative recovery error (rre), where $\text{rre}(\boldsymbol{x}_0,G(\hat{\boldsymbol{z}})) = \frac{\|\boldsymbol{x}_0-G(\hat{\boldsymbol{z}})\|_2}{\|x_0\|_2}$.
We repeat this experiment 256 times for each noise level to report an average relative recovery error, as shown in~\Cref{fig:es_vs_wr}. This exhibits the significant improvement in sample complexity gained by optimizing the sampling distribution. In~\Cref{fig:denoise_main}, we estimate the slope of the recovery error in log scale with a least-squares fit in log scale, thereby estimating the dependence on $m$. We compare the values of the slopes to the rate of $1/\sqrt{m}$ in our denoising bounds. In all figures, we display the geometric mean of the data and the geometric standard error as uncertainty (which is the statistical uncertainty of the geometric mean estimator).
%\subsubsection{Results}
\begin{figure}[h]
    \centering
\includegraphics[width=\linewidth]{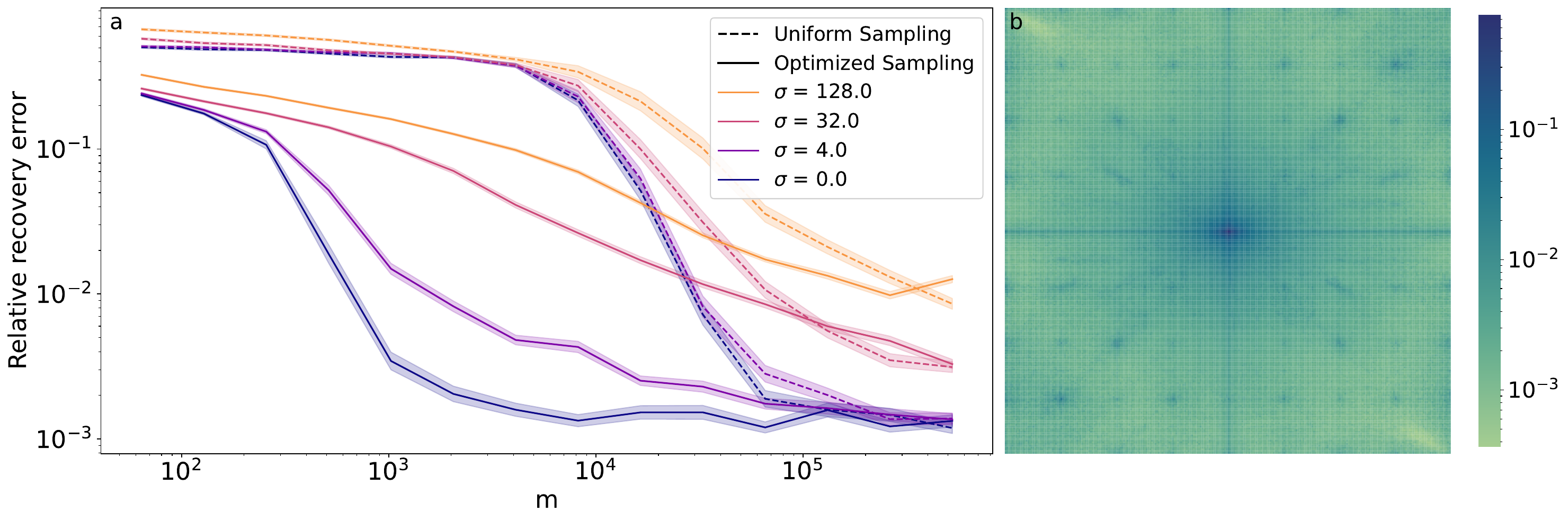} % Include the plot Denise_main
    \caption{a) Relative error for optimized sampling versus uniform sampling with generative models.
    b)  the red channel of the local coherence. } % Caption for the plot
    \label{fig:es_vs_wr}
\end{figure}
%\Cref{fig:es_vs_wr} shows the relative recovery error of optimized sampling contrasted with that of uniform sampling at four different noise levels.  This exhibits the significant improvement in sample complexity gained by optimizing the sampling distribution.

%and unifrom sampling, as proposed in ~\cite{berkCoherenceParameterCharacterizing2022} at different noise levels compared to the measurements. We recover a batch of 256 signal that is generated by the trained model, which means that the model mismatch error, $\|x^{\perp}\|$ could be down to 0. As we can see, the optimized sampling experiences an earlier phase transition, starting at $10^{2.4} \approx 250$. In the mean time, the phase transition of the uniform sampling scheme occurs at $10^{3.6}\approx 3980$.

\begin{figure}[h]
    \centering
\includegraphics[width=0.60\linewidth]{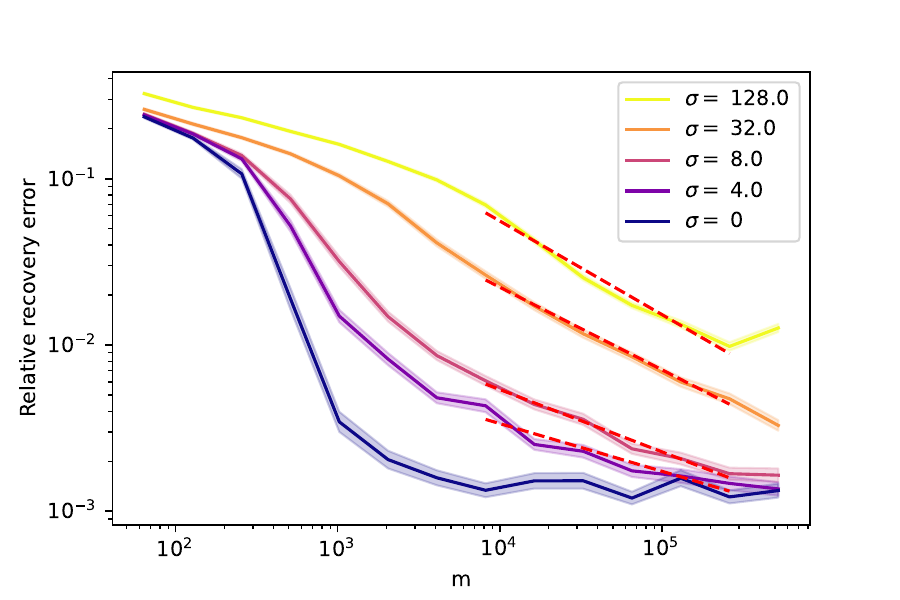} % Include the plot Denise_main
    \caption{Relative recovery error for 256 repeated experiments at each $m$ and $\sigma$. We perform least-squares fits in log space for $m \in [10^3.9, 10^5.4]$ (roughly after the phase transition, but before saturation).   The slopes of the fitted lines from top to bottom are $-0.56,-0.50,-0.37,-0.29 $ respectively.} % Caption for the plot
    \label{fig:denoise_main}
\end{figure}

% \Cref{fig:denoise_main} illustrates the denoising performance by examining the relationship between relative recovery error of signal recovered by the with replacement sampling scheme and the number of measurements. It includes curve with different noise levels, $\sigma = 0,4,8,32,128$, and the red linear lines fitted to segments of the curves to highlight trends of denoising.

% A critical observation is the behavior of the curves in the range of $10^{3.9}$ to $10^{5.4}$, namely where the red fitted lines are, for higher noise parameter $\sigma = 32,128$, we see that the lines fitted have slopes $-0.5$ and $-0.56$, as \Cref{loc:cs_with_replacement_and_with_denoising_on_unions_of_subspaces.statement} pointed out. 

\subsubsection*{Visual evidence}
In \Cref{fig: faces}, we contrast uniform sampling versus optimized sampling on real faces. We add noise directly to the signal before taking the measurements, thus using a slight variation on our noise model. The first column contains the original faces and the second column contains the ones with high noise added, $\sigma= 128$. The odd rows show the faces recovered by optimized sampling and the even rows show the faces recovered by uniform sampling. The percentage at the top of the columns is the proportion of measurements taken divided by ambient dimension.  Visually, it appears that optimized sampling recovers faces well from about 0.5-2\% percent of measurements, whereas uniform sampling recovers well from about 8\%.

\begin{figure} \label{fig: faces}
\centering
\includegraphics[width=0.70\textwidth]{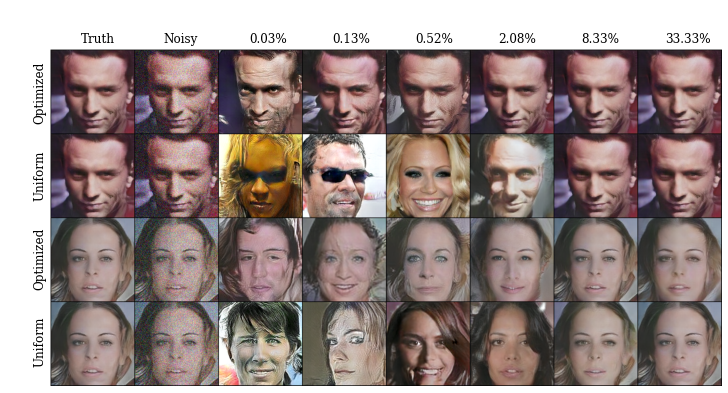}
\caption{Noisy faces recovered by optimized sampling and uniform sampling. }
\end{figure}

\subsection{Experiments with Sparse priors}
\label{loc:body.numerics.experiments_with_sparse_priors}
In~\Cref{fig:sparse_recov.pdf} we recover a signal which is sparse in the Haar wavelet basis of level $5$ from
subsampled Fourier measurements. The signal 
is obtained by resizing the cameraman image to be $256$ by $256$,
and then truncating it to its $60$
largest coefficients in the sparsity basis.

We subsample the measurements according to the optimized sampling distribution,
which we obtain from the local coherences of the Fourier measurement vectors
with respect to the vectors of the Haar basis. Note that this is not quite the
local coherences that are found in our theory, but are nonetheless a good
estimate. The local coherence with respect to the sparsity basis is
a notion of local coherence that appears in the literature for optimized
sampling on sparse signals~\cite{krahmerStableRobustSampling2014}.

To solve the optimization problem in \Cref{eq:opt:reconstruct}, we employ an algorithm that has two steps: first we run basis pursuit denoising
using SPGL1~\cite{vandenbergProbingParetoFrontier2009}. From the solution, we estimate the support to correspond to the $60$ largest wavelet coefficients.
Then, we find the best fit to the measurements on this support by solving a least-squares problem.

In \Cref{fig:sparse_recov.pdf} we follow the same procedure as \Cref{fig:denoise_main}.
\begin{figure}[h]
\centering
\includegraphics[width=0.60\textwidth]{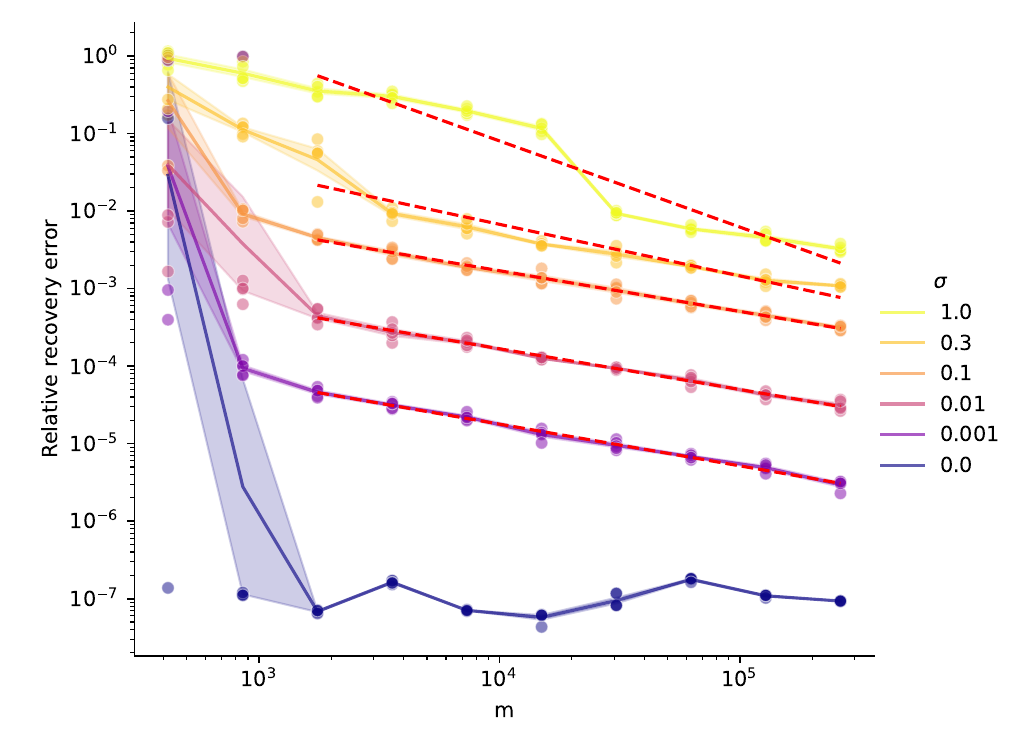}
\caption{Relative errors of five signal recovery experiments for each value of $m$ and $\sigma$. We perform least-squares fits in log space for $m>300$ (roughly after the phase transition). The fits have slopes of $-1.11$, $-0.67$, $-0.53$, $-0.53$, $-0.54$ respectively, in order of decreasing $\sigma$. The ambient dimension is $n=6.6  \cdot 10^4$, we run experiments up to $m \approx 4n$.\label{fig:sparse_recov.pdf}}
\end{figure}
\section{Noise robustness in the literature}
\label{loc:body.noise_robustness_in_the_literature}
As discussed in the introduction, we believe the noise model used in our work is more principled and realistic than what appears in prior literature. 
To discuss the different treatments of noise in earlier works, let us first consider~\cite[Theorem 2.1]{berkModeladaptedFourierSampling2023}, a compressed sensing result with an optimized sampling scheme and deterministic noise on a generative prior. 
\begin{proposition}[Generative CS with deterministic noise]
\label{loc:generative_signal_recovery_with_optimized_with:replacement_scheme_and_deterministic_noise.even_shorter}
Consider the prior set $\mathcal{Q}:= \range(G)  \subseteq \mathbb{R}^n$
for $G:\mathbb{R}^k  \to  \mathbb{R}^n$ a $(k,d,n)$-generative network as in~\Cref{loc:(kdn):generative_network.verbatim_from_coherence_paper}.
Let $F \in \mathbb{C}^{n  \times n}$ be a unitary matrix with local
coherences $\boldsymbol{\alpha} \in \mathbb{R}_{++}$ with respect to
$\mathcal{T} := \Delta(\mathcal{Q}-\mathcal{Q})  \subseteq \mathbb{C}^n$.
Fixing $\delta>0$, suppose that
\begin{displaymath}
m  \gtrsim \|\boldsymbol{\alpha}\|_2^2 \left( kd \log\left( \frac{n}{k} \right)+ \log \frac{2}{\delta}\right),
\end{displaymath}
that $S$ is an $m \times n$ \hyperref[loc:sampling_matrix_with_replacement.statement]{with-replacement sampling matrix} governed by the optimized probability vector $\boldsymbol{p}'$, and that $D := \Diag\left(\boldsymbol{d}\right)$ with $d_i = (n p_i')^{- 1 / 2}$.
Then the following holds with probability at least $1-\delta$.

For any $\boldsymbol{x}_0 \in \mathbb{R}^n$ with $\boldsymbol{x}^\perp = \boldsymbol{x}_{0} - \proj_{\range(G)} \boldsymbol{x}_0$, let $\boldsymbol{b} = SF\boldsymbol{x}_{0} + \boldsymbol{\eta}$,
where $\boldsymbol{\eta}  \in \mathbb{C}$. Then for $\hat{\boldsymbol{z}} \in \mathbb{R}^k$ satisfying
\begin{displaymath}
\left\lVert  SDFG(\hat{\boldsymbol{z}})-\widetilde{D}\boldsymbol{b} \right\rVert_{2}^2 \leq  \min_{\boldsymbol{x} \in \mathcal{Q}}\left\lVert  SDF\boldsymbol{x} - \widetilde{D}\boldsymbol{b} \right\rVert_2^2+\varepsilon
\end{displaymath}
for some $\varepsilon>0$, it holds that
\begin{align*}
\|G(\hat{\boldsymbol{z}}) - \boldsymbol{x}_0\|_2 %
  \leq \|\boldsymbol{x}^\perp\|_2 + 3\|SDF\boldsymbol{x}^\perp\|_2 + 3 \|\widetilde{D}\boldsymbol{\eta}\|_2 + \frac{3}{2}\varepsilon.
\end{align*}
\end{proposition}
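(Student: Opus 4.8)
The plan is to combine a restricted isometry guarantee for the preconditioned matrix $SDF$ with a purely deterministic propagation-of-error argument; since $\boldsymbol{\eta}$ is a fixed vector, the only randomness to control is that of the sampling matrix $S$. First I would establish that $SDF$ satisfies the \hyperref[loc:rip.statement]{RIP} on $\mathcal{T}$ with probability at least $1-\delta$. By \Cref{loc:optimize_the_with:replacement_sampling_probabilities.statement}, the optimized vector $\boldsymbol{p}'$ attains $\mu(\boldsymbol{\alpha},\boldsymbol{p}')=\|\boldsymbol{\alpha}\|_2$, so $\mu^2=\|\boldsymbol{\alpha}\|_2^2$. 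For the generative prior, $\mathcal{T}=\Delta(\range(G)-\range(G))$ is a union of at most $M=N^2$ subspaces of dimension at most $\ell=2k$, where $\log N \le k\sum_{i=1}^{d-1}\log(2ek_i/k)\lesssim kd\log(n/k)$. Substituting these quantities into \Cref{loc:rip_of_with:replacement_non:uniform_sampling_matrix_on_a_union_of_subspaces.statement} with $t=\sqrt{\log(2/\delta)}$ turns the abstract condition $m\gtrsim \mu^2(\log\ell+\log M+t^2)$ into exactly the stated sample-complexity bound, and (since $2\exp(-t^2)=\delta$) places us on an event of probability at least $1-\delta$ on which the RIP holds.

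On that event I would run the deterministic argument. Write $\hat{\boldsymbol{x}}=G(\hat{\boldsymbol{z}})$ and $\boldsymbol{x}^\star:=\proj_{\range(G)}\boldsymbol{x}_0=\boldsymbol{x}_0-\boldsymbol{x}^\perp\in\mathcal{Q}$, and use the identities $SDF=\widetilde{D}SF$ and $\widetilde{D}\boldsymbol{b}=SDF\boldsymbol{x}_0+\widetilde{D}\boldsymbol{\eta}$. Feasibility of $\boldsymbol{x}^\star$ bounds the minimum in the optimization constraint by $\|SDF\boldsymbol{x}^\perp+\widetilde{D}\boldsymbol{\eta}\|_2^2$, so the near-optimality of $\hat{\boldsymbol{x}}$ gives, after taking square roots,
\[
\|SDF(\hat{\boldsymbol{x}}-\boldsymbol{x}_0)-\widetilde{D}\boldsymbol{\eta}\|_2 \le \|SDF\boldsymbol{x}^\perp+\widetilde{D}\boldsymbol{\eta}\|_2+\sqrt{\varepsilon}.
\]
Two triangle inequalities then move the two copies of $\widetilde{D}\boldsymbol{\eta}$ and the one copy of $SDF\boldsymbol{x}^\perp$ to the right-hand side, yielding $\|SDF(\hat{\boldsymbol{x}}-\boldsymbol{x}_0)\|_2\le \|SDF\boldsymbol{x}^\perp\|_2+2\|\widetilde{D}\boldsymbol{\eta}\|_2+\sqrt{\varepsilon}$.

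To convert this measurement-domain bound into a signal-domain bound, I would split $\hat{\boldsymbol{x}}-\boldsymbol{x}_0=\boldsymbol{h}-\boldsymbol{x}^\perp$ with $\boldsymbol{h}:=\hat{\boldsymbol{x}}-\boldsymbol{x}^\star\in\range(G)-\range(G)\subseteq\mathcal{T}$. Since $\mathcal{T}$ is a cone, the RIP gives $\|SDF\boldsymbol{h}\|_2\ge\tfrac{2}{3}\|\boldsymbol{h}\|_2$, so one more triangle inequality yields $\|SDF(\hat{\boldsymbol{x}}-\boldsymbol{x}_0)\|_2\ge \tfrac{2}{3}\|\boldsymbol{h}\|_2-\|SDF\boldsymbol{x}^\perp\|_2$. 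Combining with the previous estimate and rearranging bounds $\|\boldsymbol{h}\|_2\le 3\|SDF\boldsymbol{x}^\perp\|_2+3\|\widetilde{D}\boldsymbol{\eta}\|_2+\tfrac{3}{2}\sqrt{\varepsilon}$, and then $\|\hat{\boldsymbol{x}}-\boldsymbol{x}_0\|_2\le\|\boldsymbol{h}\|_2+\|\boldsymbol{x}^\perp\|_2$ delivers the claimed inequality with precisely the constants $1,3,3$ and the optimization term arising from $\sqrt{\varepsilon}$.

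I do not expect a genuine obstacle: conditioned on the RIP the error propagation is entirely deterministic, and the constants fall out of elementary triangle inequalities together with the RIP lower bound. The only points requiring care are bookkeeping ones, namely correctly counting the subspaces of the piecewise-linear expansion $\mathcal{T}$ so that the generic sample complexity specializes to the generative form, and invoking the RIP lemma in its version valid for complex $F$ acting on the real cone $\mathcal{T}$ — both of which are already handled by \Cref{loc:rip_of_with:replacement_non:uniform_sampling_matrix_on_a_union_of_subspaces.statement} and the accompanying discussion of the generative setting.
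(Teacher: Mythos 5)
Your argument is correct and complete, but it is worth noting that the paper itself does not prove this proposition: it is imported verbatim from \cite[Theorem 2.1]{berkModeladaptedFourierSampling2023} and serves only as a point of comparison in the discussion of noise models, so you have supplied a proof where the paper gives only a citation. Your route --- establish the RIP of $SDF$ via \Cref{loc:rip_of_with:replacement_non:uniform_sampling_matrix_on_a_union_of_subspaces.statement} with $\mu(\boldsymbol{\alpha},\boldsymbol{p}')=\lVert\boldsymbol{\alpha}\rVert_2$, $\ell=2k$, $M=N^2$, then take square roots of the near-optimality inequality and apply triangle inequalities before invoking the RIP lower bound on $\boldsymbol{h}=G(\hat{\boldsymbol{z}})-\proj_{\mathcal{Q}}\boldsymbol{x}_0\in\mathcal{Q}-\mathcal{Q}\subseteq\mathcal{T}$ --- is the standard deterministic-noise argument and reproduces the constants $1,3,3$ exactly. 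It is also instructive to contrast it with the paper's proof of its own \Cref{loc:signal_recovery_with_subsampled_unitary_matrix_with_gaussian_noise_on_union_of_subspaces.statement}, which instead expands the squared norms, cancels cross terms, and solves a quadratic inequality in $\lVert\tilde{\boldsymbol{h}}\rVert_2$; that route is needed there because the Gaussian cross term $\mathcal{R}\langle SDF\boldsymbol{y},\widetilde{D}\boldsymbol{\eta}\rangle$ must be isolated for concentration, but it pays a price in constants (a factor $6$ on $\lVert SDF\boldsymbol{x}^\perp\rVert_2$ versus your $3$). One small discrepancy: your derivation yields $\tfrac{3}{2}\sqrt{\varepsilon}$, whereas the proposition as printed states $\tfrac{3}{2}\varepsilon$; since $\varepsilon$ perturbs a squared norm, the dimensionally consistent term is $\sqrt{\varepsilon}$ (as in \Cref{loc:optimized_with:replacement_cs_on_union_of_subspaces.statement}), so the printed statement appears to contain a typo and your version is the correct one.
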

Notice the error term $\|\widetilde{D}\boldsymbol{\eta}\|_2$, and recall that $\widetilde{D} = \Diag(S\boldsymbol{d})$ for a fixed preconditioning vector $\boldsymbol{d} \in \mathbb{R}^n$. Each entry $d_i$ of $\boldsymbol{d}$ is associated through its index to $f_i$, a specific measurement vector. Therefore, this term, $\|\widetilde{D}\boldsymbol{\eta}\|_2$, scales the noise by coefficients which depend on a random sample of measurement vectors.

This type of theoretical result is dubious in application. A practitioner hoping to
use such a result would have to first sample the CS matrix,
and only know of a high probability bound describing its quality after
having sampled the CS matrix (and would not be allowed to re-sample it if the theoretical
guarantees are to hold).

Though this bound has its shortcomings, we believe that it is an
accurate description of the robustness to deterministic noise of subsampled
unitary measurements with non-uniform local coherences. Since different
measurement vectors have differing degrees of alignment (i.e., differing local
coherences) with the prior, the dynamic range of non-noisy measurements vary
accordingly, resulting in effectively large (in magnitude) measurements for rows
that are coherent with the prior and smaller measurements for rows that are
incoherent. This in turn leads to typically higher Signal-to-Noise ratios (SNRs) for
``coherent" measurements and lower SNRs for incoherent ones. It would be
possible to obtain a signal recovery bound which does not depend on the measurement
matrix by using a uniform lower-bound on all local coherences, but such a
bound would under-estimate of the true robustness to noise by a potentially wide margin.
Indeed, a measurement vector that is almost orthogonal to the prior set
would almost never be sampled, and yet would affect such a bound significantly.

Sampling-dependent noise sensitivity appears in earlier works in subtle ways for
both bounded and deterministic noise, even when the recovery error bound
does not feature a noise error term that is sampling-dependent. In the
two examples we discuss below, this type of noise sensitivity instead
appears implicitly in the signal acquisition model.

Krahmer and Ward~\cite{krahmerStableRobustSampling2014} consider \emph{bounded} (adversarial)
noise contained in an ellipsoid. Notably, the dimensions of this ellipsoid
scale inversely with the entry-wise preconditioning associated with the measurement
vectors, making the ellipsoid's shape dependent on the specific selection of measurement vectors
in the CS matrix. The worst-case
noise then scales element-wise with the local coherences of the sampled measurement
vectors in such a way that the impact of varying local coherences 
on the noise sensitivity is cancelled out. One downside of such an approach
is that the constraint on the adversarial noise is dependent on the 
random CS matrix.

% An alternative related approach would be to consider adversarial noise in the signal
% space rather than the measurement space. In this case, it would suffice to
% contain the adversarial noise within a fixed ellipsoid. However, to achieve
% equivalent guarantees, this signal-space ellipsoid would need to be much
% smaller (by a factor of $\frac{\sqrt{m}}{\sqrt{n}}$) because adversarial noise
% can selectively align with the sampled directions.

Adcock, Cardenas, and Dexter~\cite{adcockUnifiedFrameworkLearning2023} instead consider deterministic
noise, with a different model for signal acquisition than that of~\Cref{loc:generative_signal_recovery_with_optimized_with:replacement_scheme_and_deterministic_noise.even_shorter}. 
Their CS matrix has rows with varying norms, which modifies the strength of the
signal in the measurements relative to the noise. From the perspective
of the present paper, they take the measurement
matrix to be $SDF$, the preconditioned CS matrix at the time of measurement, whereas we
consider $SF$, and only introduce the preconditioning in the optimization step.
By preconditioning the CS matrix it is possible to correct for discrepancies
in dynamic range that arise from differing local coherences. 

We believe that our signal acquisition model is more realistic than the
alternatives discussed so far.
Measurement vectors model measurement devices, and there is no reason
to believe that measurement devices would either downscale the noise
element-wise or strengthen the signal element-wise in a way that depends
on the choice of measurements. That being said, results featuring
differing measurement acquisition models are often mathematically equivalent,
meaning that we can convert between two measurement acquisition models
by simultaneously adjusting the error terms in the recovery bound with
a variable substitution. For
example, we do this in \hyperlink{loc:noise_corollary_from_deterministic_results_on_union_of_subspaces.proof}{the proof} of \Cref{loc:noise_corollary_from_deterministic_results_on_union_of_subspaces.statement}.

In this work, we find simpler and meaningful bounds by
considering Gaussian noise. This also means that we provide, to our knowledge,
the first robust bounds for variable density sampling in compressed sensing
which do not suffer from the aforementioned limitations.

It may be argued that signal recovery bounds in the presence of Gaussian noise can be obtained as
a straightforward corollary of previous work on deterministic
noise. But as we demonstrate in the appendix, such results will not exhibit a
denoising behavior. In \Cref{loc:appendix.gaussian_noise_corollary_from_deterministic_noise}, we derive such a result as a corollary of \cite[Theorem 3.6]{adcockUnifiedFrameworkLearning2023},
and show that it falls short relative to our specialized treatment of stochastic noise.
\section{Proofs}
\label{loc:body.proofs}
\subsection{Signal recovery with denoising}
\label{loc:body.proofs.signal_recovery_with_denoising}
\begin{proof}[\hypertarget{loc:tail_on_the_noise_sensitivity_for_adapted_with:replacement_sampling.proof}Proof of \Cref{loc:tail_on_the_noise_sensitivity_for_adapted_with:replacement_sampling.statement}]

Recall that the diagonal entries of $D$ are $d_i = \frac{1}{\sqrt{n p'_i}}  = \frac{\|\boldsymbol{\alpha}\|_2}{\sqrt{n} \alpha_i}$. A first bound is found in \Cref{loc:simple_bound_on_the_gaussian_noise_error_factor.statement}:
\begin{displaymath}
\|\widetilde{D}\trunc(SD \boldsymbol{\alpha})\|_2 \leq d_{1} = \frac{\lVert \boldsymbol{\alpha}\rVert_{2}}{\sqrt{ n } \min(\boldsymbol{\alpha})}.
\end{displaymath}
We find a second bound to combine with the first. With the fact that
$D \boldsymbol{\alpha} = \frac{\|\boldsymbol{\alpha}\|_2}{\sqrt{n}} \allone$
for $\allone$ the vector with all entries $1$,
the second bound in \Cref{loc:simple_bound_on_the_gaussian_noise_error_factor.statement}
becomes
\begin{align*}
\|\widetilde{D}\trunc(SD \boldsymbol{\alpha})\|_2  
&\le \|SD^2 \boldsymbol{\alpha}\|_2\\
& \le \frac{\|\boldsymbol{\alpha}\|_2}{\sqrt{n}} \|SD \allone\|_2.
\end{align*}
With the fact that $SD \allone =  S \boldsymbol{d}$,
\begin{displaymath}
\mathbb{E} \|S \boldsymbol{d}\|_2^2 = \sum_{i = 1}^m \sum_{j = 1}^n p'_j \frac{n}{m} \frac{1}{n p'_j} = n.
\end{displaymath}
Using Markov's inequality, we find that with probability at least $1-\delta$,
\begin{displaymath}
\|\widetilde{D}\trunc(SD \boldsymbol{\alpha})\|_2  \le \frac{\|\boldsymbol{\alpha}\|_2}{\sqrt{\delta}}.
\end{displaymath}
Combining the two bounds yields the result.
\end{proof}
\begin{proof}[\hypertarget{loc:signal_recovery_with_subsampled_unitary_matrix_with_gaussian_noise_on_union_of_subspaces.proof}Proof of \Cref{loc:signal_recovery_with_subsampled_unitary_matrix_with_gaussian_noise_on_union_of_subspaces.statement}]

Let $\bar{\boldsymbol{x}} = \proj_{\mathcal{Q}}\boldsymbol{x}_0$ and $\boldsymbol{h}= \hat{\boldsymbol{x}}-\boldsymbol{x}_0$. By definition of $\hat{\boldsymbol{x}}$,
\begin{align*}
\left\lVert  SDF\hat{\boldsymbol{x}}-\widetilde{D}\boldsymbol{b} \right\rVert_{2}^2 
&\leq  \min_{\boldsymbol{x}\in  \mathcal{Q}}\lVert  SDF\boldsymbol{x} - \widetilde{D}\boldsymbol{b} \rVert_2^2+\varepsilon\\
&\leq \lVert SDF\bar{\boldsymbol{x}} - \widetilde{D} \boldsymbol{b}\rVert_2^2 + \varepsilon\\
& = \lVert SDF (\bar{\boldsymbol{x}}- \boldsymbol{x}_0) + SDF\boldsymbol{x}_0-\widetilde{D}\boldsymbol{b}\rVert_2^2 + \varepsilon\\
&\leq  \lVert SDF\boldsymbol{x}^\perp - \widetilde{D}\boldsymbol{\eta}\rVert_2^2 + \varepsilon\\
&\leq \lVert SDF\boldsymbol{x}^\perp\rVert_2^2+ 2 \mathcal{R}\langle SDF\boldsymbol{x}^\perp, \widetilde{D}\boldsymbol{\eta}\rangle + \lVert \widetilde{D} \boldsymbol{\eta}\rVert_2^2+\varepsilon.
\end{align*}
Then consider the l.h.s. of the above inequality:
\begin{align*}
\lVert  SDF\hat{\boldsymbol{x}}-\widetilde{D}\boldsymbol{b} \rVert_{2}^2
&= \lVert  SDF\hat{\boldsymbol{x}}-(SDF\boldsymbol{x}_0+\widetilde{D}\boldsymbol{\eta} ) \rVert_{2}^2\\
&= \lVert SDF\boldsymbol{h}\rVert_2^2 - 2 \mathcal{R}\langle SDF\boldsymbol{h}, \widetilde{D}\boldsymbol{\eta}\rangle + \lVert \widetilde{D}\boldsymbol{\eta}\rVert_{2}^2.
\end{align*}
Combining these equations, we get
\begin{displaymath}
\lVert SDF\boldsymbol{h}\rVert_{2}^2 \leq \lVert SDF\boldsymbol{x}^\perp\rVert_2^2+ 2 \mathcal{R}\langle SDF\boldsymbol{x}^\perp, \widetilde{D}\boldsymbol{\eta}\rangle + 2 \mathcal{R}\langle SDF\boldsymbol{h}, \widetilde{D}\boldsymbol{\eta}\rangle + \varepsilon.
\end{displaymath}
Substitute $\boldsymbol{h}= \tilde{\boldsymbol{h}}- \boldsymbol{x}^\perp$,
where $\tilde{\boldsymbol{h}} \in \mathcal{Q}-\mathcal{Q}$,
to find that
\begin{align*}
&\lVert SDF\tilde{\boldsymbol{h}} \rVert_{2}^2 - 2 \mathcal{R}\langle SDF \tilde{\boldsymbol{h}}, SDF\boldsymbol{x}^\perp\rangle + \lVert SDF \boldsymbol{x}^\perp\rVert_{2}^2 \\ 
\leq  &\lVert SDF\boldsymbol{x}^\perp\rVert_2^2+ 2 \mathcal{R}\langle
SDF\boldsymbol{x}^\perp, \widetilde{D}\boldsymbol{\eta}\rangle + 2 \mathcal{R}\langle
SDF\tilde{\boldsymbol{h}}, \widetilde{D}\boldsymbol{\eta}\rangle - 2 \mathcal{R}\langle SDF
\boldsymbol{x}^\perp, \widetilde{D}\boldsymbol{\eta}\rangle+\varepsilon,
\end{align*}
which reduces to
\begin{displaymath}
\lVert SDF\tilde{\boldsymbol{h}} \rVert_{2}^2 \leq 2 \mathcal{R}\langle SDF \tilde{\boldsymbol{h}}, SDF\boldsymbol{x}^\perp\rangle + 2 \mathcal{R}\langle SDF\tilde{\boldsymbol{h}}, \widetilde{D}\boldsymbol{\eta}\rangle + \varepsilon.
\end{displaymath}
Then using the \hyperref[loc:rip.statement]{RIP} on the l.h.s., it follows that
\begin{align*}
\left( 1 - \frac{1}{3} \right)^2\lVert \tilde{\boldsymbol{h}} \rVert_{2}^2 &\leq 2 \mathcal{R}\langle SDF\tilde{\boldsymbol{h}}, SDF\boldsymbol{x}^\perp\rangle + 2 \mathcal{R}\langle SDF\tilde{\boldsymbol{h}}, \widetilde{D} \boldsymbol{\eta}\rangle + \varepsilon\\
&\leq 2 \lVert \tilde{\boldsymbol{h}}\rVert_2 \sup_{\boldsymbol{y} \in \mathcal{T} \cap B_2}\mathcal{R}\langle SDF\boldsymbol{y}, SDF\boldsymbol{x}^\perp\rangle + \lVert \tilde{\boldsymbol{h}}\rVert_2 \sup_{\boldsymbol{y} \in \mathcal{T} \cap B_2} 2 \mathcal{R}\langle SDF\boldsymbol{y}, \widetilde{D} \boldsymbol{\eta}\rangle + \varepsilon.
\end{align*}
Let 
\begin{displaymath}
E = \sup_{\boldsymbol{y} \in \mathcal{T} \cap B_2}2 \mathcal{R}\langle SDF\boldsymbol{y}, SDF\boldsymbol{x}^\perp\rangle + \sup_{\boldsymbol{y} \in \mathcal{T} \cap B_2} 2 \mathcal{R}\langle SDF\boldsymbol{y}, \widetilde{D} \boldsymbol{\eta}\rangle.
\end{displaymath}
We find a bound on $\lVert \tilde{\boldsymbol{h}}\rVert_2$ by considering the quadratic formula $ax^2 +bx +c$ with coefficients $a = 4 /9, b= -E, c = -\varepsilon$,
\begin{align*}
\lVert \tilde{\boldsymbol{h}}\rVert_2 &\leq \frac{1}{2 \cdot 4 /9}\left( E + \sqrt{E^2 + 4\varepsilon \cdot 4 /9 } \right) \\
&\leq \frac{1}{2 \cdot 4/9}(2E + 2 \cdot 2 / 3 \sqrt{ \varepsilon })\\
&= \frac{9}{4}E + \frac{3}{2}\sqrt{ \varepsilon }.
\end{align*}
From the triangle inequality
\begin{displaymath}
\lVert \boldsymbol{h}\rVert_2 \leq  \lVert \tilde{\boldsymbol{h}}\rVert_{2} + \lVert \boldsymbol{x}^\perp\rVert_{2},
\end{displaymath}
it follows that
\begin{displaymath}
\lVert \boldsymbol{h}\rVert_2 \leq \frac{9}{4}E + \frac{3}{2}\sqrt{ \varepsilon } + \lVert \boldsymbol{x}^\perp\rVert_{2}.
\end{displaymath}
Then it remains only to bound $E$, which we do with the following lemma.
\begin{lemma}[Bounding the noise error term]
\label{loc:gaussian_noise_error_term_for_unevenly_subsampled_unitary_measurements_on_a_union_of_subspaces.statement}
Under~\Cref{loc:setup_of_signal_recovery_with_subsampled_unitary_measurements_and_gaussian_noise.for_denoising_paper}, let $S$ be a deterministic sampling matrix and suppose that $SDF$ satisfies the \hyperref[loc:rip.statement]{RIP} on
$\mathcal{T}$. Let $t>0$. Then
\begin{displaymath}
\sup_{\boldsymbol{y} \in \mathcal{T} \cap B_2} 2 \mathcal{R}\langle SDF\boldsymbol{y}, \widetilde{D} \boldsymbol{\eta}\rangle \le 4 \frac{\sigma}{\sqrt{ m }}\|\widetilde{D}\trunc(SD \boldsymbol{\alpha})\|_2\left( \sqrt{\ell} + \sqrt{\log M} + t \right)
\end{displaymath}
with probability at least $1-2 \exp(-t^2)$.
\end{lemma}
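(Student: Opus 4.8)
The plan is to decompose the supremum over the cone $\mathcal{T}$, write each piece as the norm of a Gaussian vector projected onto a subspace, and then combine a second-moment estimate with Gaussian Lipschitz concentration and a union bound. Write $\mathcal{T} = \bigcup_{k=1}^M V_k$ with each $V_k$ a real subspace of dimension $d_k \le \ell$, and set $A := SDF$ and $\boldsymbol{w} := \widetilde{D}\boldsymbol{\eta} = \frac{\sigma}{\sqrt{m}}\widetilde{D}\boldsymbol{g}$, recalling $\widetilde{D} = \Diag(S\boldsymbol{d})$ is real. Since $\mathcal{T}\cap B_2 = \bigcup_k (V_k\cap B_2)$, the supremum equals $\max_k Z_k$ with $Z_k := \sup_{\boldsymbol{y}\in V_k\cap B_2} 2\mathcal{R}\langle A\boldsymbol{y}, \boldsymbol{w}\rangle$. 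Because the $\boldsymbol{y}$ are real, $\mathcal{R}\langle A\boldsymbol{y},\boldsymbol{w}\rangle = \boldsymbol{y}^T\mathcal{R}(A^*\boldsymbol{w})$, so maximizing over the real unit ball of $V_k$ gives $Z_k = 2\|\proj_{V_k}\mathcal{R}(A^*\boldsymbol{w})\|_2$. Fixing a real orthonormal basis $\{\boldsymbol{v}_r\}_{r=1}^{d_k}$ of $V_k$, this is $Z_k = 2(\sum_r (\mathcal{R}\langle A\boldsymbol{v}_r,\boldsymbol{w}\rangle)^2)^{1/2}$, a function of the standard Gaussian $\boldsymbol{g}$, and I will control it through its mean and its Lipschitz constant.

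For the mean I would use the second moment. A direct variance computation for the real Gaussian $\mathcal{R}\langle A\boldsymbol{v}_r,\boldsymbol{w}\rangle$ gives $\mathbb{E}(\mathcal{R}\langle A\boldsymbol{v}_r,\boldsymbol{w}\rangle)^2 = \frac{\sigma^2}{m}\|\widetilde{D}A\boldsymbol{v}_r\|_2^2$ in \emph{both} the real and complex cases; in the complex case the real and imaginary parts of the entries of $A\boldsymbol{v}_r$ pair against independent unit Gaussians and reassemble into the same $|(A\boldsymbol{v}_r)_i|^2$. Hence $\mathbb{E}[Z_k]\le(\mathbb{E}[Z_k^2])^{1/2} = 2\frac{\sigma}{\sqrt{m}}\big(\sum_{r}\|\widetilde{D}A\boldsymbol{v}_r\|_2^2\big)^{1/2}$.

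The crux is to bound both $\sum_r\|\widetilde{D}A\boldsymbol{v}_r\|_2^2$ (for the mean) and $\sup_{\boldsymbol{y}\in V_k\cap B_2}\|\widetilde{D}A\boldsymbol{y}\|_2$, the latter being the Lipschitz constant of $Z_k/2$ in $\boldsymbol{g}$ since it equals the operator norm of $\proj_{V_k}A^*\widetilde{D}$. Using that row $i$ of $A$ is $\sqrt{n/m}\,d_{j_i}\boldsymbol{f}_{j_i}^*$ (where $j_i$ is the index sampled in row $i$), both are sums $\sum_i(S\boldsymbol{d})_i^2\,c_i$ with $c_i = \sum_r|(A\boldsymbol{v}_r)_i|^2$ respectively $c_i = |(A\boldsymbol{y})_i|^2$. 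Two constraints govern the $c_i$: a per-coordinate local-coherence bound $c_i\le C\,(SD\boldsymbol{\alpha})_i^2$ (with $C=1$ from $|\langle\boldsymbol{f}_{j_i},\boldsymbol{y}\rangle|\le\alpha_{j_i}$ for the Lipschitz term, and $C=2$ for the mean term, since $\sum_r|\langle\boldsymbol{f}_{j_i},\boldsymbol{v}_r\rangle|^2$ is the trace of a rank-$\le 2$ quadratic form whose top eigenvalue is at most $\alpha_{j_i}^2$), and a total budget from the RIP, namely $\sum_i c_i = \sum_r\|A\boldsymbol{v}_r\|_2^2\le\frac{16}{9}d_k$ respectively $\sum_i c_i = \|A\boldsymbol{y}\|_2^2\le\frac{16}{9}$. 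Because the rows are ordered so that $(S\boldsymbol{d})_i^2$ is non-increasing, maximizing $\sum_i(S\boldsymbol{d})_i^2 c_i$ subject to $c_i\le C(SD\boldsymbol{\alpha})_i^2$ and $\sum_i c_i\le B$ is a sorted fractional-knapsack problem whose greedy optimum fills the leading coordinates, and at $B=1$ this value is exactly $\|\widetilde{D}\trunc(SD\boldsymbol{\alpha})\|_2^2$. Concavity of the value function in $B$ together with the cancellation of $C$ after rescaling $c_i\mapsto c_i/C$ then yields $\sum_i(S\boldsymbol{d})_i^2 c_i\le B\,\|\widetilde{D}\trunc(SD\boldsymbol{\alpha})\|_2^2$, hence $\sum_r\|\widetilde{D}A\boldsymbol{v}_r\|_2^2\le\frac{16}{9}\ell\,\|\widetilde{D}\trunc(SD\boldsymbol{\alpha})\|_2^2$ and $\sup_{\boldsymbol{y}\in V_k\cap B_2}\|\widetilde{D}A\boldsymbol{y}\|_2\le\frac{4}{3}\|\widetilde{D}\trunc(SD\boldsymbol{\alpha})\|_2$. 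Extracting the unit truncation from the RIP budget is the main obstacle, and it is exactly where the sorted structure of $S\boldsymbol{d}$ enters.

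It then remains to assemble the pieces. Writing $N := \frac{\sigma}{\sqrt{m}}\|\widetilde{D}\trunc(SD\boldsymbol{\alpha})\|_2$, the two bounds give $\mathbb{E}[Z_k]\le\frac{8}{3}N\sqrt{\ell}$ and show $Z_k$ is $\frac{8}{3}N$-Lipschitz in $\boldsymbol{g}$ (viewed over $\mathbb{R}^m$ in the real case and over $\mathbb{R}^{2m}$ in the complex case, where taking the real part only decreases the relevant operator norm). The Gaussian Lipschitz concentration inequality gives $\mathbb{P}(Z_k\ge\frac{8}{3}N\sqrt{\ell}+\frac{8}{3}Nu)\le e^{-u^2/2}$, and a union bound over the $M$ subspaces with the choice $u=\frac{3}{2}(\sqrt{\log M}+t)$ makes the failure probability at most $M e^{-u^2/2}\le M^{-1/8}e^{-9t^2/8}\le 2e^{-t^2}$ while producing $\max_k Z_k\le \frac{8}{3}N\sqrt{\ell}+4N(\sqrt{\log M}+t)\le 4N(\sqrt{\ell}+\sqrt{\log M}+t)$, which is the claimed bound.
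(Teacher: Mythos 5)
Your proof is correct and reaches the stated constant, but it executes the two key estimates differently from the paper. The paper pushes everything into the image subspace $\bar{\mathcal{U}} = SDF\,\mathcal{U}$: it writes the supremum as $\|\Pi_{\bar{\mathcal{U}}}\widetilde{D}\boldsymbol{g}\|_2$ (paying the RIP upper bound $4/3$ up front), controls both the mean and the Lipschitz constant by the single quantity $\|\widetilde{D}\Pi_{\bar{\mathcal{U}}}\|$ via $\mathbb{E}\|\Pi_r\boldsymbol{g}\|_2\le\sqrt{\ell}$, and extracts $\|\widetilde{D}\trunc(SD\boldsymbol{\alpha})\|_2$ from \Cref{loc:operator_norm_of_a_diagonal_matrix_on_an_incoherent_subspace.statement}, where the linear program has unit budget (vectors on $\bar{\mathcal{U}}\cap\sphere{m}$) and the RIP \emph{lower} bound inflates the coordinate caps to $\tfrac{3}{2}(SD\boldsymbol{\alpha})_i$; the complex case then needs the separate \Cref{loc:soft_incoherent_projection_bound.statement}. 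You instead stay in the domain, bound the mean by an exact second-moment computation (which handles $\mathbb{R}$ and $\mathbb{C}$ uniformly, with the rank-two trace argument supplying the factor $C=2$ that later washes out), and let the RIP enter as the \emph{budget} of the same sorted fractional-knapsack LP, using concavity of the LP value in the budget to rescale down to $B=1$. Both routes hinge on the same greedy observation that the sorted LP at unit budget evaluates to $\|\widetilde{D}\trunc(SD\boldsymbol{\alpha})\|_2^2$, but your second-moment mean bound and budget-rescaling are genuinely different mechanisms, and they buy a cleaner unified treatment of the complex case at the cost of slightly more bookkeeping in the knapsack. One small caveat: the rescaling $C\,\Phi(B/C)\le B\,\Phi(1)$ requires $B/C\ge 1$, which fails for the mean term when $\ell=1$ in the complex case (there you only get $2\Phi(1)$, hence $\mathbb{E}[Z_k]\le 2\sqrt{2}\,N$ rather than $\tfrac{8}{3}N$); since $2\sqrt{2}<4$, the final bound $4N(\sqrt{\ell}+\sqrt{\log M}+t)$ is unaffected, but the intermediate claim should be stated as $\max(\tfrac{8}{3}\sqrt{\ell},\,2\sqrt{2})\,N$ to be airtight.
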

To finish the proof of \Cref{loc:signal_recovery_with_subsampled_unitary_matrix_with_gaussian_noise_on_union_of_subspaces.statement}, it remains only to bound
\begin{displaymath}
\sup_{\boldsymbol{y} \in \mathcal{T} \cap B_2} 2 \mathcal{R}\langle SDF\boldsymbol{y}, SDF\boldsymbol{x}^\perp\rangle.
\end{displaymath}
Since $SDF$ has the RIP on $\mathcal{T}$,
\begin{displaymath}
\sup_{\boldsymbol{y} \in  \mathcal{T}\cap B_2} \lVert SDF\boldsymbol{y}\rVert_2 \leq  \frac{4}{3}.
\end{displaymath}
With the Cauchy-Schwartz inequality we find that
\begin{displaymath}
\sup_{\boldsymbol{y} \in \mathcal{T} \cap B_2} 2 \mathcal{R}\langle SDF\boldsymbol{y},SDF\boldsymbol{x}^\perp\rangle
 \le \sup_{\boldsymbol{y} \in  \mathcal{T}\cap B_2} \lVert SDF\boldsymbol{y}\rVert_2\lVert SDF\boldsymbol{x}^\perp\rVert_{2}
 \le \frac{8}{3}\lVert SDF\boldsymbol{x}^\perp\rVert_{2}.
\end{displaymath}
Therefore,
\begin{displaymath}
E \leq 4\frac{ \sigma}{\sqrt{ m }} \|\widetilde{D}\trunc(SD \boldsymbol{\alpha})\|_2 \left( \sqrt{ \ell } + \sqrt{\log M} + t\right) +\frac{8}{3}\lVert SDF\boldsymbol{x}^\perp\rVert_{2}
\end{displaymath}
with probability at least $1-2\exp(-t^2)$, which yields the result.
\end{proof}
\begin{proof}[\hypertarget{loc:gaussian_noise_error_term_for_unevenly_subsampled_unitary_measurements_on_a_union_of_subspaces.proof}Proof of \Cref{loc:gaussian_noise_error_term_for_unevenly_subsampled_unitary_measurements_on_a_union_of_subspaces.statement}]

In the case where $\mathbb{K}$ is $\mathbb{C}$, note that in \Cref{loc:gaussian_noise_error_term_for_unevenly_subsampled_unitary_measurements_on_a_union_of_subspaces.statement} we only contend with the real
part of the complex inner product. The space $\mathbb{C}^m$ over the real field and
with the real inner product
is isometric and isomorphic to $\mathbb{R}^{2m}$. We therefore map
vectors that are in
$\mathbb{C}^m$ into $\mathbb{R}^{2m}$ in the canonical way, and at times
discuss the same vectors as being in $\mathbb{C}^m$. In this proof only, we denote by $\langle  \cdot ,  \cdot \rangle$ the canonical inner
product in $\mathbb{R}^{2m}$, and by $\| \cdot \|$ the operator norm
in $\mathbb{R}^{2m}$. The matrices $D  \in \mathbb{R}^{2n  \times  2n}$, $F  \in \mathbb{R}^{2n  \times n}$, and
$S \in \mathbb{R}^{2m  \times 2n}$ are defined so as to preserve the linear map structure.
For simplicity, in the rest of this proof we use $\mathbb{K}^m$ to refer
to $\mathbb{R}^{2m}$ if $\mathbb{K}$ is $\mathbb{C}$, and to $\mathbb{R}^m$ if $\mathbb{K}$ is $\mathbb{R}$.

Let $\mathcal{U} \subseteq \mathcal{T}$ be a subspace of dimension at most $\ell$ in $\mathbb{R}^n$. We first consider the simpler problem of bounding the random variable $\sup_{\boldsymbol{y} \in \mathcal{U} \cap B_2} 2\langle SDF\boldsymbol{y}, \widetilde{D} \boldsymbol{\eta}\rangle$ with high probability.
Let $\bar{\mathcal{U}} = SDF\mathcal{U} \subseteq \measfield^{m}$, and $\bar{\boldsymbol{y}} = SDF \boldsymbol{y} \in \measfield^{m}$. Then
\begin{align*}
\sup_{\boldsymbol{y} \in \mathcal{U} \cap B_2} 2\langle SDF\boldsymbol{y}, \widetilde{D} \boldsymbol{\eta}\rangle &= \frac{\sigma}{\sqrt{ m }}\sup_{\boldsymbol{y}\in \mathcal{U} \cap B_2}2\langle S DF\boldsymbol{y}, \widetilde{D} \boldsymbol{g} \rangle \\
 &\leq  \frac{\sigma}{\sqrt{ m }} \frac{8}{3}\sup_{\bar{\boldsymbol{y}} \in \bar{\mathcal{U}} \cap B_2}\langle  \bar{\boldsymbol{y}}, \widetilde{D} \boldsymbol{g} \rangle.
\end{align*}
To find a high-probability bound on this random variable, we first bound the expectation.
Define $\Pi_r$ to be the orthogonal projection on to the row space of
$\Pi_{\bar{\mathcal{U}}}\widetilde{D}$, so that $\Pi_{\bar{\mathcal{U}}}\widetilde{D} = \Pi_{\bar{\mathcal{U}}}\widetilde{D}\Pi_r$.
Then
\begin{subequations}
\begin{align}
\mathbb{E}\sup_{\boldsymbol{y} \in \mathcal{U} \cap B_2} 2\langle SDF\boldsymbol{y}, \widetilde{D} \boldsymbol{\eta}\rangle 
&\leq \frac{\sigma}{\sqrt{ m }}\frac{8}{3} \mathbb{E} \sup_{\boldsymbol{z} \in \bar{\mathcal{U}} \cap B_2} \langle \boldsymbol{z}, \widetilde{D} \boldsymbol{g}\rangle  \label{eq:op:norm:1}\\
 &= \frac{\sigma}{\sqrt{ m }}\frac{8}{3} \mathbb{E} \| \Pi_{\bar{\mathcal{U}}}\widetilde{D}  \boldsymbol{g}\|_2  \label{eq:op:norm:2}\\
 &= \frac{\sigma}{\sqrt{ m }}\frac{8}{3} \mathbb{E} \| \Pi_{\bar{\mathcal{U}}}\widetilde{D}\Pi_r\boldsymbol{g}\|_2  \label{eq:op:norm:3}\\
&\le  \frac{\sigma}{\sqrt{ m }}\frac{8}{3} \| \Pi_{\bar{\mathcal{U}}}\widetilde{D}\|\mathbb{E} \|\Pi_r\boldsymbol{g}\|_2  \label{eq:op:norm:4}\\
 &\le   \frac{\sigma}{\sqrt{ m }}\frac{8}{3} \| \widetilde{D}\Pi_{\bar{\mathcal{U}}}\| \sqrt{\ell}. \label{eq:op:norm:5}
\end{align}
\end{subequations}
% See [[Frobenius norm of diagonal matrix on incoherent subspace]] for the proof for a possible tightening (useless though).
%Then we use [[Operator norm of a diagonal matrix on an incoherent subspace]] to bound this quantity.
The last inequality follows because the matrix
$\Pi_{\bar{\mathcal{U}}}\widetilde{D}$ has rank $\ell$, and so $\Pi_r$ is
a projection onto a $\ell-$dimensional subspace.
To bound the deviation away from the expectation, we consider the function 
\begin{displaymath}
\boldsymbol{g} \to \frac{2\sigma}{\sqrt{ m }}\sup_{\boldsymbol{y}\in \mathcal{U} \cap B_2} \langle S DF\boldsymbol{y}, \widetilde{D} \boldsymbol{g} \rangle.
\end{displaymath}
With an argument similar to the one we used to bound the expectation, 
we find this function to be Lipschitz with constant 
$\frac{\sigma}{\sqrt{ m }}\frac{8}{3}\|\widetilde{D}\Pi_{\bar{\mathcal{U}}}\|$.
Then by Gaussian concentration (\cite[Theorem 5.5]{maurerConcentrationInequalitiesSubGaussian2021}), we find that for any $t>0$,
\begin{equation}
\label{eq:first:main:ineq}
\sup_{\boldsymbol{y} \in \mathcal{U} \cap B_2} 2\langle SDF\boldsymbol{y}, \widetilde{D} \boldsymbol{\eta}\rangle \leq  \frac{\sigma}{\sqrt{ m }}\frac{8}{3}\|\widetilde{D} \Pi_{\bar{\mathcal{U}}}\| (\sqrt{\ell}+t)
\end{equation}
with probability at least $1- 2\exp(-t^2)$.

In the case where $\mathbb{K}$ is $\mathbb{R}$, we to bound $\lVert \widetilde{D}\Pi_{\bar{\mathcal{U}}}\rVert$ with the following lemma.
\begin{lemma}[Operator norm of a diagonal matrix on an incoherent subspace]
\label{loc:operator_norm_of_a_diagonal_matrix_on_an_incoherent_subspace.statement}
Let $\bar{\mathcal{U}} \subseteq \mathbb{R}^{m}$ be
a non-trivial subspace and let $\boldsymbol{\beta} \in \mathbb{R}_{+}^{m}$ be the local coherences of the canonical basis with respect to $\bar{\mathcal{U}}$, meaning that
$\beta_i:=\sup_{u \in  \bar{\mathcal{U}} \cap B_2} |u_i|$. Let $\widetilde{D}  =  \Diag( \tilde{\boldsymbol{d}})$ for a vector $\tilde{\boldsymbol{d}} \in \mathbb{R}_{++}^m$ with non-increasing entries. Then
\begin{displaymath}
\|\widetilde{D} \proj_{\bar{\mathcal{U}}}\| \le \|\widetilde{D}\trunc(\boldsymbol{\beta})\|_2.
\end{displaymath}
\end{lemma}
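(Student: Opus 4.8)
The plan is to reduce the operator norm to a supremum over the subspace and then solve a simple entrywise optimization problem in closed form. First I would observe that, since $\proj_{\bar{\mathcal{U}}}$ maps the closed unit ball onto $\bar{\mathcal{U}} \cap B_2$,
\[
\|\widetilde{D}\proj_{\bar{\mathcal{U}}}\|^2 = \sup_{\boldsymbol{u} \in \bar{\mathcal{U}} \cap B_2}\|\widetilde{D}\boldsymbol{u}\|_2^2 = \sup_{\boldsymbol{u} \in \bar{\mathcal{U}} \cap B_2}\sum_{i=1}^m \tilde{d}_i^2 u_i^2 .
\]
Any such $\boldsymbol{u}$ satisfies the two constraints that drive the argument: $u_i^2 \le \beta_i^2$ for every $i$, directly from the definition $\beta_i = \sup_{\boldsymbol{u}\in\bar{\mathcal{U}}\cap B_2}|u_i|$, and $\sum_i u_i^2 \le 1$. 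I would also record that $\trunc(\boldsymbol{\beta})$ is well defined: non-triviality of $\bar{\mathcal{U}}$ supplies a unit vector $\boldsymbol{u}\in\bar{\mathcal{U}}$, whence $\|\boldsymbol{\beta}\|_2^2 \ge \sum_i u_i^2 = 1$, so some prefix of $\boldsymbol{\beta}$ already reaches unit norm and the index $I$ of \Cref{loc:unit_truncation.statement} exists.

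The heart of the proof is that, because $\tilde{\boldsymbol{d}}$ has non-increasing entries, the weighted sum $\sum_i \tilde{d}_i^2 u_i^2$ is largest when the squared mass $u_i^2$ (which totals at most $1$ and is capped entrywise by $\beta_i^2$) is loaded onto the leading coordinates, which carry the largest weights $\tilde{d}_i^2$. This is exactly a fractional-knapsack problem whose greedy optimum sets $u_i^2 = \beta_i^2$ on an initial block of indices until the budget $1$ is exhausted, the final index absorbing the remainder — that is, $u_i^2 = \trunc(\boldsymbol{\beta})_i^2$. Rather than invoke knapsack optimality, I would make this rigorous by summation by parts. Writing $\tilde{d}_{m+1}:=0$ and using $\tilde{d}_i^2 = \sum_{k\ge i}(\tilde{d}_k^2 - \tilde{d}_{k+1}^2)$ with nonnegative increments, I obtain
\[
\sum_{i}\tilde{d}_i^2 u_i^2 = \sum_{k=1}^m (\tilde{d}_k^2 - \tilde{d}_{k+1}^2)\,\|\boldsymbol{u}|_{[k]}\|_2^2 \le \sum_{k=1}^m (\tilde{d}_k^2 - \tilde{d}_{k+1}^2)\,\min(1, \|\boldsymbol{\beta}|_{[k]}\|_2^2),
\]
where the inequality uses $\|\boldsymbol{u}|_{[k]}\|_2^2 \le \min(1, \|\boldsymbol{\beta}|_{[k]}\|_2^2)$, itself immediate from the two constraints ($\sum_{i\le k}u_i^2\le 1$ and $u_i^2\le\beta_i^2$).

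To finish, I would identify the right-hand side with $\|\widetilde{D}\trunc(\boldsymbol{\beta})\|_2^2$. The crucial bookkeeping identity is that the partial sums of the truncated vector match the capped partial sums of $\boldsymbol{\beta}$, namely $\sum_{i\le k}\trunc(\boldsymbol{\beta})_i^2 = \min(1, \|\boldsymbol{\beta}|_{[k]}\|_2^2)$ for every $k$; this follows directly from \Cref{loc:unit_truncation.statement} by splitting into the cases $k<I$ (where the cumulative sum is $\|\boldsymbol{\beta}|_{[k]}\|_2^2<1$) and $k\ge I$ (where it equals $1$). Substituting this identity and reversing the summation by parts returns $\sum_i \tilde{d}_i^2 \trunc(\boldsymbol{\beta})_i^2 = \|\widetilde{D}\trunc(\boldsymbol{\beta})\|_2^2$, and taking square roots gives the claim. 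The main obstacle is precisely this matching step: one must verify that the greedy/knapsack optimum coincides with the unit truncation, for which establishing the cumulative-sum identity — equivalently, the equivalence of the ``fill the leading coordinates'' heuristic with \Cref{loc:unit_truncation.statement} — is the key technical point. By contrast, the reduction to the subspace and the pointwise bound $u_i^2\le\beta_i^2$ are routine.
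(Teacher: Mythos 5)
Your proof is correct, and it follows the paper's reduction exactly: both arguments rewrite $\|\widetilde{D}\proj_{\bar{\mathcal{U}}}\|^2$ as a supremum of $\sum_i \tilde{d}_i^2 u_i^2$ over the subspace, relax to the linear program ``maximize $\sum_i \tilde{d}_i^2 p_i$ over $p$ in the simplex with $p_i \le \beta_i^2$,'' and recognize $\trunc(\boldsymbol{\beta})^{.2}$ as the maximizer. Where you diverge is in certifying that maximizer. The paper uses an exchange argument: for any other feasible $\boldsymbol{p}$, the difference $\boldsymbol{p}-\trunc(\boldsymbol{\beta})^{.2}$ sums to zero, can only be positive off the support of $\trunc(\boldsymbol{\beta})^{.2}$ and negative on it, and pairing against the non-increasing objective vector gives $\boldsymbol{c}^*(\boldsymbol{p}-\trunc(\boldsymbol{\beta})^{.2})\le 0$. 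You instead use Abel summation, bounding each capped partial sum $\|\boldsymbol{u}|_{[k]}\|_2^2 \le \min(1,\|\boldsymbol{\beta}|_{[k]}\|_2^2)$ and matching these against the partial sums of $\trunc(\boldsymbol{\beta})^{.2}$. Your route has the advantage of producing a direct upper bound on every feasible point without having to reason about signs of a difference vector (a spot where the paper's argument is slightly delicate around the last nonzero entry of the truncation), and it makes the role of monotonicity of $\tilde{\boldsymbol{d}}$ completely transparent through the nonnegative increments $\tilde{d}_k^2-\tilde{d}_{k+1}^2$; the paper's exchange argument, in turn, identifies the actual optimum rather than only an upper bound. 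You also explicitly verify that $\trunc(\boldsymbol{\beta})$ is well defined via non-triviality of $\bar{\mathcal{U}}$, a point the paper leaves implicit.
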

For $\boldsymbol{d} \in \mathbb{R}^n$ the diagonal entries of $D$, and a fixed $i \in [m],$ the $i^{\text{th}}$ local coherence of $\bar{\mathcal{U}}$ is 

\begin{subequations}
\begin{align}
\sup_{\boldsymbol{u} \in \bar{\mathcal{U}}  \cap B_2} |\boldsymbol{e}_i^* \boldsymbol{u}|
&= \sup_{\boldsymbol{h} \in \mathcal{U} \cap B_2} \left|\boldsymbol{e}_i^* \frac{ SDF \boldsymbol{h}}{\|SDF \boldsymbol{h}\|_2}\right| \label{eq:embeddedcoherence:1}\\
&\le \sup_{\boldsymbol{h} \in \mathcal{U} \cap B_2} \frac{ (SD \boldsymbol{\alpha})_i }{\|SDF \boldsymbol{h}\|_2} \label{eq:embeddedcoherence:2}\\
&\le \frac{3}{2}  (S D \boldsymbol{\alpha})_i. \label{eq:embeddedcoherence:3}
\end{align}
\end{subequations}
\Cref{eq:embeddedcoherence:2} follows because for any $\boldsymbol{h} \in \mathcal{T} \cap B_2^n$, the vector $DF\boldsymbol{h}$ has entries dominated entry-wise in magnitude by $D\boldsymbol{\alpha}$.  \Cref{eq:embeddedcoherence:3} follows from a lower-bound on the denominator by using the RIP of $SDF$ on $\mathcal{T}$. 
\Cref{loc:operator_norm_of_a_diagonal_matrix_on_an_incoherent_subspace.statement} then tells us
that 
\begin{equation}
\label{eq:bound:op}
\|\widetilde{D}\Pi_{\bar{\mathcal{U}}}\| \le \frac{3}{2}\|\widetilde{D}\trunc(SD \boldsymbol{\alpha})\|_2.
\end{equation}
If, instead, $\mathbb{K}$ is $\mathbb{C}$, we require a slightly modified lemma,
which nonetheless yields the same result.
\begin{lemma}[Soft incoherent projection bound]
\label{loc:soft_incoherent_projection_bound.statement}
Let $\bar{\mathcal{U}} \subseteq \mathbb{R}^{2m}$ be
a non-trivial subspace such that for $i  \in [m]$, $\sup_{\bar{u}  \in \bar{\mathcal{U}}  \cap B_2} (\bar{u}_{2i-1}^2 + \bar{u}_{2i}^2)  \le \beta_i^2$ for a vector 
$\boldsymbol{\beta} \in \mathbb{R}_{++}^m$. Let $\widetilde{D}  =  \Diag( \tilde{\boldsymbol{d}})$ for a vector $\tilde{\boldsymbol{d}} \in \mathbb{R}_{++}^m$ with non-increasing entries. Suppose that the vector $\bar{\boldsymbol{d}} \in \mathbb{R}_{++}^{2m}$ has entries $\bar{d}_{2i-1} = \bar{d}_{2i}  =  \tilde{d}_i \, \forall i  \in [m]$, and let $\bar{D}  =  \Diag(\bar{\boldsymbol{d}})$. Then
\begin{displaymath}
\|\bar{D} \Pi_{\bar{\mathcal{U}}}\|  \le \|\widetilde{D} \trunc(\boldsymbol{\beta})\|_2.
\end{displaymath}
\end{lemma}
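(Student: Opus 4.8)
The plan is to reduce the operator norm to a scalar optimization over the subspace, and then to a linear program whose greedy optimum is exactly the unit truncation. This parallels the real-case argument behind \Cref{loc:operator_norm_of_a_diagonal_matrix_on_an_incoherent_subspace.statement}, the only new ingredient being that the paired real/imaginary coordinates $(2i-1,2i)$ get bundled into a single mass variable.

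First I would rewrite the operator norm. Since $\proj_{\bar{\mathcal{U}}}$ maps the ambient ball $B_2^{2m}$ onto $\bar{\mathcal{U}} \cap B_2$, we have
\begin{displaymath}
\|\bar{D}\proj_{\bar{\mathcal{U}}}\| = \sup_{\boldsymbol{v} \in \bar{\mathcal{U}} \cap B_2} \|\bar{D}\boldsymbol{v}\|_2,
\end{displaymath}
so it suffices to bound $\|\bar{D}\boldsymbol{v}\|_2^2$ uniformly over unit-norm $\boldsymbol{v} \in \bar{\mathcal{U}}$. Using the repeated structure $\bar{d}_{2i-1} = \bar{d}_{2i} = \tilde{d}_i$ and setting $\rho_i := v_{2i-1}^2 + v_{2i}^2$, I would write
\begin{displaymath}
\|\bar{D}\boldsymbol{v}\|_2^2 = \sum_{i=1}^m \tilde{d}_i^2 (v_{2i-1}^2 + v_{2i}^2) = \sum_{i=1}^m \tilde{d}_i^2 \rho_i.
\end{displaymath}
The constraints on $\boldsymbol{\rho}$ are then immediate: non-negativity is clear, the hypothesis gives $\rho_i \le \beta_i^2$, and $\sum_i \rho_i = \|\boldsymbol{v}\|_2^2 \le 1$. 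Hence $\|\bar{D}\boldsymbol{v}\|_2^2$ is at most the value of the linear program maximizing $\sum_i \tilde{d}_i^2 \rho_i$ over the relaxed set $\{\boldsymbol{\rho} : 0 \le \rho_i \le \beta_i^2,\ \sum_i \rho_i \le 1\}$. Because the coefficients $\tilde{d}_i^2$ are non-increasing, the greedy assignment is optimal: fill $\rho_1,\rho_2,\ldots$ to their caps $\beta_i^2$ in order until the total mass reaches $1$. By \Cref{loc:unit_truncation.statement} this greedy optimum is precisely $\trunc(\boldsymbol{\beta})^{.2}$, so the program value equals $\sum_i \tilde{d}_i^2 \trunc(\boldsymbol{\beta})_i^2 = \|\widetilde{D}\trunc(\boldsymbol{\beta})\|_2^2$, and taking square roots yields the claim.

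The step requiring the most care is verifying that $\trunc$ genuinely realizes the greedy optimum: a standard exchange/rearrangement argument showing that shifting mass from a later coordinate to an unsaturated earlier one never decreases the objective, since $\tilde{\boldsymbol{d}}$ is non-increasing. I would also confirm at the outset that $\trunc(\boldsymbol{\beta})$ is well defined, which follows because $\bar{\mathcal{U}}$ is non-trivial and hence contains a unit vector, forcing $\sum_i \beta_i^2 \ge \sum_i \rho_i = 1$ and thereby guaranteeing the truncation index $I$ exists. Finally, the soft hypothesis $v_{2i-1}^2 + v_{2i}^2 \le \beta_i^2$ (in place of an entrywise bound $|v_i| \le \beta_i$) is exactly what permits collapsing each coordinate pair into the single variable $\rho_i$, so beyond this bundling no change to the real-case reasoning is needed.
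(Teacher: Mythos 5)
Your proof is correct and follows essentially the same route as the paper's: both reduce $\|\bar{D}\Pi_{\bar{\mathcal{U}}}\|^2$ to a linear program in the bundled masses $\rho_i = v_{2i-1}^2+v_{2i}^2$ (the paper's substitution $p_i = \bar p_{2i-1}+\bar p_{2i}$) with caps $\beta_i^2$, and identify the greedy optimum with $\trunc(\boldsymbol{\beta})^{.2}$ via the exchange argument that the paper delegates to \Cref{loc:operator_norm_of_a_diagonal_matrix_on_an_incoherent_subspace.statement}. Your explicit check that $\sum_i\beta_i^2\ge 1$ (so the truncation index exists) is a small point the paper leaves implicit.
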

To find the relevant local coherences $\boldsymbol{\beta}$ in \Cref{loc:soft_incoherent_projection_bound.statement}, note that \Cref{eq:embeddedcoherence:3} still
holds if we let $F$ be the complex unitary matrix in $\mathbb{C}^{n  \times n}$. Following the same argument,
\begin{displaymath}
\sup_{\boldsymbol{u} \in \bar{\mathcal{U}}  \cap  B_2}|u_i|  \le \frac{3}{2}
(SD \boldsymbol{\alpha})_i,
\end{displaymath}
where $\bar{\mathcal{U}}  \subseteq \mathbb{C}^m$ and $u_i$ is a complex number. Back in $\mathbb{R}^{2m}$, this statement
becomes
\begin{displaymath}
\forall \boldsymbol{u} \in \bar{\mathcal{U}} \cap B_2,\, \,   u_{2i-1}^2 + u_{2i}^2
 \le \frac{9}{4} (SD \boldsymbol{\alpha})_i^2.
\end{displaymath}
Using \Cref{loc:soft_incoherent_projection_bound.statement} with $\beta_i  = \frac{3}{2} (SD \boldsymbol{\alpha})_i$
implies that \Cref{eq:bound:op} also holds in the case that $\mathbb{K}$ is $\mathbb{C}$.

Applying \Cref{eq:bound:op} to \Cref{eq:first:main:ineq}, we find that
\begin{displaymath}
\sup_{\boldsymbol{y} \in \mathcal{U} \cap B_2} 2\langle SDF\boldsymbol{y}, \widetilde{D} \boldsymbol{\eta}\rangle \le 4  \frac{\sigma}{\sqrt{ m }}\|\widetilde{D}\trunc(SD \boldsymbol{\alpha})\|_2\left( \sqrt{\ell} + t \right)
\end{displaymath}
with probability at least $1- 2 \exp(-t^2)$.
With a union bound as described by \Cref{loc:union_bound_on_tails_with_gaussian_tail_variables.statement} over the $M$ subspaces constituting $\mathcal{T}$, it follows that 
\begin{displaymath}
\sup_{\boldsymbol{y} \in \mathcal{T} \cap B_2} 2\langle SDF\boldsymbol{y}, \widetilde{D} \boldsymbol{\eta}\rangle \le 4 \frac{\sigma}{\sqrt{ m }}\|\widetilde{D}\trunc(SD \boldsymbol{\alpha})\|_2\left( \sqrt{\ell} + \sqrt{\log M} + t \right)
\end{displaymath}
with probability at least $1-2 \exp(-t^2)$.
\end{proof}
\begin{proof}[\hypertarget{loc:operator_norm_of_a_diagonal_matrix_on_an_incoherent_subspace.proof}Proof of \Cref{loc:operator_norm_of_a_diagonal_matrix_on_an_incoherent_subspace.statement}]

Note that
\begin{align*}
\|\widetilde{D} \proj_{\bar{\mathcal{U}}}\|^2 &= \sup_{\boldsymbol{u} \in \bar{\mathcal{U}} \cap \sphere{m}} \|\widetilde{D} \boldsymbol{u}\|_2^2 \\
 &= \sup_{\boldsymbol{u} \in \bar{\mathcal{U}} \cap \sphere{m}} \sum_{i = 1}^m u_i^2 \tilde{d}_i^2\\
 &\le \sup_{\boldsymbol{u} \in \sphere{m},  u_i^2  \le \beta_i^2} \sum_{i = 1}^m u_i^2 \tilde{d}_i^2
\end{align*}
Re-parameterize to $\boldsymbol{p} \in \Delta^{n-1}$ with the substitution $p_i  = u_i^2$, and let $\boldsymbol{c}  = \boldsymbol{\tilde{d}}^{.2}$ (squaring the vector element-wise), we find that
\begin{equation}
\label{eq:main:prob:opt}
\|\widetilde{D} \proj_{\bar{\mathcal{U}}}\|^2 \le \max_{\boldsymbol{p} \in \Delta^{m-1}, \, p_i  \le \beta_i^2} \boldsymbol{c}^* \boldsymbol{p}.
\end{equation}
Since $\boldsymbol{\tilde{d}}$ has decreasing entries, the solution to this
problem is the vector which concentrates its mass on the first entries
as much as possible, 
which is $\mathbb{T}(\boldsymbol{\beta})^{.2}$. 

Indeed, for any other
$\boldsymbol{p} \in \Delta^{n-1}$, the difference
$\boldsymbol{p}-\mathbb{T}(\boldsymbol{\beta})^{.2}$ is a vector which
sums to zero, and can only have positive entries outside the support of $\mathbb{T}(\boldsymbol{\beta})^{.2}$ (including also the last non-zero entry of $\mathbb{T}(\boldsymbol{\beta})^{.2}$, although this does not affect the argument). Similarly, negative entries of
the difference vector must occur inside the support of $\mathbb{T}(\boldsymbol{\beta})^{.2}$.
But the objective vector $\boldsymbol{c}$ has larger coefficients on the support of $\mathbb{T}(\boldsymbol{\beta})^{.2}$
than off of it, therefore $\boldsymbol{c}^* (\boldsymbol{p}-\mathbb{T}(\boldsymbol{\beta})^{.2})  \le 0$, which means that $\boldsymbol{c}^* \boldsymbol{p}  \le \boldsymbol{c}^* \mathbb{T}(\boldsymbol{\beta})^{.2}$. It follows that $\mathbb{T}(\boldsymbol{\beta})^{.2}$
maximizes the objective, achieving a value of $\|\widetilde{D}\trunc(\boldsymbol{\beta})\|_2^2$,
which gives us the desired upper-bound.
\end{proof}
\begin{proof}[\hypertarget{loc:soft_incoherent_projection_bound.proof}Proof of \Cref{loc:soft_incoherent_projection_bound.statement}]

Similarly as in \hyperlink{loc:operator_norm_of_a_diagonal_matrix_on_an_incoherent_subspace.proof}{the proof} of
\Cref{loc:operator_norm_of_a_diagonal_matrix_on_an_incoherent_subspace.statement}, we find an
upper-bound for $\|\bar{D} \Pi_{\bar{\mathcal{U}}}\|^2$ to be
\begin{equation}
\label{eq:first:optim}
\max_{\bar{\boldsymbol{p}} \in \Delta^{m-1}, \, \bar{p}_{2i-1} + \bar{p}_{2i} \le \beta_i^2} \bar{\boldsymbol{c}}^* \bar{\boldsymbol{p}}.
\end{equation}
where $\bar{\boldsymbol{c}}  \in \mathbb{R}^{2m}$ has entries $\bar{c}_i  = \bar{d}_i^2$.
Substitute $\bar{\boldsymbol{p}}  \in \Delta^{2m-1}$ by $\boldsymbol{p} \in \Delta^{m-1}$ with $p_i  = \bar{p}_{2i-1} + \bar{p}_{2i}$,
and since $\bar{\boldsymbol{d}}$ has pairs of repeated entries, 
take $\boldsymbol{c}  \in \mathbb{R}^{2m}$ to have entries $c_i  =  \bar{c}_{2i}$. The
constraints $\bar{p}_{2i-1} + \bar{p}_{2i}  \le \beta_i^2$ then becomes
$p_i  \le \beta_i^2$. Therefore,
we find that \Cref{eq:first:optim} equals
\begin{displaymath}
\max_{\boldsymbol{p} \in \Delta^{m-1}, \, p_i  \le \beta_i^2} \boldsymbol{c}^* \boldsymbol{p},
\end{displaymath}
which matches \Cref{eq:main:prob:opt} in \hyperlink{loc:operator_norm_of_a_diagonal_matrix_on_an_incoherent_subspace.proof}{the proof} of
\Cref{loc:operator_norm_of_a_diagonal_matrix_on_an_incoherent_subspace.statement}. Therefore, we
find the same upper-bound as in \Cref{loc:operator_norm_of_a_diagonal_matrix_on_an_incoherent_subspace.statement}.
\end{proof}

\section{Conclusion}
We have analyzed the stochastic noise dependence of variable density sampling in compressed sensing and have shown that optimized sampling leads to de-noising.  We believe this is the first de-noising result in variable density compressive sampling. We assumed the prior belongs to a union of subspaces, thus allowing both sparse and generative compressed sensing models as special cases.  In the latter case, we consider the prior to be the range of a neural net with ReLU activation functions. An open question is whether this work can be extended to other prior models, such as the range of a neural net with smooth activation functions.

\section*{Acknowledgments}
Y. Plan is partially supported by an NSERC Discovery Grant (GR009284), an NSERC Discovery Accelerator Supplement (GR007657), and a Tier II Canada Research Chair in Data Science (GR009243). O. Yilmaz was supported by an NSERC Discovery Grant (22R82411) O. Yilmaz also acknowledges support by the Pacific Institute for the Mathematical Sciences (PIMS) and the CNRS -- PIMS International Research Laboratory. Large language models were used while writing the manuscript for help with grammar and phrasing (Claude, Grok, and Chatgpt).

\bibliographystyle{siamplain}
\bibliography{bibliography}
\appendix
\section{RIP of non-uniformly subsampled matrices}
\label{loc:body.proofs.rip_of_non:uniformly_subsampled_matrices}
We first show a version of \hyperlink{loc:rip_of_with:replacement_non:uniform_sampling_matrix_on_a_union_of_subspaces.proof}{the proof} of \Cref{loc:rip_of_with:replacement_non:uniform_sampling_matrix_on_a_union_of_subspaces.statement} for a single subspace.
\begin{lemma}[Deviation of the CS matrix on a subspace]
\label{loc:deviation_of_with:replacement_matrix_on_subspace.statement}
Let $F \in \measfield^{n \times n}$ be a unitary matrix and $S \in \mathbb{R}^{m \times n}$
a with-replacement sampling matrix associated with a probability vector $\boldsymbol{p} \in (0,1]^n \cap \Delta^{n-1}$. 
Consider the diagonal matrix $D = \Diag\left(\boldsymbol{d}\right)$ where $d_i = (n p_i)^{-1/2}$.
For a subspace $\mathcal{U} \subseteq \mathbb{R}^n$ of dimension at most $\ell$, let
$\boldsymbol{\alpha}$ be the local coherences of $F$ with respect to $\mathcal{U}$.
With $\mu(\boldsymbol{\alpha}, \boldsymbol{p})$ denoting the complexity function from \Cref{loc:with:replacement_complexity.statement}, 
we have for any $t > 0$:
\begin{displaymath}
\sup_{\boldsymbol{x} \in \mathcal{U}\cap \sphere{n}} \left|\|SDF\boldsymbol{x}\|_{2}-1 \right|  \lesssim \frac{\mu(\boldsymbol{\alpha}, \boldsymbol{p})}{\sqrt{ m}}\sqrt{\log \ell} + \frac{\mu(\boldsymbol{\alpha}, \boldsymbol{p})}{ \sqrt{m}} t
\end{displaymath}
with probability at least $1-2\exp(-t^2)$.
\end{lemma}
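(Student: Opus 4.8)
The plan is to reduce the stated supremum to the operator norm of a centered sum of independent rank-one Hermitian matrices, and then invoke the matrix Bernstein inequality. First I would fix a real orthonormal basis $U \in \mathbb{R}^{n \times r}$ for $\mathcal{U}$, where $r := \dim \mathcal{U} \le \ell$, so that every $\boldsymbol{x} \in \mathcal{U} \cap \sphere{n}$ is $\boldsymbol{x} = U\boldsymbol{c}$ with $\boldsymbol{c} \in \mathbb{R}^r$, $\|\boldsymbol{c}\|_2 = 1$. Setting $A := U^* F^* D S^* S D F U = (SDFU)^*(SDFU) \in \measfield^{r \times r}$, which is Hermitian and positive semidefinite, gives $\|SDF\boldsymbol{x}\|_2^2 = \boldsymbol{c}^* A \boldsymbol{c}$. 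Since $A \succeq 0$ and $|\sqrt{1+s}-1| \le |s|$ for all $s \ge -1$, I get $|\,\|SDF\boldsymbol{x}\|_2 - 1\,| \le |\boldsymbol{c}^* A \boldsymbol{c} - 1| \le \|A - I_r\|$, so it suffices to bound $\|A - I_r\|$ with the claimed probability.

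Next I would express $A$ as a sum over the sampled rows. By \Cref{loc:sampling_matrix_with_replacement.statement}, $S^* S = \frac{n}{m}\sum_{i=1}^m \boldsymbol{e}_{j_i}\boldsymbol{e}_{j_i}^*$ for i.i.d.\ indices $j_i$ with $\mathbb{P}(j_i = j) = p_j$. Writing $\boldsymbol{u}_j := U^* \boldsymbol{f}_j \in \measfield^r$, so that $\|\boldsymbol{u}_j\|_2 = \alpha_j$ by \Cref{loc:local_coherence.statement}, yields $A = \sum_{i=1}^m X_i$ with $X_i := \frac{n}{m} d_{j_i}^2 \, \boldsymbol{u}_{j_i}\boldsymbol{u}_{j_i}^*$. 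Two identities drive the moment computation: $n p_j d_j^2 = 1$ (from $d_j = (np_j)^{-1/2}$) and $\sum_{j} \boldsymbol{u}_j \boldsymbol{u}_j^* = U^* F^* F U = I_r$ (from unitarity of $F$ and $U^*U = I_r$). Together these give $\mathbb{E} X_i = \frac{1}{m} I_r$, hence $\mathbb{E} A = I_r$; I then set $Y_i := X_i - \mathbb{E} X_i$.

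I would then extract the two Bernstein parameters. The uniform bound is $\|X_i\| = \frac{n}{m} d_{j_i}^2 \alpha_{j_i}^2 = \frac{1}{m}\frac{\alpha_{j_i}^2}{p_{j_i}} \le \frac{\mu^2}{m}$, and since $\mu^2 \ge \sum_j \alpha_j^2 = r \ge 1$ this gives $\|Y_i\| \le 2\mu^2/m =: R$. For the variance, using $n^2 p_j d_j^4 = 1/p_j$ and $\boldsymbol{u}_j\boldsymbol{u}_j^*\boldsymbol{u}_j\boldsymbol{u}_j^* = \alpha_j^2 \boldsymbol{u}_j\boldsymbol{u}_j^*$,
\begin{displaymath}
\mathbb{E} X_i^2 = \frac{1}{m^2}\sum_{j=1}^n \frac{\alpha_j^2}{p_j}\boldsymbol{u}_j\boldsymbol{u}_j^* \preceq \frac{\mu^2}{m^2}\sum_{j=1}^n \boldsymbol{u}_j\boldsymbol{u}_j^* = \frac{\mu^2}{m^2} I_r,
\end{displaymath}
whence $\big\|\sum_{i=1}^m \mathbb{E} Y_i^2\big\| \le \big\|\sum_{i=1}^m \mathbb{E} X_i^2\big\| \le \mu^2/m =: \sigma^2$. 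Matrix Bernstein over ambient dimension $r \le \ell$ then gives $\mathbb{P}(\|A - I_r\| \ge u) \le 2\ell\exp\!\big(\tfrac{-u^2/2}{\sigma^2 + Ru/3}\big)$, and choosing $u \asymp \sigma(\sqrt{\log \ell} + t) = \frac{\mu}{\sqrt m}(\sqrt{\log \ell} + t)$ drives the exponent below $-t^2$ while absorbing the factor $2\ell$, delivering the stated tail in the variance-dominated regime $m \gtrsim \mu^2(\log \ell + t^2)$ in which the lemma is applied inside \Cref{loc:rip_of_with:replacement_non:uniform_sampling_matrix_on_a_union_of_subspaces.statement}.

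The hard part will be the careful bookkeeping of the complex-on-real embedding: when $\measfield = \mathbb{C}$ the vectors $\boldsymbol{u}_j = U^* \boldsymbol{f}_j$ and the matrix $A$ are complex Hermitian even though $\mathcal{U}$ and $U$ are real, and I must verify that $\|\boldsymbol{u}_j\|_2$ still equals the real local coherence $\alpha_j$ and that both the operator-norm and variance estimates are field-agnostic, so that complex matrix Bernstein applies verbatim. A secondary, more cosmetic point is the passage from the two-term Bernstein tail to the single displayed term, which is legitimate precisely because the lemma is only ever invoked where the variance term dominates the Poisson term $Ru$.
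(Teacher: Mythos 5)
Your route is essentially the paper's: reduce the supremum to the operator norm of a centered sum of i.i.d.\ rank-one matrices built from an orthonormal basis of $\mathcal{U}$, verify isotropy of the mean, bound the uniform norm by $2\mu^2/m$ (via $\mu^2\ge 1$) and the variance by $\mu^2/m$, and invoke matrix Bernstein with dimension factor $\ell$. Two steps, however, do not go through as written. The first is the passage from the squared norm to the norm. The inequality $|\sqrt{1+s}-1|\le |s|$ gives $\sup_{\boldsymbol{x}}|\,\|SDF\boldsymbol{x}\|_2-1\,|\le \|A-I_r\|$, and the Bernstein tail for $\|A-I_r\|$ at level $u$ is $2\ell\exp\bigl(-\tfrac{u^2/2}{\sigma^2+Ru/3}\bigr)$, which becomes merely subexponential once $u\gtrsim \sigma^2/R\asymp 1$. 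Substituting $u\asymp \frac{\mu}{\sqrt{m}}(\sqrt{\log\ell}+t)$ therefore yields failure probability $2\exp(-t^2)$ only when $\frac{\mu}{\sqrt m}(\sqrt{\log\ell}+t)\lesssim 1$, i.e.\ $m\gtrsim \mu^2(\log\ell+t^2)$. You flag this, but the lemma carries no such hypothesis and is stated for every $t>0$ and every $m$. The paper closes exactly this gap with the ``square-root trick'': in place of $|a-1|\le|a^2-1|$ it uses the implication $|a-1|\ge\delta\Rightarrow|a^2-1|\ge\max(\delta,\delta^2)$ for $a\ge 0$, and evaluating the Bernstein tail at level $\max(\delta,\delta^2)$ makes the exponent $\asymp m\delta^2/\mu^2$ in both the variance- and the Poisson-dominated regimes, so the subgaussian tail holds unconditionally. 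This is a one-line repair, but without it you have proved a weaker statement than the one quoted.

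The second issue is the identity $\|\boldsymbol{u}_j\|_2=\alpha_j$ for $\boldsymbol{u}_j=U^*\boldsymbol{f}_j$, which is false when $\measfield=\mathbb{C}$: $\|U^*\boldsymbol{f}_j\|_2=\sup_{\boldsymbol{c}\in\mathbb{C}^r,\,\|\boldsymbol{c}\|_2\le1}|\boldsymbol{f}_j^*U\boldsymbol{c}|$ is the coherence of $\boldsymbol{f}_j$ with respect to the \emph{complexification} of $\mathcal{U}$, which can exceed $\alpha_j=\sup_{\boldsymbol{x}\in\mathcal{U}\cap B_2}|\boldsymbol{f}_j^*\boldsymbol{x}|$ by a factor of $\sqrt{2}$ (take $n=2$, $\mathcal{U}=\mathbb{R}^2$, first row of $F$ equal to $(1,i)/\sqrt2$: then $\alpha_1=1/\sqrt2$ while $\|U^*\boldsymbol{f}_1\|_2=1$). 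Since you only ever need upper bounds on $\|X_i\|$ and $\mathbb{E}X_i^2$, replacing the equality by $\|\boldsymbol{u}_j\|_2\le\sqrt2\,\alpha_j$ costs a factor of $2$ that is absorbed into the implicit constants, so the conclusion survives; but as stated the identity is wrong. The paper sidesteps this entirely by taking real parts of the rank-one Hermitian matrices $\boldsymbol{v}_i\boldsymbol{v}_i^*$ and testing only against real unit vectors, for which the relevant supremum is exactly $\alpha_j/(\sqrt{m\,p_j})\le\mu/\sqrt m$; if you prefer to keep the complex Hermitian $A$ and apply complex matrix Bernstein, you should state the $\sqrt2$ loss explicitly.
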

\begin{proof}[\hypertarget{loc:deviation_of_with:replacement_matrix_on_subspace.proof}Proof of \Cref{loc:deviation_of_with:replacement_matrix_on_subspace.statement}]

In what follows, we use the fact that $\forall \boldsymbol{x} \in  \mathcal{U}$, $SDF\boldsymbol{x} = SDFP_\mathcal{U}^* P_\mathcal{U}\boldsymbol{x}$ where $P_\mathcal{U} \in \mathbb{R}^{\ell \times n}$ is the matrix with rows that are a fixed orthonormal basis of $\mathcal{U}$. 
This holds because $P_\mathcal{U}^* P_\mathcal{U} = \proj_\mathcal{U} \in \mathbb{R}^{n \times n}$ and $\boldsymbol{x} \in \mathcal{U}$. Consider
\begin{subequations}
\begin{align}
(\star) :=\sup_{\boldsymbol{x} \in \mathcal{U} \cap \sphere{n}} \left|\left\| SDF\boldsymbol{x}\right\|_{2}^2 - 1 \right|
 & = \sup_{\boldsymbol{u} \in \mathcal{U} \cap \sphere{n}} \left| \left\| S D F P_\mathcal{U}^* P_\mathcal{U} \boldsymbol{u}\right\|_{2}^2 - 1 \right| \label{eq:first:matrix:1}\\
 & = \sup_{\boldsymbol{x} \in \mathbb{R}^\ell \cap \sphere{\ell}} \left| \left\| S D F P_\mathcal{U}^* \boldsymbol{x}\right\|_{2}^2 - 1 \right|                                   \label{eq:first:matrix:2}\\
 & = \sup_{\boldsymbol{x} \in \mathbb{R}^\ell \cap \sphere{\ell}} \left| \boldsymbol{x}^* \left[ (S D F P_\mathcal{U}^*)^* (S D F P_\mathcal{U}^*) - I \right]\boldsymbol{x} \right|. \label{eq:first:matrix:3}
\end{align}
\end{subequations}
\Cref{eq:first:matrix:2} follows from a change of variables $\boldsymbol{x} = P_\mathcal{U} \boldsymbol{u}$. 
The matrix in the square brackets is Hermitian, and therefore
by \cite[Result 7.15]{axlerLinearAlgebraDone2024}, 
$\boldsymbol{x}^* [\ldots] \boldsymbol{x}$ is a real number. We can
therefore take the real part of the Hermitian matrix:
\begin{subequations}
\begin{align}
(\star)&=\sup_{\boldsymbol{x} \in \mathbb{R}^\ell \cap \sphere{\ell}} \left| \boldsymbol{x}^* \sum_{i=1}^{m} \mathcal{R}\left[P_\mathcal{U} F^* D \boldsymbol{s}_i \boldsymbol{s}_i^* D F P_\mathcal{U}^* - \frac{1}{m}I\right]\boldsymbol{x} \right|\\
& \le  \left\| \sum_{i =1}^m \left(\mathcal{R}\left[ \boldsymbol{v}_i \boldsymbol{v}_i^* \right] - \frac{1}{m}I\right)\right\|
\end{align}
\end{subequations}
for the random vectors $\boldsymbol{v}_i := P_\mathcal{U} F^* D \boldsymbol{s}_i$.
This is a sum of i.i.d. $\ell  \times  \ell$ real random matrices because $S$ 
has i.i.d. rows. We now introduce the central ingredient of this
proof: the Matrix Bernstein inequality~\cite[Theorem 5.4.1]{vershyninHighDimensionalProbabilityIntroduction2018}.
% complex counterpart: [[@troppIntroductionMatrixConcentration2015]][Theorem 6.1.1].
\begin{lemma}[Matrix Bernstein]
\label{loc:matrix_bernstein.statement}
Let $X_1, ..., X_N$ be independent, mean zero, symmetric random matrices in
$\mathbb{R}^{\ell \times \ell}$, such that $||X_i|| \leq K$ almost surely for
all $i \in [N]$. Then, for every $t\geq 0$, we have
\begin{displaymath}
\mathbb{P} \left\{ \left\lVert \sum_{i=1}^m X_i \right\rVert \geq t \right\} \leq 2\ell\exp \left( - \frac{t^2/2}{\sigma^2 + Kt/3}\right),
\end{displaymath}
where $\sigma^2=\lVert\sum_{i=1}^m \mathbb{E}X_i^2\rVert$.
\end{lemma}
The random vectors $\{ \boldsymbol{v}_i \}_{i \in [m]}$ have two key properties.
First, the real parts of their outer products are \emph{isotropic} (up to a scalar multiplication). 
Indeed, for any fixed $i \in [n]$,
\begin{align*}
\mathbb{E}[\mathcal{R}(\boldsymbol{v}_i \boldsymbol{v}_i^*)] & = \mathcal{R}\mathbb{E} [P_\mathcal{U} F^* D \boldsymbol{s}_i \boldsymbol{s}_i^* D F P_\mathcal{U}^* ]    \\
& = \mathcal{R}( P_\mathcal{U} F^* D \mathbb{E}[\boldsymbol{s}_i \boldsymbol{s}_i^*] D F P_\mathcal{U}^*)\\
& = \mathcal{R}\left( P_\mathcal{U} F^* \left[\sum_{j=1}^n \frac{1}{np_j} p_j\frac{n}{m} \boldsymbol{e}_j \boldsymbol{e}_j^\star \right] F P_\mathcal{U}^* \right) \\
&= \frac{1}{m}I.
\end{align*}
The isotropic property gives us immediately that, as required by \Cref{loc:matrix_bernstein.statement}, the matrices $\mathcal{R}\left[ \boldsymbol{v}_i \boldsymbol{v}_i^* - \frac{1}{m}I \right]$ are mean-zero.

The second property of the vectors $\boldsymbol{v}_i$ is a bound on $\sup_{\boldsymbol{x} \in \mathbb{R}^\ell  \cap B_2}|\langle \boldsymbol{x}, \boldsymbol{v}_i\rangle|$:
\begin{subequations}
\begin{align}
\sup_{\boldsymbol{x} \in \mathbb{R}^\ell  \cap B_2}|\langle \boldsymbol{x}, \boldsymbol{v}_i\rangle| & = \sup_{\boldsymbol{x} \in \mathbb{R}^\ell  \cap B_2}|\langle \boldsymbol{x},P_\mathcal{U} (F^* D \boldsymbol{s}_i)\rangle| \label{eq:coherence:mu:1}\\
& = \frac{1}{\sqrt{ m }} \max_{j \in  [n]}\frac{1}{\sqrt{ p_j}}\sup_{\boldsymbol{u} \in \mathcal{U} \cap B_2} |\left\langle \boldsymbol{u}, \boldsymbol{f}_j \right\rangle|  \label{eq:coherence:mu:2}\\
&  =  \frac{\mu(\boldsymbol{\alpha}, \boldsymbol{p})}{\sqrt{ m }}, \label{eq:coherence:mu:3}
\end{align}
\end{subequations}
where $\boldsymbol{\alpha}$ is the local coherence vector of $F$ with respect to $\mathcal{U}$.
To be concise, let $\mu := \mu(\boldsymbol{\alpha}, \boldsymbol{p})$.
We proceed to compute a value for $K$. By triangle inequality and property of the operator norm of rank one matrices, we see that
\begin{displaymath}
\left\|\mathcal{R}\left[\boldsymbol{v}_i \boldsymbol{v}_i^* - \frac{1}{m}I\right]\right\| \le  \sup_{\boldsymbol{x} \in \mathbb{R}^\ell  \cap B_2} |\langle \boldsymbol{x}, \boldsymbol{v}_i\rangle|^2 + \frac{1}{m} \le 2\frac{\mu^2}{m}.
\end{displaymath}
The last inequality holds because of the lower bound $\mu^2 \ge 1$ which we now
justify. Consider that from \Cref{loc:optimize_the_with:replacement_sampling_probabilities.statement}
we have that $\mu \ge \|\boldsymbol{\alpha}\|_2$, and
furthermore that any non-empty prior set contains a unit vector
$\hat{\boldsymbol{u}} \in \mathbb{R}^n$, and so $\|\boldsymbol{\alpha}\|_2 \geq \|F \hat{\boldsymbol{u}}\|_2 = 1$. This gives us the desired lower bound by monotonicity of $\mu$ over set containment.

We now compute $\sigma^2$.
\begin{subequations}
\begin{align}
\sigma^2 & = \left\|\sum_{i=1}^m \mathbb{E}\left[\mathcal{R}\left( \boldsymbol{v}_i \boldsymbol{v}_i^* - \frac{1}{m} I \right)^2\right] \right \|  \label{eq:var:1}\\
&= \sup_{\boldsymbol{x} \in \mathbb{R}^\ell \cap B_2} \boldsymbol{x}^* \sum_{i=1}^m \left( \mathbb{E}[\mathcal{R}(\boldsymbol{v}_i \boldsymbol{v}_i^*) \mathcal{R}(\boldsymbol{v}_i \boldsymbol{v}_i^*)]-\frac{1}{m}I \right) \boldsymbol{x}. \label{eq:var:2}\\
 &= \sup_{\boldsymbol{x} \in \mathbb{R}^\ell \cap B_2}  \sum_{i=1}^m \left( \mathbb{E}[\boldsymbol{x}^*\mathcal{R}(\boldsymbol{v}_i \boldsymbol{v}_i^*) \mathcal{R}(\boldsymbol{v}_i \boldsymbol{v}_i^*)\boldsymbol{x}]-\frac{1}{m}\right). \label{eq:var:3}
\end{align}
\end{subequations}
\Cref{eq:var:2} holds because the matrix is symmetric positive semi-definite.
We introduce the unit vector $\hat{\boldsymbol{y}}$ to be the normalization of $\boldsymbol{y} := \mathcal{R}[\boldsymbol{v}_i \boldsymbol{v}_i^*]\boldsymbol{x}$,
and
\begin{subequations}
\begin{align}
\boldsymbol{x}^*  \mathcal{R}[\boldsymbol{v}_i \boldsymbol{v}_i^*]\mathcal{R}[\boldsymbol{v}_i \boldsymbol{v}_i^*]\boldsymbol{x}  &= \boldsymbol{x}^*  \mathcal{R}[\boldsymbol{v}_i \boldsymbol{v}_i^*]\hat{\boldsymbol{y}}\hat{\boldsymbol{y}}^* \mathcal{R}[\boldsymbol{v}_i \boldsymbol{v}_i^*]\boldsymbol{x} \label{eq:mat:var:1}\\
&=  \mathcal{R}[\boldsymbol{x}^*  \boldsymbol{v}_i \boldsymbol{v}_i^*\hat{\boldsymbol{y}}]\mathcal{R}[\hat{\boldsymbol{y}}^* \boldsymbol{v}_i \boldsymbol{v}_i^*\boldsymbol{x}] \label{eq:mat:var:2}\\
&\le |\boldsymbol{x}^*  \boldsymbol{v}_i|^2 |\boldsymbol{v}_i^*\hat{\boldsymbol{y}}|^2 \label{eq:mat:var:3}\\
& \le |\boldsymbol{x}^* \boldsymbol{v}_i|^2 \frac{\mu^2}{m}. \label{eq:mat:var:4}
\end{align}
\end{subequations}
\Cref{eq:mat:var:4} holds because of \Cref{eq:coherence:mu:2}.
With this bound, we find that
\begin{subequations}
\begin{align}
\sigma^2 &\le  \sup_{\boldsymbol{x} \in \mathbb{R}^\ell \cap B_2} \sum_{i=1}^m \mathbb{E}[\boldsymbol{x}^* \boldsymbol{v}_i \boldsymbol{v}_i^* \boldsymbol{x}] \frac{\mu^2}{m} - \frac{1}{m}\sum_{i=1}^m 1 \label{eq:second:var:1}\\
& \le \sup_{\boldsymbol{x} \in \mathbb{R}^\ell \cap B_2} \boldsymbol{x}^* \left(\sum_{i=1}^m \frac{I}{m} \frac{\mu^2}{m} \right) \boldsymbol{x}  \label{eq:second:var:2}\\
& \le  \frac{\mu^2}{m}. \label{eq:second:var:3}
\end{align}
\end{subequations}

\Cref{eq:second:var:2} is obtained by dropping the second term, which is negative.

Then applying the Matrix Bernstein yields
\begin{displaymath}
\mathbb{P}\left\{ \sup_{\boldsymbol{x} \in \mathcal{U} \cap \sphere{n}} \left| \left\| SDF\boldsymbol{x}\right\|_{2}^2 - 1 \right| \ge  t \right\}
\leq 2 \ell \exp\left( -\frac{t^2 /2}{\frac{\mu^2}{m} + \frac{2\mu^2}{m} \frac{t}{3}} \right).
\end{displaymath}
We would like to get our result in terms of the $l_{2}$ norm without the square. For this purpose we make use of the ``square-root trick" that can be found in \cite[Theorem 3.1.1]{vershyninHighDimensionalProbabilityIntroduction2018}. We re-write the above as
\begin{displaymath}
\mathbb{P}\left\{ \sup_{\boldsymbol{x} \in \mathcal{U} \cap \sphere{n}} \left| \left\| SDF\boldsymbol{x}\right\|_{2}^2 - 1 \right| \ge t \right\}
\leq 2\ell \exp\left( - C \frac{m}{\mu^2} \min\left(t^2, t\right) \right).
\end{displaymath}
We make the substitution $t \to \max(\delta, \delta^2)$, which yields
\begin{displaymath}
\mathbb{P}\left \{ \sup_{\boldsymbol{x} \in \mathcal{U} \cap \sphere{n}} \left| \|SDF \boldsymbol{x}\|_2^2 - 1 \right| \geq \max(\delta, \delta^2) \right \} \leq 2\ell \exp\left( -C \frac{m\delta^2 }{\mu^2} \right).
\end{displaymath}
With the restricted inequality $\forall  a,\delta >0,|a -1| \ge \delta \implies |a^2 -1| \ge \max(\delta, \delta^2)$, we infer that
\begin{align*}
\mathbb{P}\left\{ \sup_{\boldsymbol{x} \in \mathcal{U} \cap \sphere{n}} \left| \|SDF\boldsymbol{x}\|_{2} - 1 \right|  \geq \delta \right\} &\le  \mathbb{P}\left \{ \sup_{\boldsymbol{x} \in \mathcal{U} \cap \sphere{n}} \left| \|SDF \boldsymbol{x}\|_2^2 - 1 \right| \geq \max(\delta, \delta^2) \right \}\\
&\leq 2\ell \exp\left( -C \frac{m\delta^2}{\mu^2} \right).
\end{align*}
Finally, with another substitution $\frac{cm\delta^2}{\mu^2} - \log \ell = t^2$, reformulate this bound as
\begin{displaymath}
\sup_{\boldsymbol{x} \in \mathcal{U}\cap \sphere{n}} \left|\|SDF\boldsymbol{x}\|_{2}-1 \right|  \lesssim \frac{\mu}{\sqrt{ m}}\sqrt{\log \ell} + \frac{\mu}{ \sqrt{m}} t
\end{displaymath}
with probability at least $1 - 2\exp(-t^2)$.

---
\end{proof}
The proof of \Cref{loc:rip_of_with:replacement_non:uniform_sampling_matrix_on_a_union_of_subspaces.statement} then follows from a union bound on all the subspaces containing differences in the prior set.
\begin{proof}[\hypertarget{loc:rip_of_with:replacement_non:uniform_sampling_matrix_on_a_union_of_subspaces.proof}Proof of \Cref{loc:rip_of_with:replacement_non:uniform_sampling_matrix_on_a_union_of_subspaces.statement}]

We denote $\mathcal{T}= \cup_{i=1}^M \mathcal{U}_i$ for the set of subspaces $\{ \mathcal{U}_i \}_{i \in [m]}$ in $\measfield^n$ each of dimension no more than $\ell$, and let $\mathcal{U} \subseteq \mathcal{T}$ be any one of these subspaces. 
Then the local coherence vector $\boldsymbol{\alpha}_{\mathcal{U}}$ of $F$ with respect to $\mathcal{U}$ is dominated entry-wise by the local coherences $\boldsymbol{\alpha}$ of $F$ with respect to $\mathcal{T}$, because $\mathcal{U} \subseteq \mathcal{T}$.
Then by 
\Cref{loc:deviation_of_with:replacement_matrix_on_subspace.statement},
it follows that
\begin{equation}
\label{eq:ub:t}
\sup_{\boldsymbol{x} \in \mathcal{U}\cap \sphere{n}} \left|\|SDF\boldsymbol{x}\|_{2}-1 \right|  \lesssim \frac{\mu(\boldsymbol{\alpha}, \boldsymbol{p})}{\sqrt{ m}}\sqrt{\log \ell} + \frac{\mu(\boldsymbol{\alpha}, \boldsymbol{p})}{ \sqrt{m}} t
\end{equation}
with probability at least $1-2\exp(-t^2)$. This is almost the same statement as that of \Cref{loc:deviation_of_with:replacement_matrix_on_subspace.statement} except for the fact that on the r.h.s. we have $\boldsymbol{\alpha}$ instead of $\boldsymbol{\alpha}_{\mathcal{U}}$. This is not a problem because of the monotonicity of $\mu$ in its first argument. 

Note that we have in \Cref{eq:ub:t} an identical statement that applies for each subspace $\mathcal{U}$ composing $\mathcal{T}$. We perform a union bound over all such statements in the manner described by \Cref{loc:union_bound_on_tails_with_gaussian_tail_variables.statement}, and find that 
\begin{displaymath}
\sup_{\boldsymbol{x} \in \mathcal{T} \cap \sphere{n}} \left|\|SDF\boldsymbol{x}\|_{2}-1 \right|  \lesssim \frac{\mu(\boldsymbol{\alpha}, \boldsymbol{p})}{\sqrt{ m}}\sqrt{\log \ell} + \frac{\mu(\boldsymbol{\alpha}, \boldsymbol{p})}{ \sqrt{m}} \sqrt{ \log M }+ \frac{\mu(\boldsymbol{\alpha}, \boldsymbol{p})}{ \sqrt{m}} t
\end{displaymath}
with probability at least $1-2\exp(-t^2)$.
\end{proof}
\section{Proofs by union bounds}
\label{loc:appendix.proofs_by_union_bounds}
Both \Cref{loc:optimized_with:replacement_cs_on_union_of_subspaces.statement}
and \Cref{loc:cs_with_replacement_and_with_denoising_on_unions_of_subspaces.statement}
follow from union bounds on a few lemmas.
\begin{proof}[\hypertarget{loc:optimized_with:replacement_cs_on_union_of_subspaces.proof}Proof of \Cref{loc:optimized_with:replacement_cs_on_union_of_subspaces.statement}]

Let $t_1>0$, and let
\begin{displaymath}
m \gtrsim \lVert \boldsymbol{\alpha}\rVert_{2}^2 \left( \log \ell+ \log M + t_1^2\right).
\end{displaymath}
Each of the following statements holds individually with probability at least $1- 2\exp(-t^2)$ for a variable $t>0$ defined within each of the three results.
\begin{enumerate}
\item The matrix $SDF$ has the RIP thanks to \Cref{loc:rip_of_with:replacement_non:uniform_sampling_matrix_on_a_union_of_subspaces.statement}.
\item When \emph{1.} is satisfied, the recovery error is bounded as specified by \Cref{loc:signal_recovery_with_subsampled_unitary_matrix_with_gaussian_noise_on_union_of_subspaces.statement}.
\item When \emph{1.} is satisfied, the noise sensitivity $\|\widetilde{D}\trunc(SD \boldsymbol{\alpha})\|_2$ in the recovery error bound of~\Cref{loc:signal_recovery_with_subsampled_unitary_matrix_with_gaussian_noise_on_union_of_subspaces.statement} is bounded thanks to~\Cref{loc:tail_on_the_noise_sensitivity_for_adapted_with:replacement_sampling.statement}.
\end{enumerate}

We distinguish between the variables $t$ used within each of the three statements by re-labelling them $t_{1}, t_{2}, t_{3}$ respectively. For some $\delta>0$, let $2\exp(-t_1^2) = \frac{1}{10}\delta$, $2\exp(-t_{2}^2) = \frac{1}{10}\delta$, and $t_{3} = \frac{8}{10}\delta$. Then $t_{1} = t_2 = \sqrt{ \log \frac{20}{\delta} }$. The required statement then holds with probability at least $1-\delta$ from a union bound on the three statements above for this choice of $t_1, t_2, t_3$.
\end{proof}
The following proof is similar.
\begin{proof}[\hypertarget{loc:cs_with_replacement_and_with_denoising_on_unions_of_subspaces.proof}Proof of \Cref{loc:cs_with_replacement_and_with_denoising_on_unions_of_subspaces.statement}]

Let $t_1>0$, and
\begin{displaymath}
m \gtrsim \mu^2(\boldsymbol{\alpha}, \boldsymbol{p})(\log (\ell) + \log M + t_1^2).
\end{displaymath}
Then each of the following statements holds individually with probability at least $1- 2\exp(-t^2)$ for a variable $t>0$ defined within each of the results.

\begin{enumerate}
\item The matrix $SDF$ has the RIP on $\mathcal{T}$ thanks to \Cref{loc:rip_of_with:replacement_non:uniform_sampling_matrix_on_a_union_of_subspaces.statement}.
\item When \emph{1.} is satisfied, the recovery error is bounded as specified by \Cref{loc:signal_recovery_with_subsampled_unitary_matrix_with_gaussian_noise_on_union_of_subspaces.statement}.
\end{enumerate}

We distinguish between the variables $t$ used within each of the two statements by
re-labelling them $t_{1}, t_{2}$ respectively. For some $\delta>0$, let
$2\exp(-t_1^2) = \frac{1}{2}\delta$ and $2\exp(-t_{2}^2) = \frac{1}{2}\delta$.
Then $t_{1} = t_2 = \sqrt{ \log \frac{4}{\delta} }$. The fact that the second
statement is conditional on the success of the first only lessens the true
probability of failure, and so the probability of failure is no more than
$\frac{1}{2}\delta + \frac{1}{2}\delta = \delta$. The result follows.
\end{proof}
We present below a technical lemma for general union bounds
on statements involving sub-Gaussian random variables.
\begin{proposition}[Union bound on sub-Gaussian tail bounds]
\label{loc:union_bound_on_tails_with_gaussian_tail_variables.statement}
Let $\{ S_i(t) \}_{i \in [\ell]}$ be an array of random statements such that $\forall i \in [\ell], t>0,$ $S_i(t)$ is true with probability at least $1-2\exp(-t^2)$. Then with probability at least $1-2exp(-t^2)$, the following statement holds.
\begin{displaymath}
S_i(t + \sqrt{ \log \ell }) \text{ is true }\forall  i \in  [\ell].
\end{displaymath}
\end{proposition}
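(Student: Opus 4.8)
The plan is to apply a union bound over the $\ell$ statements at the shifted deviation level $t + \sqrt{\log \ell}$, relying on the fact that the squared exponent in a sub-Gaussian tail turns this shift into a factor of $1/\ell$ that precisely absorbs the number of terms.

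First I would estimate the probability that the conjunction fails by the sum of the individual failure probabilities. Since each $S_i(t + \sqrt{\log \ell})$ holds with probability at least $1 - 2\exp(-(t+\sqrt{\log \ell})^2)$ by hypothesis, a union bound gives
\begin{displaymath}
\mathbb{P}\left( \exists\, i \in [\ell] : S_i(t + \sqrt{\log \ell}) \text{ is false} \right) \le \sum_{i=1}^{\ell} 2\exp\left( -(t + \sqrt{\log \ell})^2 \right).
\end{displaymath}

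Next I would expand the squared exponent and discard the nonnegative cross term. Since $t > 0$ and $\log \ell \ge 0$ for $\ell \ge 1$, we have $(t + \sqrt{\log \ell})^2 = t^2 + 2t\sqrt{\log \ell} + \log \ell \ge t^2 + \log \ell$, so that $\exp(-(t+\sqrt{\log \ell})^2) \le \exp(-t^2)\exp(-\log \ell) = \exp(-t^2)/\ell$. Substituting this bound into the union bound, each of the $\ell$ summands contributes at most $2\exp(-t^2)/\ell$, and the whole sum is therefore at most $2\exp(-t^2)$. Taking complements yields the claim.

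There is no real obstacle here; the only point requiring care is that the shift $\sqrt{\log \ell}$ is calibrated exactly so that the factor $\exp(-\log \ell) = 1/\ell$ cancels the $\ell$ terms produced by the union bound. A smaller shift would leave a residual factor growing in $\ell$, while a larger one would be wasteful. The degenerate case $\ell = 1$ is consistent, since then $\log \ell = 0$ and the statement reduces to the hypothesis.
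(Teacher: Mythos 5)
Your proof is correct and follows essentially the same route as the paper: a union bound at the shifted level $t+\sqrt{\log\ell}$, expanding the square and discarding the nonnegative cross term $2t\sqrt{\log\ell}$ so that $\exp(-\log\ell)=1/\ell$ absorbs the $\ell$ summands. The paper merely phrases this as a substitution $\gamma = \gamma' + \sqrt{\log\ell}$ after a generic union bound, which is the same computation.
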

\begin{proof}[\hypertarget{loc:union_bound_on_tails_with_gaussian_tail_variables.proof}Proof of \Cref{loc:union_bound_on_tails_with_gaussian_tail_variables.statement}]

Let $\gamma>0$.
\begin{align*}
\mathbb{P}(\exists i \text{  s.t. } S_i(\gamma) \text{ is false})
&=\mathbb{P}(\cup_{i \in [\ell]} \{  S_i(\gamma) \text{ is false}\})\\
&\leq  \ell 2\exp(-\gamma^2 )\\
&= 2\exp(-\gamma^2 + \log \ell)
\end{align*}
We perform the substitution $\gamma = \gamma' + \sqrt{ \log \ell }$.
\begin{align*}
\mathbb{P}\left(\exists i \text{ s.t. }S_i\left(\gamma' + \sqrt{ \log \ell }\right) \text{ is false}\right) &\leq 2 \exp(- (\gamma' + \sqrt{ \log \ell })^2 +  \log \ell )\\
&=  2 \exp(- \gamma'^2 - 2 \gamma' \sqrt{ \log \ell })\\
&\leq 2 \exp(-\gamma'^2).
\end{align*}
\end{proof}
\section{Gaussian noise corollary from deterministic noise}
\label{loc:appendix.gaussian_noise_corollary_from_deterministic_noise}
% I checked the result below for fields; it seems that their treatment might
% work by considering linear maps over $\mathbb{R}$ into $\mathbb{C}^n$.
The following is a compressed sensing result with Gaussian measurement noise,
derived as a corollary of a result with deterministic noise (\cite[Theorem 3.6]{adcockUnifiedFrameworkLearning2023}).
\begin{corollary}[Gaussian noise CS as corollary of deterministic noise]
\label{loc:noise_corollary_from_deterministic_results_on_union_of_subspaces.statement}
Consider a union of $N$ $r-$dimensional subspaces $\mathcal{Q} \subseteq \mathbb{R}^n$. Given a unitary matrix $F  \in \mathbb{C}^{n \times n}$ with local coherences $\boldsymbol{\alpha} \in \mathbb{R}_{++}^n$ with respect to the set $\mathcal{Q}-\mathcal{Q}$, take $S$ as a \hyperref[loc:sampling_matrix_with_replacement.statement]{with-replacement sampling matrix} with probability vector $\boldsymbol{p}' =  \left\{\frac{\alpha_i^2}{\|\boldsymbol{\alpha}\|_2^2}\right\}_{i \in [n]}$. Denote by $\omega:\mathbb{N} \to \mathbb{N}$ the index map corresponding to $S$, meaning that the $i^{\text{th}}$ row of $S$ is $\boldsymbol{e}_{\omega_i}$. For $\zeta > 0$, suppose that
\begin{displaymath}
m \gtrsim \|\boldsymbol{\alpha}\|_2^2 \left(\log(2r+1)+2\log(N) + \log\left(\frac{2}{\zeta}\right)\right).
\end{displaymath}
Then the following holds.
For any $\boldsymbol{x}_0  \in \field^n$ with $\varepsilon, \hat{\boldsymbol{x}}, \boldsymbol{x}^{\perp}$ as in~\Cref{loc:setup_of_signal_recovery_with_subsampled_unitary_measurements_and_gaussian_noise.for_denoising_paper}, and $\hat{\hat{\boldsymbol{x}}}:= \min(1, 1 / \lVert \hat{\boldsymbol{x}}\rVert_{2}) \hat{\boldsymbol{x}}$, we have that
\begin{displaymath}
\mathbb{E} \lVert \hat{\hat{\boldsymbol{x}}} - \boldsymbol{x}_0\rVert_2
\lesssim \frac{\sigma }{ \sqrt{m} } \|\boldsymbol{\alpha}\|_2 \sum_{i=1}^m \frac{1}{\sqrt{n} \alpha_{\omega_i}} + \|\boldsymbol{x}^\perp\|_2 + \varepsilon + \sqrt{\zeta}.
\end{displaymath}
\end{corollary}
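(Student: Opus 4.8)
The plan is to treat the fixed noise realization $\boldsymbol{\eta} = \sigma\boldsymbol{g}/\sqrt{m}$ as deterministic, invoke the deterministic-noise guarantee \cite[Theorem 3.6]{adcockUnifiedFrameworkLearning2023}, and only afterward average over the Gaussian vector $\boldsymbol{g}$. First I would rewrite the reconstruction problem of \Cref{loc:setup_of_signal_recovery_with_subsampled_unitary_measurements_and_gaussian_noise.for_denoising_paper} in the form used by Adcock, Cardenas, and Dexter: multiplying the residual by $\widetilde{D} = \Diag(S\boldsymbol{d})$ turns our measurements $\widetilde{D}\boldsymbol{b} = SDF\boldsymbol{x}_0 + \widetilde{D}\boldsymbol{\eta}$ into their preconditioned model with CS matrix $SDF$ and effective \emph{deterministic} noise $\boldsymbol{e} = \widetilde{D}\boldsymbol{\eta}$. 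Since $\mathcal{Q}$ is a union of $N$ subspaces of dimension $r$, the difference set $\mathcal{Q}-\mathcal{Q}$ lies in a union of at most $N^2$ subspaces of dimension at most $2r$; this is exactly what produces the $\log(2r+1)+2\log N$ in the sample-complexity hypothesis, and with the optimized weights $\boldsymbol{p}'$ of \Cref{loc:optimized_sampling_vector.statement} the complexity factor is $\|\boldsymbol{\alpha}\|_2^2$ by \Cref{loc:optimize_the_with:replacement_sampling_probabilities.statement}.

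Next I would apply \cite[Theorem 3.6]{adcockUnifiedFrameworkLearning2023} directly. Given the stated lower bound on $m$, its structural hypotheses hold on an event of probability at least $1-\zeta$ over $S$, and on that event the truncated estimator $\hat{\hat{\boldsymbol{x}}}$ obeys a deterministic bound of the shape $\|\hat{\hat{\boldsymbol{x}}}-\boldsymbol{x}_0\|_2 \lesssim \|\boldsymbol{x}^\perp\|_2 + (\text{noise term in }\boldsymbol{e}) + \varepsilon$; the truncation to the unit ball, $\hat{\hat{\boldsymbol{x}}} = \min(1,1/\|\hat{\boldsymbol{x}}\|_2)\hat{\boldsymbol{x}}$, is precisely the device that lets one pass to an \emph{expected}-error statement by controlling the contribution of the low-probability failure event, which is where the additive $\sqrt{\zeta}$ originates. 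The essential point is the exact form of the noise term: the deterministic (worst-case) analysis charges each sampled coordinate separately, so after translation it is, up to constants, an entry-wise weighted sum $\sum_{i=1}^m d_{\omega_i}\lvert\eta_i\rvert$ with preconditioning weights $d_{\omega_i} = (np'_{\omega_i})^{-1/2} = \|\boldsymbol{\alpha}\|_2/(\sqrt{n}\,\alpha_{\omega_i})$, rather than the $\ell_2$-type quantity $\|\widetilde{D}\trunc(SD\boldsymbol{\alpha})\|_2$ that drives the denoising theorems of the main text.

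Finally I would take the expectation over $\boldsymbol{g}$ conditionally on $S$. Because the entries of $\boldsymbol{\eta}$ are independent Gaussians with $\mathbb{E}\lvert\eta_i\rvert = \sqrt{2/\pi}\,\sigma/\sqrt{m}$, linearity gives $\mathbb{E}_{\boldsymbol{g}}\sum_{i=1}^m d_{\omega_i}\lvert\eta_i\rvert \asymp \frac{\sigma}{\sqrt{m}}\sum_{i=1}^m d_{\omega_i} = \frac{\sigma}{\sqrt{m}}\|\boldsymbol{\alpha}\|_2\sum_{i=1}^m \frac{1}{\sqrt{n}\,\alpha_{\omega_i}}$, which is exactly the advertised noise term; combining with $\|\boldsymbol{x}^\perp\|_2$, $\varepsilon$, and $\sqrt{\zeta}$ closes the bound. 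The main obstacle, and the whole reason the section exists, is bookkeeping the two measurement normalizations faithfully enough to see that \emph{no} $\ell_2$ concentration is available: since $\mathbb{E}\,d_{\omega_i}$ is an $m$-independent positive constant, the sum grows linearly in $m$ and the noise term scales like $\sqrt{m}$, so this route provably fails to denoise, in sharp contrast to \Cref{loc:optimized_with:replacement_cs_on_union_of_subspaces.statement}. I would close by noting that had the deterministic bound instead supplied an $\ell_2$ noise term, Jensen's inequality would recover a $1/\sqrt{m}$ rate, which isolates the entry-wise, worst-case character of the deterministic guarantee as the precise culprit.
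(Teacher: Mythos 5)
Your route is the same as the paper's: pass between the two measurement normalizations via $\widetilde{D}$, invoke the deterministic-noise guarantee of Adcock, Cardenas, and Dexter with the complexity factor specialized to $\lVert\boldsymbol{\alpha}\rVert_2^2$ by the optimized weights, and only then integrate over the Gaussian noise, with the truncation $\hat{\hat{\boldsymbol{x}}}$ absorbing the failure event into the $\sqrt{\zeta}$ term. The stated bound does come out of this.

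One claim needs correcting, and it matters for the moral you draw. The noise term supplied by the deterministic result is not an entry-wise weighted $\ell_1$ sum $\sum_i d_{\omega_i}\lvert\eta_i\rvert$; it is the $\ell_2$ quantity $\lVert\widetilde{D}\boldsymbol{\eta}\rVert_2$. The paper works with $\mathbb{E}\lVert\widetilde{D}\boldsymbol{\eta}\rVert_2^2 = \frac{\sigma^2}{m}\sum_{i=1}^m (np'_{\omega_i})^{-1}$ and only produces the $\ell_1$-type sum at the very last step, by distributing the square root over the sum -- a deliberate loosening to reach the displayed form. Since $\lVert v\rVert_2 \le \lVert v\rVert_1$, your version still yields a valid upper bound, so the corollary survives. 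But your closing diagnosis is then wrong: an $\ell_2$ noise term does \emph{not} recover a $1/\sqrt{m}$ rate. Jensen gives $\mathbb{E}\lVert\widetilde{D}\boldsymbol{\eta}\rVert_2 \le \frac{\sigma}{\sqrt{m}}\bigl(\sum_{i=1}^m d_{\omega_i}^2\bigr)^{1/2}$, and since $\mathbb{E}_S\, d_{\omega_i}^2 = \sum_j p'_j (np'_j)^{-1} = 1$, the sum is of order $m$ and the noise term is of order $\sigma$, independent of $m$. The actual reason this route fails to denoise is that the deterministic analysis must budget for worst-case noise fully aligned with the low-dimensional image subspace, bounding $\sup_{\boldsymbol{y}}\mathcal{R}\langle SDF\boldsymbol{y}, \widetilde{D}\boldsymbol{\eta}\rangle$ by a multiple of $\lVert\widetilde{D}\boldsymbol{\eta}\rVert_2$, whereas the Gaussian analysis of \Cref{loc:gaussian_noise_error_term_for_unevenly_subsampled_unitary_measurements_on_a_union_of_subspaces.statement} exploits that only the projection of the noise onto that $\ell$-dimensional subspace contributes, replacing the $\sqrt{m}$-sized factor $\lVert\widetilde{D}\rVert_F$ by the $m$-independent factor $\lVert\widetilde{D}\trunc(SD\boldsymbol{\alpha})\rVert_2(\sqrt{\ell}+\sqrt{\log M}+t)$.
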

The proof of \Cref{loc:noise_corollary_from_deterministic_results_on_union_of_subspaces.statement} consists of translating~\cite[Theorem 3.6]{adcockUnifiedFrameworkLearning2023} into our notation, converting between models of signal acquisition, and randomizing the deterministic noise.
We provide~\hyperlink{loc:noise_corollary_from_deterministic_results_on_union_of_subspaces.proof}{the proof} below. 

Recall that in \Cref{loc:optimized_with:replacement_cs_on_union_of_subspaces.statement} we found the following error bound on the signal recovery error.
\begin{align*}
\lVert \hat{\boldsymbol{x}}- \boldsymbol{x}_0\rVert_2 \leq 9\frac{ \sigma}{\sqrt{ m }} \lVert \boldsymbol{\alpha}\rVert_{2} \sqrt{\min\left(\frac{5}{4\delta}, \frac{1}{n \min(\boldsymbol{\alpha})^2}\right)} \left( \sqrt{ 2 r+1 } + \sqrt{2 \log N} + \sqrt{ \log \frac{20}{\delta}}\right)&\\
+\lVert \boldsymbol{x}^\perp\rVert + 6\lVert SDF\boldsymbol{x}^\perp\rVert_{2}  + \frac{3}{2}\sqrt{ \varepsilon }.&
\end{align*}
We see that our denoising noise error
term exhibits a denoising behavior in its factor of $\frac{1}{ \sqrt{m} }$,
whereas this dependence is cancelled by the sum over $m$ terms in
\Cref{loc:noise_corollary_from_deterministic_results_on_union_of_subspaces.statement}.
%We also recall that, as discussed in [[Optimized with-replacement CS on union of subspaces#remark]], the additional approximation error term $\|SDF\boldsymbol{x}^\perp\|_2$ arises from the uniform nature of our signal recovery bound. Indeed, it is possible to derive a corollary of [[Optimized with-replacement CS on union of subspaces]] which is non-uniform in $\boldsymbol{x}_0$ and for which the approximation error term matches that of [[Noise corollary from deterministic results on union of subspaces#Statement]].
% 
\begin{proof}[\hypertarget{loc:noise_corollary_from_deterministic_results_on_union_of_subspaces.proof}Proof of \Cref{loc:noise_corollary_from_deterministic_results_on_union_of_subspaces.statement}]

The proof mainly follows from \cite[Theorem 3.6]{adcockUnifiedFrameworkLearning2023},
specifically assumption c). 
From \cite[Equation 5.6]{adcockUnifiedFrameworkLearning2023} we find that
the ``variation" in this theorem is the square of our complexity
function (\Cref{loc:with:replacement_complexity.statement}), which becomes
$\|\boldsymbol{\alpha}\|_2^2$ for the optimized probability vector
$\boldsymbol{p}'$.
From this we find the specified sample complexity.

The correct measurement setup is in \cite[Equation 1.2]{adcockUnifiedFrameworkLearning2023},
\cite[Example 2.4]{adcockUnifiedFrameworkLearning2023}, and \cite[Example 2.5]{adcockUnifiedFrameworkLearning2023},
and see also \cite[Example 2.4]{adcockUnifiedFrameworkLearning2023} for the measurement setup and \cite[Definition 3.1]{adcockUnifiedFrameworkLearning2023} for the definition of the approximate minimization. 

With $\boldsymbol{d} = \{\frac{1}{\sqrt{n p'_i}}\}_{i  \in [n]}$, let $D = \Diag(\boldsymbol{d})$ and $\widetilde{D} = \Diag(S \boldsymbol{d})$. Their measurement model is $\bar{\boldsymbol{b}} = SDF \boldsymbol{x}_0 + \boldsymbol{e}$ for deterministic noise $\boldsymbol{e} \in \mathbb{C}^m$. To convert their result to our measurement model, we multiply both sides by $\widetilde{D}^{-1}$ and substitute in the variables $\boldsymbol{b}= \widetilde{D}^{-1} \bar{\boldsymbol{b}}$ and $\boldsymbol{\eta}= \widetilde{D}^{-1} \boldsymbol{e}$.

We apply their result to the Gaussian noise setting by letting $\boldsymbol{\eta} \overset{\text{iid}}{\sim} \mathcal{N}\left( 0, \frac{\sigma^2}{m}\right)$. Then we find that the noise sensitivity term in~\cite[Corollary 5.3]{adcockUnifiedFrameworkLearning2023} is 
\begin{displaymath}
\mathbb{E} \lVert \widetilde{D} \boldsymbol{\eta}\rVert_2^2= \frac{\sigma^2}{m}\lVert \widetilde{D}\rVert_F^2= \frac{\sigma^2}{m}\sum_{i=1}^m \frac{1}{n p'_{\omega_i}}.
\end{displaymath}
Finally, we take the square root of both sides, and the result follows by Jensen's
inequality on the l.h.s. and distributing the square root (which maintains the inequality) on the r.h.s.
\end{proof}
\end{document}